%% file: main.tex
\documentclass{article}

\usepackage{preprint,times}

\iclrfinalcopy % Uncomment for camera-ready version, but NOT for submission

\usepackage[utf8]{inputenc} % allow utf-8 input
\usepackage[T1]{fontenc}    % use 8-bit T1 fonts
\usepackage{paper_style}
\usepackage{tabs/prambles}
\input{notation_macros}

\title{ENCORE: Exact Non-equilibrium COntrol with Replica Exchange for Diffusion Generation}

\author{
Jiahao Yu\textsuperscript{\textnormal{1}}\thanks{Work completed at the University of Cambridge.}\textnormal{,}\ \ 
Saifuddin Syed\textsuperscript{\textnormal{2}}\textnormal{,}\ 
Jos\'{e} Miguel Hern\'{a}ndez-Lobato\textsuperscript{\textnormal{3}}\textnormal{,}\ 
Jiajun He\textsuperscript{\textnormal{3}}\thanks{Correspondence to <\texttt{jh2383@cam.ac.uk}>.}
\\
\textsuperscript{\textnormal{1}}University of Oxford,
\textsuperscript{\textnormal{2}}University of British Columbia,
\textsuperscript{\textnormal{3}}University of Cambridge
}

\begin{document}

\maketitle

\begin{abstract}
Inference-time control steers a pretrained generative model towards a target distribution without retraining. We study tilted targets $\pi_0\propto G_0\,p_0$, where $p_0$ is the sampler output distribution and $G_0$ is an evaluable reweighting function. Existing approaches rely on sequential annealing with sequential Monte Carlo (SMC) or parallel annealing with replica exchange (RE). Sequential control is exact but needs large particle populations, whereas no exact parallel control method exists: existing RE corrections approximate an intractable time reversal and are biased. We propose \textit{\textbf{E}xact \textbf{N}on-equilibrium \textbf{CO}ntrol with \textbf{R}eplica \textbf{E}xchange} (ENCORE), the first exact parallel control method. Each replica stores its generation trajectory, so the upward move is a truncation and the intractable time reversal is never simulated. We prove target invariance and show that the resulting dynamics are those of non-equilibrium replica exchange with the exact time reversal as forward proposal. Under regularity conditions, our diffusion analysis shows that both sequential and parallel control become unstable under refinement of the time discretisation without guidance, whereas guided proposals remain stable and yield diagnostics for tuning the schedule and the computational budget. Across synthetic targets, Boltzmann sampling of biomolecules, and image generation, ENCORE achieves competitive accuracy and diversity, remains robust to sampler perturbations, and applies to distilled samplers where existing RE corrections are unavailable.
\end{abstract}

% ==================================================================================
\input{main/01_introduction}

\input{main/02_background}
\input{main/03_methods}
\input{main/05_experiments}
\input{main/06_conclusion}

% ==================================================================================
% \clearpage

\subsection*{AI use statement}

% (This section is \textbf{required} and does not count toward the page limit.)

In this work, we used generative AI tools for assist in the writing of proofs, design or provide feedback on research methodology or experiments, implement methods, and interpret results.
We have not used generative AI tools for help develop theoretical models or conceptual frameworks, formulate mathematical claims, provide critical ingredients for proving mathematical claims, propose or refine hypotheses, and clean and reformat dataset, and generate synthetic data sets, assist with translation, and support qualitative and thematic data analysis are not applicable to this work.
Additionally, we used generative AI tools for sourcing/searching for information.
We have reviewed all AI-assisted work. We checked all LLM-generated proofs, verified and tested LLM-generated code for correctness.
We take responsibility for the final content of this work,
including text, claims or artifacts produced with the aid of generative AI.

\subsection*{Reproducibility Statement}
We provide all theoretical results, assumptions, and proofs in the appendix or refer to established results where appropriate.
We also report the experimental settings and hyperparameters in \cref{app:experiments}.
To support reproducibility, we plan to release the code upon acceptance.

% See the ICLR 2027 AI Policy for Authors for more details. This statement should
% not be more than 1 page.

\subsubsection*{Acknowledgments}
% Use unnumbered third level headings for the acknowledgments. All
% acknowledgments, including those to funding agencies, go at the end of the paper.
SS acknowledges the support of a CANSSI CRT grant and an NSERC Discovery Grant. JH acknowledges support from the University of Cambridge Harding Distinguished Postgraduate Scholars Programme. JMHL acknowledges funding from the AI Hub in Generative Models under grant EP/Y028805/1.

Part of this work was performed using resources provided by the Cambridge Service for Data Driven Discovery (CSD3) operated by the University of Cambridge Research Computing Service (\url{www.csd3.cam.ac.uk}), provided by Dell EMC and Intel using Tier-2 funding from the Engineering and Physical Sciences Research Council (capital grant EP/T022159/1), and DiRAC funding from the Science and Technology Facilities Council (\url{www.dirac.ac.uk}).

\bibliographystyle{iclr2027_conference}
\bibliography{references}

% ==================================================================================
\clearpage
\appendix

\crefalias{section}{appendix}
\crefalias{subsection}{appendix}
\crefalias{subsubsection}{appendix}

\input{appendices/theory}
\input{appendices/experiments}

\end{document}

%% file: notation_macros.tex
\ifdefined\ENCORENotationLoaded
  \expandafter 
\fi
\newcommand{\ENCORENotationLoaded}{}

\DeclarePairedDelimiterX{\inner}[2]{\langle}{\rangle}{#1,#2}

\makeatletter
\newcommand{\dirharp@gap}{0ex}
\newcommand{\dirharp@scale}{0.8}
\DeclareRobustCommand{\dirharp}[2]{\mathpalette{\dirharp@{#1}}{#2}}
\newcommand{\dirharp@}[3]{%
  \sbox\z@{$\m@th#2#3$}%
  \ooalign{%
    \hfil$\m@th#2#3$\hfil\cr
    \hfil\raisebox{\dimexpr\ht\z@+\dirharp@gap\relax}{%
      \scalebox{\dirharp@scale}{$\m@th\scriptstyle#1$}}\hfil\cr}}
\makeatother
\newcommand{\Xsp}{\mathbb{X}}                 % state space
\newcommand{\Traj}[1]{\mathbb{X}_{#1}}        % partial-trajectory space to time t_{#1}
\newcommand{\bx}{\mathbf{x}}                  % partial trajectory
\newcommand{\bX}{\mathbf{X}}                  % full replica ladder
\newcommand{\by}{\mathbf{y}}                  % lower replica in an exchange
\newcommand{\bp}{\mathbf{p}}                  % sampler law on trajectories
\newcommand{\bpi}{\boldsymbol{\pi}}           % level law on trajectories
\newcommand{\prodpi}{\bpi_N\otimes\cdots\otimes\bpi_0}
\newcommand{\Lf}{Q'}                         % forward proposal Q'_{n|n-1}
\newcommand{\Kb}{P}                          % sampler kernel P_{n-1|n}
\newcommand{\Kf}{P}                          % exact time reversal P_{n|n-1} (same letter; subscript order gives direction)
\newcommand{\Kfh}{\widehat{P}}               % model's noising kernel
\newcommand{\Lb}{Q}                          % backward proposal Q_{n-1|n}
\newcommand{\gen}{\mathcal{A}}                % space-time generator
\newcommand{\E}{\mathbb{E}}
\newcommand{\Var}{\operatorname{Var}}
\newcommand{\TV}{\operatorname{TV}}

\newcommand{\erf}{\operatorname{erf}}
\newcommand{\erfc}{\operatorname{erfc}}
\newcommand{\dd}{\mathrm{d}}

%% file: main/01_introduction.tex
\section{Introduction}

Diffusion models \citep{ho_denoising_2020,song_scorebased_2021} have achieved strong performance in image \citep{dhariwal_diffusion_2021}, video \citep{ho_video_2022}, and molecular generation \citep{hoogeboom_equivariant_2022}.
% They define a forward process that progressively corrupts data with noise and generate samples by simulating learned reverse-time dynamics from noise to data.
However, downstream tasks often require samples to align with preferences, observations, or constraints.
\textit{Inference-time control} steers generation without retraining the model.
Many such tasks, including reward tilting \citep{uehara_rewardguided_2025} and posterior sampling \citep{wu_practical_2023,chung_diffusion_2023}, can be formulated as endpoint reweighting,
\begin{equation}
    \pi_0(\dd x_0)\propto G_0(x_0)\,p_0(\dd x_0),
    \label{eq:intro-target}
\end{equation}
where $p_0$ is the sampler's output distribution and $G_0$ is a non-negative, evaluable reweighting function.
For instance, it steers image generation towards human preferences \citep{xu_imagereward_2023,singhal_general_2025}, while in biomolecular applications it supports sampling at different temperatures \citep{skreta_feynmankac_2025,he_crepe_2026} and exploring rare conformations for free-energy estimation \citep{xie_enhanced_2026}.

Guidance \citep{dhariwal_diffusion_2021,ho_classifierfree_2022} steers generation by modifying the reverse-time score or drift.
The exact correction is the Doob transform \citep{delmoral_feynmankac_2004}, but it depends on an intractable expectation over the remaining generation.
In practice, guidance methods rely on approximations and therefore need not preserve the intended endpoint distribution \citep{bradley_classifierfree_2024,dandapanthula_are_2026}.

Monte Carlo methods provide a complementary approach by targeting $\pi_0$ directly.
Sequential Monte Carlo (SMC). \citep{delmoral_feynmankac_2004,delmoral_sequential_2006} propagates, reweights, and resamples a particle population through a sequence of intermediate distributions (sequential annealing)
Nevertheless, SMC can suffer from weight and ancestral degeneracy, particularly with limited particle populations.
% More gradual annealing can help control weight degeneracy \citep{syed_optimized_2026}, while larger populations can help mitigate the loss of ancestral diversity caused by resampling \citep{koskela_asymptotic_2020}.

Replica exchange (RE), or parallel tempering (PT) \citep{swendsen_replica_1986,geyer_markov_1991,hukushima_exchange_1996}, instead maintains replicas across a ladder of intermediate distributions and exchanges neighbouring replicas through Metropolis--Hastings moves (parallel annealing).
Whereas SMC propagates particles through the ladder once, RE evolves a persistent set of replicas over successive MCMC iterations \citep{syed_optimized_2026}.
This gives an \textit{online} advantage over SMC: sampling continues without restarting.
% Under compatible target and proposal constructions, both methods use the same incremental weights: SMC reweights particles with them, while RE uses their ratios in the exchange correction.

However, continued sampling recovers the intended target only if the chain preserves it.
TDS \citep{wu_practical_2023} and FKD \citep{singhal_general_2025} compute exact SMC weights for their chosen potentials without evaluating sampler marginals.
In contrast, CREPE \citep{he_crepe_2026} approximates intermediate marginal-density ratios using denoising and noising transition densities.
This substitution relies on an exact time-reversal relation that may be violated by model or discretisation error.
Any resulting bias persists with longer runs. 
CREPE also requires a tractable paired noising kernel, which may be unavailable for, e.g., distilled few-step samplers. No exact parallel control method exists.

We address these limitations with \emph{\textbf{E}xact \textbf{N}on-equilibrium \textbf{CO}ntrol with \textbf{R}eplica \textbf{E}xchange} (ENCORE).
The key idea is to lift each replica from its current state to a partial generation trajectory.
An upward exchange then truncates the stored trajectory rather than simulating a noising transition, allowing common sampler-path terms to cancel exactly in the Metropolis--Hastings ratio.
The resulting replica movement across levels and target-level chains are illustrated in \cref{fig:replica-trajectory}. 
See \cref{fig:gmm-endpoints-fig1} for an illustration of how different inference-time control methods affect the endpoint distribution.

We show that ENCORE preserves the endpoint target without evaluating intermediate marginals or requiring a paired noising kernel.
We show that ENCORE has the rejection rate of non-equilibrium replica exchange with the exact time reversal as forward proposal, which vanishes precisely under the Doob relations.
For diffusion samplers, we show under regularity conditions that refining the time discretisation can cause accumulated rejection to diverge without guidance, whereas guided proposals admit a finite \emph{global communication barrier} $\Lambda$, which measures the deviation of the guidance from the Doob transform and sets the number of levels needed.
Experiments on synthetic reward tilting, molecular sampling, and few-step image generation demonstrate competitive accuracy and diversity, robustness to sampler perturbations, and applicability where existing RE corrections are unavailable.

\begin{figure}[t]
\centering
\includegraphics[width=\textwidth]{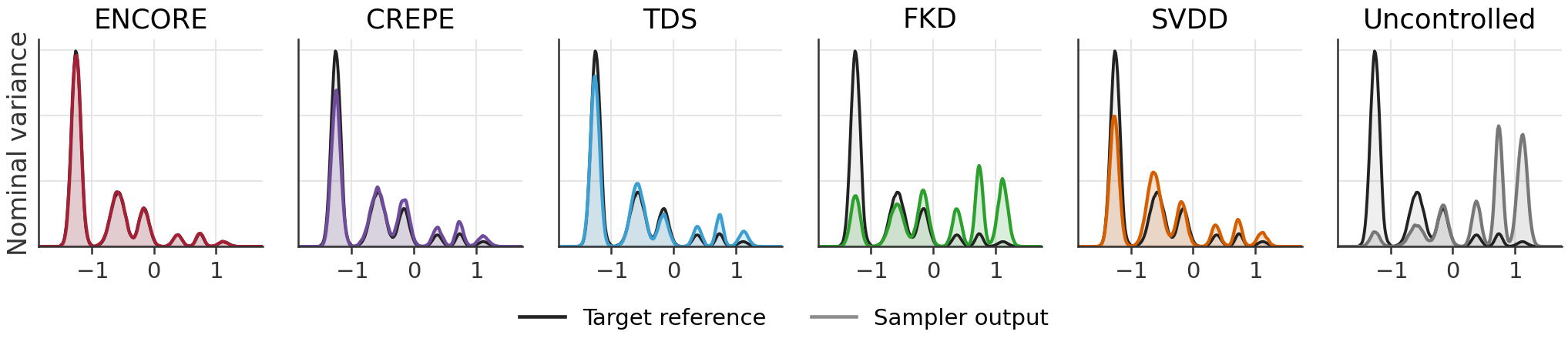}
\caption{One-dimensional output distributions at nominal transition variance with 10,000 samples.}
\label{fig:gmm-endpoints-fig1}
\end{figure}

%% file: main/02_background.tex
\section{Background}
\label{sec:background}

\subsection{Diffusion Samplers as Markov Chains}
\label{sec:bg-diffusion}
 
We consider samplers defined by a tractable reference distribution $\eta$ on a state space $\Xsp$ and a sequence of Markov kernels $P_{n-1|n}$ for $n=N,\ldots,1$.
The sampler draws $x_N\sim\eta$ and then $x_{n-1}\sim P_{n-1|n}(\dd x_{n-1}\mid x_n)$ for $n=N,\ldots,1$, inducing the marginals $p_n$ of $x_n$,
\begin{equation}
p_N=\eta,\qquad p_{n-1}=p_nP_{n-1|n},
\end{equation}
with $p_0$ the output distribution of the sampler.\footnote{For a law $p$, a kernel $K$ and a function $f$, $pK(\dd y):=\int p(\dd x)K(\dd y\mid x)$, $Kf(x):=\int K(\dd y\mid x)f(y)$, and $p\otimes K(\dd x,\dd y):=p(\dd x)K(\dd y\mid x)$ is the joint law of a draw from $p$ propagated by $K$.}
Diffusion models are the motivating example \citep{ho_denoising_2020,song_scorebased_2021}: $P_{n-1|n}$ is one or more steps of a learned denoiser, and $p_0$ approximates the data distribution.
We assume only that $P_{n-1|n}$ can be simulated and that its transition density can be evaluated.
In particular, the marginals $p_n$ for $n<N$ are intractable, since they require integrating over $P_{n-1|n}$, and the reverse kernels $P_{n|n-1}$, for which $p_{n-1}P_{n|n-1}=p_n$, are unavailable.

\begin{figure}[t]
\centering
\includegraphics[width=\linewidth]{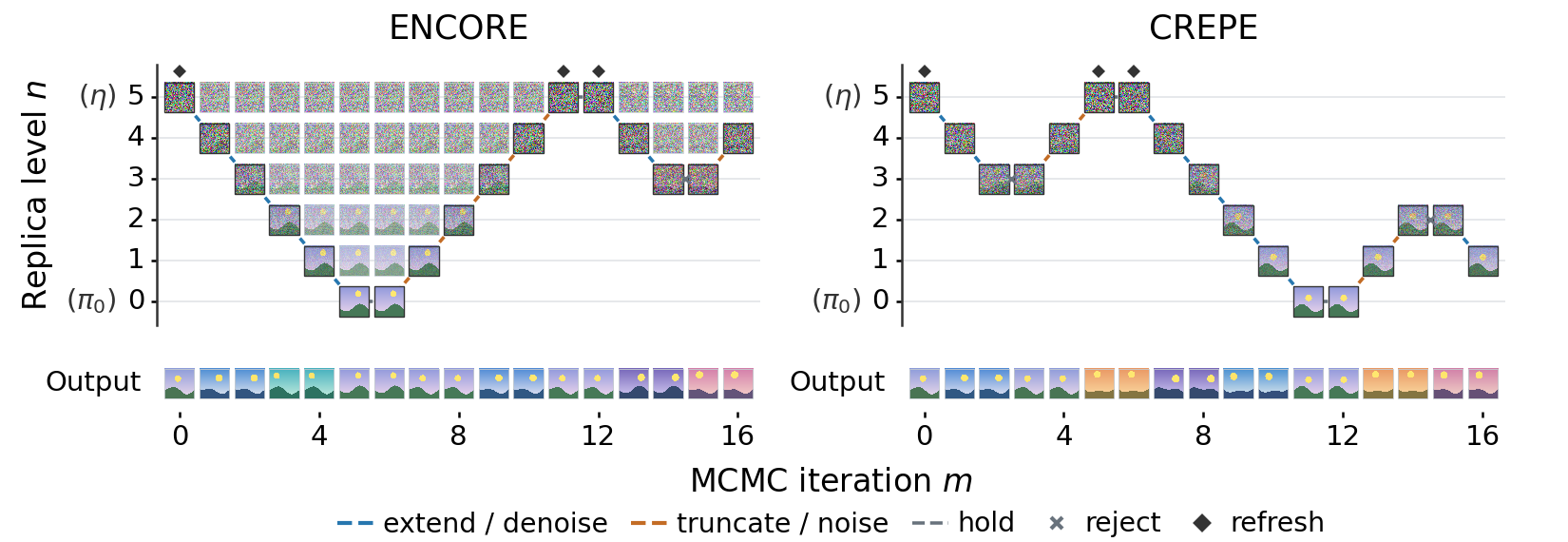}
\caption{Replica movement (top) and output chains (bottom) for ENCORE and CREPE.}
\label{fig:replica-trajectory}
\label{fig:replica-trajectory-encore}
\label{fig:replica-trajectory-crepe}
\end{figure}

\subsection{Inference-Time Control}
\label{sec:bg-control}

Given an evaluable, non-negative function $G_0:\Xsp\to[0,\infty)$ inference-time control asks for samples from the tilted target without retraining the model,
\begin{equation}
    \pi_0(\dd x_0)\propto G_0(x_0)\,p_0(\dd x_0).
    \label{eq:endpoint-target}
\end{equation}
For example, reward tilting uses $G_0(x_0)=\exp\{\kappa r(x_0)\}$ for a reward $r$ and strength $\kappa\ge0$ \citep{singhal_general_2025}, while posterior sampling uses the likelihood $G_0(x_0)=p(y\mid x_0)$ of an observation $y$, with $p_0$ as the prior \citep{wu_practical_2023,chung_diffusion_2023}.

The natural approach keeps the reference $\eta$, replaces each sampler kernel with a proposal $\Lb_{n-1|n}(\dd x_{n-1}\mid x_n)$ that steers the chain towards the target, and returns $x_0$ as an approximate sample from $\pi_0$.
General proposals introduce bias, and an exact construction follows from the Doob $h$-transform of the sampler by $G_0$ \citep{delmoral_feynmankac_2004}.
Define the Doob potentials recursively and the corresponding proposals by
\begin{equation}
    G^*_n:=\Kb_{n-1|n}G^*_{n-1},
    \qquad
    \Lb^*_{n-1|n}(\dd x_{n-1}\mid x_n)
    :=
    \frac{G^*_{n-1}(x_{n-1})}{G^*_n(x_n)}
    \Kb_{n-1|n}(\dd x_{n-1}\mid x_n),
    \label{eq:doob}
\end{equation}
with $G^*_0:=G_0$, so that $G^*_n(x_n)=\E[G_0(X_0)\mid X_n=x_n]$ under the sampler law.
Running $\Lb^*$ from the tilted reference $\eta^*\propto G_N^*\,\eta$ yields $x_0\sim\pi_0$.\footnote{In diffusion models, $x_N$ is typically nearly independent of $x_0$, so $G_N^*$ is approximately constant and $\eta^*\approx\eta$.}
The potentials and the twisted kernels are intractable, and the tractable approximations used in practice need not preserve $\pi_0$ (\cref{sec:related}). The annealed Monte Carlo methods below correct them.

\subsection{Annealed Monte Carlo}
\label{sec:bg-mc}
 
Consider a ladder $\pi_N,\ldots,\pi_0$ on $\Xsp$ from a tractable reference $\pi_N$ to a target $\pi_0$. Annealed Monte Carlo generates, for every level $n$, particles $x^m_n$, $m=1,\ldots,M$, whose empirical law approximates $\pi_n$. The two constructions below differ in the order in which the array $(x^m_n)$ is filled. Sequential annealing propagates $M$ particles level by level, so $m$ indexes particles, and parallel annealing evolves one replica per level over $M$ iterations, so $m$ indexes iterations. Both move a particle from level $n$ to $n-1$ with a backward proposal $\Lb_{n-1|n}(\dd x_{n-1}\mid x_n)$, describe the reverse move with a forward proposal $\Lf_{n|n-1}(\dd x_n\mid x_{n-1})$, and correct it with the incremental weight, the Radon--Nikodym derivative of the two joint laws of $(x_{n-1},x_n)$ the proposals induce,
\begin{equation}
    w_{n-1,n}
    :=\frac{\dd(\pi_{n-1}\otimes\Lf_{n|n-1})}{\dd(\pi_n\otimes\Lb_{n-1|n})}
    \ \propto\ \frac{\pi_{n-1}(x_{n-1})\,\Lf_{n|n-1}(x_n\mid x_{n-1})}{\pi_n(x_n)\,\Lb_{n-1|n}(x_{n-1}\mid x_n)}.
    \label{eq:generic-weight}
\end{equation}
Both produce consistent estimators as $M\to\infty$ and comparable accuracy for comparable $N$ and $M$, so they are \emph{computationally dual} \citep{syed_optimized_2026}. \Cref{app:algorithms} gives the details.
 
\textbf{Parallel annealing} evolves $N$ replicas, one per level, in parallel over $M$ iterations, in $O(Nd)$ memory, and records the level-$0$ replica at every iteration. The general framework is non-equilibrium replica exchange \citep{ballard_replica_2009}, also known as accelerated parallel tempering \citep{zhang_accelerated_2026}, which targets $\pi_N\otimes\cdots\otimes\pi_0$ by MCMC and, in place of the state swaps of parallel tempering \citep{swendsen_replica_1986,geyer_markov_1991,hukushima_exchange_1996}, proposes transitions between levels with non-reversible communication \citep{syed_nonreversible_2022}. Here $x^m_n$ is the state at level $n$ after iteration $m$. At iteration $m$ the reference is redrawn, $x^m_N\sim\eta$, and for every $n\equiv m\pmod 2$ the pair $(x^{m-1}_{n-1},x^{m-1}_n)$ proposes $y^m_{n-1}\sim\Lb_{n-1|n}(\cdot\mid x^{m-1}_n)$, $y^m_n\sim\Lf_{n|n-1}(\cdot\mid x^{m-1}_{n-1})$ and sets $(x^m_{n-1},x^m_n)$ to $(y^m_{n-1},y^m_n)$ if accepted and to $(x^{m-1}_{n-1},x^{m-1}_n)$ if rejected, with acceptance probability $\alpha^m_{n-1,n}:=\alpha_{n-1,n}(x^{m-1}_{n-1},x^{m-1}_n;y^m_{n-1},y^m_n)$ for
\begin{equation}
    \alpha_{n-1,n}(x_{n-1},x_n;y_{n-1},y_n)=\min\bigg\{1,\ \frac{w_{n-1,n}(y_{n-1},x_n)}{w_{n-1,n}(x_{n-1},y_n)}\bigg\}.
    \label{eq:apt}
\end{equation}
The remaining levels keep $x^m_n=x^{m-1}_n$. Optionally, a $\pi_n$-invariant MCMC move is then applied to each $x^m_n$, in parallel over $n$.
 
Particle diversity is measured by the restart rate $\tau$, the rate at which reference draws reach the target \citep{katzgraber_feedbackoptimized_2006}. Under the efficient local exploration assumption of \citet{syed_nonreversible_2022}, that the log-weights of consecutive exchange attempts at a level are independent,
\begin{equation}
    \tau=\bigg(2+2\sum_{n=1}^{N}\frac{r_{n-1,n}}{1-r_{n-1,n}}\bigg)^{-1}\ \le\ \frac12,
    \label{eq:roundtrip}
\end{equation}
where $r_{n-1,n}:=1-\E[\alpha_{n-1,n}]$ is the stationary rejection rate of the exchange at $(n-1,n)$. By \citet[Thm. 1]{zhang_accelerated_2026},
\begin{equation}
    r_{n-1,n}=r\big(\pi_{n-1}\otimes\Lf_{n|n-1},\ \pi_n\otimes\Lb_{n-1|n}\big),\qquad r(p,q):=\TV(p\otimes q,\,q\otimes p).
    \label{eq:apt-rejection}
\end{equation}
When the $\pi_n$ discretise a continuum of annealing distributions and $N\to\infty$ with the rejection rates vanishing, $\sum_nr_{n-1,n}\to\Lambda$, the global communication barrier, and $\tau\to(2+2\Lambda)^{-1}$. An equi-rejection schedule attains stable communication with $N\approx2\Lambda$ levels, beyond which additional levels do not increase $\tau$ and $M$ should be as large as affordable \citep{syed_nonreversible_2022}.
 
\textbf{Sequential annealing} evolves $M$ particles in parallel over the $N$ levels, in $O(Md)$ memory, and terminates with a weighted batch. The general framework is sequential Monte Carlo samplers \citep{delmoral_sequential_2006}. Particle $m$ at level $n$ carries a weight $w^m_n$, the product of the incremental weights along its path. Initialise $x^m_N\sim\pi_N$ with $w^m_N=1$. At each level move each particle down, $x^m_{n-1}\sim\Lb_{n-1|n}(\dd x_{n-1}\mid x^m_n)$, update its weight, $w^m_{n-1}=w^m_n\,w_{n-1,n}(x^m_{n-1},x^m_n)$, and resample the population when the weights degenerate, optionally with a $\pi_{n-1}$-invariant mutation, so that at level $0$ the weighted population targets $\pi_0$. Particle diversity is measured by the effective sample size of the weights, and the analogous barrier governs the number of levels needed for stability \citep{syed_optimized_2026}. SMC never simulates $\Lf$ and only evaluates \cref{eq:generic-weight}. Sequential annealing is preferable when wall-clock time is the binding constraint and parallel annealing when memory is.

%% file: main/03_methods.tex
% Requires the macros in notation_macros.tex.
\providecommand{\trunc}{\mathcal T}

\section{Methods}
\label{sec:method}

\subsection{Sequential versus Parallel Control}
\label{sec:exact}

\begin{table}[t]
\centering
\caption{Monte Carlo corrections for inference-time control with $M$ samples and $N$ levels. $^*$RNE is exact when taking the same construction as TDS \citep[App.~C.5]{he_rne_2026}.}
\label{tab:landscape}
\footnotesize
\setlength{\tabcolsep}{6pt}
\begin{tabular}{@{}lcccccc@{}}
\toprule
Method & Annealing & Exact & Parallel over & Moves/iter. & \# Iter. & Memory \\
\midrule
FKD \citep{singhal_general_2025} & sequential & yes & $M$ particles & $M$ & $N$ & $O(Md)$ \\
TDS \citep{wu_practical_2023} & sequential & yes & $M$ particles & $M$ & $N$ & $O(Md)$ \\
RNE \citep{he_rne_2026} & sequential & no / yes$^*$ & $M$ particles & $M$ & $N$ & $O(Md)$ \\
\midrule
CREPE \citep{he_crepe_2026} & parallel & no & $N$ replicas & $N$ & $M$ & $O(Nd)$ \\
\textbf{ENCORE} (ours) & parallel & yes & $N$ replicas & $N/2$ & $M$ & $O(N^2d)$ \\
\bottomrule
\end{tabular}
\end{table}

Our goal is consistent estimation under the target $\pi_0$, given the denoising kernels $\Kb_{n-1|n}$ and potentials $G_0,\ldots,G_N$ with $G_N\equiv1$, where $G_1,\ldots,G_{N-1}$ are supplied by a guidance method. We run annealed Monte Carlo on the ladder $\pi_n(\dd x_n)\propto G_n(x_n)\,p_n(\dd x_n)$, $n=N,\ldots,0$, which interpolates from the reference $\pi_N=\eta$ to the target $\pi_0$. Both samplers of \cref{sec:bg-mc} require backward and forward kernels $\Lb_{n-1|n}$, $\Lf_{n|n-1}$ for which the incremental weight \cref{eq:generic-weight} is evaluable. The ratio $\dd\pi_{n-1}/\dd\pi_n$ contains the marginal ratio $\dd p_{n-1}/\dd p_n$, which is unavailable. We write it through the time reversal of $P_{n-1|n}$, the kernel $\Kf_{n|n-1}$ defined by $p_{n-1}\otimes\Kf_{n|n-1}=p_n\otimes\Kb_{n-1|n}$, formally
\begin{equation}
    \Kf_{n|n-1}(\dd x_n\mid x_{n-1}):=\frac{p_n(\dd x_n)}{p_{n-1}(\dd x_{n-1})}\,\Kb_{n-1|n}(\dd x_{n-1}\mid x_n).
    \label{eq:time-reversal}
\end{equation}
The weight then splits into a backward and a forward factor, $w_{n-1,n}=w_{n-1|n}\,w_{n|n-1}$ with
\begin{equation}
    w_{n-1|n}:=\frac{G_{n-1}(x_{n-1})}{G_n(x_n)}\,\frac{\Kb_{n-1|n}(x_{n-1}\mid x_n)}{\Lb_{n-1|n}(x_{n-1}\mid x_n)},
    \qquad
    w_{n|n-1}:=\frac{\Lf_{n|n-1}(x_n\mid x_{n-1})}{\Kf_{n|n-1}(x_n\mid x_{n-1})},
    \label{eq:weight-factor}
\end{equation}
both functions of $(x_{n-1},x_n)$. We assume throughout that $\Kb_{n-1|n}(\cdot\mid x_n)\ll\Lb_{n-1|n}(\cdot\mid x_n)$, so that the backward weight $w_{n-1|n}$ exists and is evaluable. The forward weight $w_{n|n-1}$ involves $\Kf_{n|n-1}$, which depends on $p_n$ and can be neither evaluated nor simulated. We call a correction \emph{exact} if it evaluates $w_{n-1,n}$ without asymptotic bias, and the proposal \emph{unguided} if $\Lb_{n-1|n}=\Kb_{n-1|n}$, in which case $w_{n-1|n}=G_{n-1}/G_n$ involves no model density, and \emph{guided} otherwise.

\paragraph{Exact control.}

In practice, we want to preserve the intended target induced by the sampler.
This is straightforward for \textit{sequential control}: SMC never simulates \(\Lf_{n|n-1}\) or evaluates the corresponding forward weight, so we may take \(\Lf_{n|n-1}=\Kf_{n|n-1}\), giving \(w_{n|n-1}=1\) and \(w_{n-1,n}=w_{n-1|n}\).
This gives the exact weights used by TDS \citep{wu_practical_2023} with a guided proposal and FKD \citep{singhal_general_2025} with an unguided one.

However, for \textit{parallel control}, exactness remains challenging.
Replica exchange must simulate \(\Lf_{n|n-1}\), while \(\Kf_{n|n-1}\) is generally unavailable for simulation.
CREPE \citep{he_crepe_2026} substitutes the model's noising kernel, \(\Lf_{n|n-1}=\Kfh_{n|n-1}\), and sets \(w_{n|n-1}\) to one, as in the Radon--Nikodym estimator of \citet{he_rne_2026}.
This is exact only if the sampler reverses the noising process exactly. Otherwise, the error accumulates over the model steps spanned by an exchange (\cref{prop:crepe}).
It is also undefined for samplers without a noising kernel.
The bias can be reduced by increasing \(N\), at the expense of the computational efficiency of parallel annealing.
No exact parallel control method currently exists.
We obtain one by lifting the replicas to path space: rather than approximating \(\Kf_{n|n-1}\), each replica stores its denoising trajectory, so the forward move need not be simulated.

\subsection{Replica Exchange on the Trajectory Ladder}
\label{sec:ladder}
The marginals $p_n$ are intractable, but the law of the denoising path is not: the partial trajectory $\bx_n=(x_N,\ldots,x_n)\in\Traj{n}:=\Xsp^{N-n+1}$ generated down to level $n$ has the product law
\begin{equation}
    \bp_N=p_N,\qquad
    \bp_{n-1}=\bp_n\otimes\Kb_{n-1|n},
    \label{eq:path-law}
\end{equation}
whose $x_n$-marginal is $p_n$. If the replica at level $n$ stores the trajectory $\bx_n$ that produced its state, the forward move is a truncation rather than a simulation: under the level law the truncated state is a draw of $\Kf_{n|n-1}$ given the lower state, so $w_{n|n-1}=1$ with nothing simulated. Bold symbols denote trajectories and objects defined on them, and the subscript of a trajectory is the level of its last state. Let $\trunc_{n|n-1}:\Traj{n-1}\to\Traj{n}$ drop the last state, so that $\bx_{n-1}=(\trunc_{n|n-1}(\bx_{n-1}),x_{n-1})$. Level $n$ holds $\bx_n\in\Traj{n}$ and the level distributions are
\begin{equation}
    \bpi_n(\dd\bx_n)\propto G_n(x_n)\,\bp_n(\dd\bx_n),
    \label{eq:trajectory-target}
\end{equation}
so the replicas jointly target $\prodpi$. The $x_n$-marginal of $\bpi_n$ is $\pi_n$: it is the ladder of \cref{sec:exact} lifted to trajectories, with $\bpi_N=\eta$ and the target $\pi_0$ as the $x_0$-marginal of level $0$. ENCORE (\cref{alg:encore}) is the parallel annealing sampler of \cref{sec:bg-mc} on this ladder with the following path exchange.

\paragraph{Path exchange.} We mirror \cref{eq:apt}. Let $\bx_n$ and $\bx_{n-1}=(\by_n,x_{n-1})$ be the replicas at levels $n$ and $n-1$, where $\by_n:=\trunc_{n|n-1}(\bx_{n-1})$ is the lower replica truncated to level $n$. Propose $y_{n-1}\sim\Lb_{n-1|n}(\dd y_{n-1}\mid x_n)$, set $\by_{n-1}:=(\bx_n,y_{n-1})$, and exchange $(\bx_{n-1},\bx_n)\leftarrow(\by_{n-1},\by_n)$: level $n-1$ receives the upper replica extended by one proposed state, and level $n$ receives the truncation $\by_n$ of the lower replica, with nothing simulated. On the common space $\Traj{n-1}$ the proposal $\by_{n-1}$ has law $\bpi_n\otimes\Lb_{n-1|n}$ and the lower replica has law $\bpi_{n-1}$, and by \cref{eq:path-law} the ratio between $\bpi_{n-1}$ and $\bpi_n\otimes\Lb_{n-1|n}$ is $w_{n-1|n}(y_{n-1},x_n)$ up to a constant: every factor of $\bp_n$ cancels, no marginal appears, and the weight depends on the trajectory only through its last state. The exchange is therefore \cref{eq:apt} with the forward proposal replaced by memory, and it has the same acceptance probability with the evaluable factor $w_{n-1|n}$ alone,
\begin{equation}
    \boldsymbol{\alpha}_{n-1,n}(\bx_{n-1},\bx_n;\,\by_{n-1},\by_n)=\min\bigg\{1,\ \frac{w_{n-1|n}(y_{n-1},x_n)}{w_{n-1|n}(x_{n-1},y_n)}\bigg\}.
    \label{eq:guided-acceptance}
\end{equation}
This is a reversible-jump exchange \citep{green_reversible_1995} whose reverse re-extends $\by_n$ by $x_{n-1}\sim\Lb_{n-1|n}(\dd x_{n-1}\mid y_n)$. \Cref{eq:guided-acceptance} is the acceptance probability \cref{eq:apt} with the exact time reversal \cref{eq:time-reversal} as forward proposal, $\Lf_{n|n-1}=\Kf_{n|n-1}$. \Cref{thm:exact} shows that ENCORE is exact for any potentials and proposals, that the Doob potentials and proposals are the optimal choice, and that departures from them cost rejections rather than bias.

State-level replica exchange simulates both $Q_{n-1|n}$ and $Q'_{n|n-1}$ at every iteration, whereas ENCORE simulates only $Q_{n-1|n}$, since moving up is a truncation of the stored trajectory. Each iteration therefore requires only $\lfloor N/2 \rfloor$ simulations from $Q_{n-1|n}$, performed in parallel, one for each active pair (\cref{alg:encore}). The trade-off is that we must store the entire trajectory of $N-n+1$ states for the replica at level $n$, which requires $O(N^2 d)$ memory.

\SetKwInput{KwIn}{Input}
\SetKw{Record}{Record}
\SetKwFor{ForAll}{for all}{do}{end}
\newcommand{\algcommentstyle}[1]{{\normalfont\itshape\color{teal}#1}}
\SetCommentSty{algcommentstyle}

\begin{figure}[t]
\centering

\begin{minipage}[t]{0.3\linewidth}
\vspace{0pt}
\centering
\includegraphics[width=0.84\linewidth]{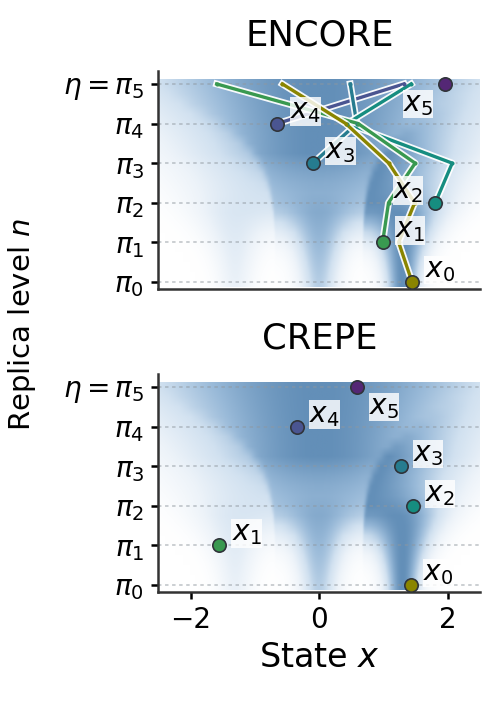}
\caption{Stored paths (ENCORE) vs. states (CREPE). Blue shading: level densities.}
\label{fig:annealing-path}
\label{fig:annealing-path-encore}
\label{fig:annealing-path-crepe}
\end{minipage}
\hfill
\begin{minipage}[t]{0.66\linewidth}
\vspace{0pt}
\SetAlFnt{\scriptsize\fontsize{9}{\baselineskip}\selectfont}
\begin{algorithm}[H]
\SetAlgoNlRelativeSize{-1}
\caption{ENCORE sampler}
\label{alg:encore}
\KwIn{reference $\eta$; sampler $(\Kb_{n-1|n})$; potentials $(G_n)$; proposals $(\Lb_{n-1|n})$; initial replicas $\bx^0_n\in\Traj{n}$; iterations $M$.}
\For{$m=1,\ldots,M$}{
    Draw $\bx^m_N=x^m_N\sim\eta$
    \tcp*[r]{refresh reference replica}
    \ForAll{$n\in\{1,\ldots,N\}$ with $n\equiv m\pmod 2$, in parallel}{
        Let $\bx_n=\bx^{m-1}_n$, $\bx_{n-1}=\bx^{m-1}_{n-1}$ and $\by_n=\trunc_{n|n-1}(\bx_{n-1})$
        \tcp*[r]{truncate lower replica}
        Draw $y_{n-1}\sim\Lb_{n-1|n}(\dd y_{n-1}\mid x_n)$ and set
        $\by_{n-1}\gets(\bx_n,y_{n-1})$
        \tcp*[r]{extend upper replica}
        With probability
        $\boldsymbol{\alpha}^m_{n-1,n}:=\boldsymbol{\alpha}_{n-1,n}(\bx_{n-1},\bx_n;\by_{n-1},\by_n)$
        of \cref{eq:guided-acceptance}, set
        $(\bx^m_{n-1},\bx^m_n)\gets(\by_{n-1},\by_n)$
        \tcp*[r]{exchange}
    }
    Set $\bx^m_n\gets\bx^{m-1}_n$ for every level not exchanged\;
        Optionally apply a $\pi_n$-invariant MCMC move to each $x^m_n$, in parallel over $n$
    \tcp*[r]{local move}
    \Record the last state $x^m_0$ of $\bx^m_0$
    \tcp*[r]{target sample}
}
\end{algorithm}
\end{minipage}
\end{figure}

\begin{theorem}
\label{thm:exact}
\begin{enumerate}
    \item[(a)] The ENCORE transition leaves $\prodpi$ invariant and the recorded state has stationary law $\pi_0$. The stationary rejection rate for the $n$-th exchange is
    \begin{equation}
        r_{n-1,n}:=1-\E[\boldsymbol{\alpha}_{n-1,n}]=r\big(\pi_{n-1}\otimes\Kf_{n|n-1},\ \pi_n\otimes\Lb_{n-1|n}\big).
        \label{eq:rejection-rate}
    \end{equation}
    \item[(b)] If $w_{n-1|n}$ is uniformly bounded above and below for each $n$, the chain is uniformly ergodic.
    \item[(c)] The restart rate \cref{eq:roundtrip} attains $\tau=1/2$ if and only if $G_n$ and $\Lb_{n-1|n}$ satisfy the Doob relations \cref{eq:doob} for every $n$.
\end{enumerate}
\end{theorem}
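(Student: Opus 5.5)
For (a), I will first check that each pairwise path exchange is a Metropolis--Hastings move targeting $\bpi_{n-1}\otimes\bpi_n$ on $\Traj{n-1}\times\Traj{n}$, treated as a reversible-jump move \citep{green_reversible_1995}. The move is an involution on the extended space. From $(\bx_{n-1},\bx_n)$ with auxiliary $y_{n-1}\sim\Lb_{n-1|n}(\cdot\mid x_n)$, it produces $(\by_{n-1},\by_n)$, and the reverse move uses the auxiliary $x_{n-1}$. So the triple $(\by_n, x_{n-1}, \bx_n, y_{n-1})$ is merely relabelled.

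The forward extended law is
\[
\bpi_{n-1}(\by_n,x_{n-1})\,\bpi_n(\bx_n)\,\Lb_{n-1|n}(y_{n-1}\mid x_n).
\]
By \cref{eq:path-law} and \cref{eq:trajectory-target}, $\bpi_{n-1}(\by_n,x_{n-1})\propto G_{n-1}(x_{n-1})\,\bp_n(\by_n)\,\Kb_{n-1|n}(x_{n-1}\mid y_n)$. The common factors $\bp_n(\by_n)\bp_n(\bx_n)$ cancel in the ratio with the reversed law, which leaves exactly the ratio in \cref{eq:guided-acceptance}. The Jacobian is one because this is a coordinate permutation. The absolute continuity assumption $\Kb\ll\Lb$ ensures the ratio is a genuine Radon--Nikodym derivative.

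Each update then preserves $\prodpi$, because the pairs updated in one iteration are disjoint and the other factors are untouched. The reference refresh is an exact draw from $\bpi_N=\eta$, and the optional local moves are invariant by assumption. Composing these steps gives invariance. The $x_0$-marginal of $\bpi_0$ is $\pi_0$, so the recorded state is stationary with law $\pi_0$.

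For the rejection rate I will compute $1-\E\min\{1,R\}$ under stationarity, which equals the total variation between the forward and reversed extended laws. Since the ratio depends only on $(x_{n-1},y_n,x_n,y_{n-1})$, I can marginalise out the trajectory prefixes. Under $\bpi_{n-1}$, the pair $(y_n,x_{n-1})$ has law $\pi_{n-1}\otimes\Kf_{n|n-1}$, using the time-reversal identity \cref{eq:time-reversal} for the $x_n$-coordinate of the truncation. The pair $(x_n,y_{n-1})$ has law $\pi_n\otimes\Lb_{n-1|n}$. The resulting expression is $\TV(p\otimes q,q\otimes p)$ with $p=\pi_{n-1}\otimes\Kf$ and $q=\pi_n\otimes\Lb$, which matches \cref{eq:apt-rejection} with $\Lf=\Kf$ as in \citet[Thm.~1]{zhang_accelerated_2026}. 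I will invoke that result rather than redo it.

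For (b), bounded weights $c\le w_{n-1|n}\le C$ give acceptance probabilities of at least $c/C$. I will then build a Doeblin minorisation over one cycle of roughly $2N$ iterations. With probability bounded below, a fresh reference draw is carried down the whole ladder by successive accepted exchanges, and the other replicas are similarly replaced by descendants of fresh draws. After such a cycle, every replica is a fresh trajectory generated by $\Lb$ from $\eta$, independent of the initial state. This yields a minorisation $K^{m_0}\ge\epsilon\,\nu$ with $\nu$ independent of the start, and hence uniform ergodicity. The step I expect to be fiddly is the bookkeeping that, under the even/odd schedule, every level gets refreshed within a fixed number of iterations. I will handle it by following the "diagonal" path of a reference draw descending one level per iteration.

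For (c), \cref{eq:roundtrip} gives $\tau=1/2$ if and only if every $r_{n-1,n}=0$. By (a) this holds if and only if $p\otimes q=q\otimes p$, which by marginalising forces $p=q$, that is, $\pi_{n-1}\otimes\Kf_{n|n-1}=\pi_n\otimes\Lb_{n-1|n}$. Equivalently, $w_{n-1|n}$ is constant almost surely. That gives
\[
G_{n-1}(x_{n-1})\,\Kb_{n-1|n}(x_{n-1}\mid x_n)=c_n\,G_n(x_n)\,\Lb_{n-1|n}(x_{n-1}\mid x_n).
\]
Integrating over $x_{n-1}$ yields $\Kb_{n-1|n}G_{n-1}=c_nG_n$. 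Inducting downward from $G_0$ shows that $G_n$ is proportional to $G^*_n$, and then $\Lb=\Lb^*$. The converse is direct substitution. The main subtlety is the normalising constants and the boundary $G_N\equiv1$. This requires $G^*_N$ to be constant, or else interpreting the Doob relations up to constants, and I will state that caveat explicitly.
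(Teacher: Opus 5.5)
Your proposal is correct and follows the paper's proof essentially step by step. It uses the same four ingredients: the reversible-jump Metropolis--Hastings ratio with the $\bp_n$ factors cancelling; the rejection rate as a swap total variation that depends only on the last two states and so projects to $\pi_{n-1}\otimes\Kf_{n|n-1}$ versus $\pi_n\otimes\Lb_{n-1|n}$; a Doeblin minorisation over a window of about $2N+2$ iterations on the all-accept event (you bound acceptance pointwise by $c/C$, where the paper couples the event to uniforms $U\le c_{n-1,n}$, which is equivalent); and, for (c), $r_{n-1,n}=0$ reduced to equality of the two joint laws, giving the Doob relations up to constants with the top-level caveat.
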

See \cref{app:derivations} for proofs. Where the potential is constant and $\Lb_{n-1|n}=\Kb_{n-1|n}$, every exchange is accepted whatever the spacing of the levels, so a coarse ladder is tolerated in the high-noise region.

\subsection{Diffusion Models}
\label{sec:diffusion}
\label{sec:design}

We now specialise the sampler and proposal kernels to a diffusion and study the limit of increasing parallelism, $N\to\infty$. Let $1=t_N>\cdots>t_0=0$ with steps $\Delta t_n:=t_n-t_{n-1}$, and let the potentials discretise a smooth path $t\mapsto G_t$ with $G_1\equiv1$ and $G_0$ the reweighting function, so that $\pi_t\propto G_tp_t$, $G_n=G_{t_n}$ and $\pi_n=\pi_{t_n}$. The proposal $\Lb_{n-1|n}$ takes one Euler--Maruyama step from $X_t$ to $t'=t-\Delta t$,
\begin{equation}
    Y_{t'}=X_t+\big(b_t(X_t)+\mathbf 1_{\mathrm{guided}}\,\sigma_t\sigma_t^{\top}\nabla\log G_t(X_t)\big)\,\Delta t+\sigma_t\sqrt{\Delta t}\,\xi,
    \qquad \xi\sim N(0,I),
    \label{eq:euler-step}
\end{equation}
which for $\mathbf 1_{\mathrm{guided}}=0$ is the denoiser step $\Kb_{n-1|n}$, the Euler--Maruyama step of the generative SDE $\dd X_t=b_t(X_t)\,\dd t+\sigma_t\,\dd B_t$ run from $t=1$ to $t=0$, and for $\mathbf 1_{\mathrm{guided}}=1$ the guided denoiser. Composing several model steps per level changes only the remainders below.

Consider the exchange between $t=t_n$ and $t'=t_{n-1}$. Given $X_t$ and $\xi$, the step \cref{eq:euler-step} determines $Y_{t'}$, and so the weight $w_{n-1|n}(Y_{t'},X_t)$ is a function $w_{t'|t}(X_t,\xi)$ of the state and the noise,
\begin{equation}
    \log w_{t'|t}(X_t,\xi)
    :=\underbrace{\log\frac{G_{t'}(Y_{t'})}{G_t(X_t)}}_{\text{potential}}
    \;+\;
    \mathbf 1_{\mathrm{guided}}\underbrace{\log\frac{\Kb_{t'|t}(Y_{t'}\mid X_t)}{\Lb_{t'|t}(Y_{t'}\mid X_t)}}_{\text{proposal correction}},
    \label{eq:logweight}
\end{equation}
and the rejection rate \cref{eq:rejection-rate} is approximately $\tfrac12\E|\ell_n-\ell'_n|$ for two independent copies $\ell_n,\ell'_n$ of this log-weight with $X_t\sim\pi_t$. Expanding along the step,
\begin{equation}
    \log w_{t'|t}(X_t,\xi)=
    \begin{cases}
        \sqrt{\Delta t}\,g_t^{\top}\xi+O(\Delta t) & \text{unguided},\\[2pt]
        \Delta t\,\Psi_t(X_t,\xi)+O(\Delta t^{3/2}) & \text{guided},
    \end{cases}
    \label{eq:guided-logweight}
\end{equation}
where $g_t:=\sigma_t^{\top}\nabla\log G_t$ and $\Psi_t(X_t,\xi)=-\partial_{t'}\log w_{t'|t}(X_t,\xi)\big|_{t'= t}$ is the pathwise derivative of the guided log-weight with $\xi$ held fixed. Explicitly, $\Psi_t$ is the rate of change of $\log G_t$ along the sampler plus a mean-zero curvature fluctuation,
\begin{equation}
    \Psi_t(x,\xi)=\frac{\gen_tG_t(x)}{G_t(x)}+\tfrac12\big(\xi^{\top}M_t(x)\xi-\operatorname{tr}M_t(x)\big),
    \label{eq:Psi-main}
\end{equation}
where $\gen_t:=-\partial_t+b_t\cdot\nabla+\tfrac12\operatorname{tr}(\sigma_t\sigma_t^{\top}\nabla^2)$ is the generator of the generative SDE and $M_t:=\sigma_t^{\top}\nabla^2\log G_t\,\sigma_t$. In continuous time the Doob relations \cref{eq:doob} read $\gen_tG^*_t=0$ with $G^*_0=G_0$, solved by $G^*_t(x)=\E[G_0(X_0)\mid X_t=x]$, together with the guided step for $G^*_t$. For $X_t,X'_t$ i.i.d.\ from $\pi_t$ and $\xi,\xi'\sim N(0,I)$ independent, define the local barriers
\begin{equation}
    \lambda^{\mathrm u}(t):=\tfrac{1}{\sqrt{2\pi}}\,\E\sqrt{\|g_t(X_t)\|^2+\|g_t(X'_t)\|^2},
    \qquad
    \lambda(t):=\tfrac12\,\E\big|\Psi_t(X_t,\xi)-\Psi_t(X'_t,\xi')\big|.
    \label{eq:local-barriers}
\end{equation}

\begin{theorem}
\label{thm:barrier}
Assume $\sigma_t$ is invertible and continuous, and $b_t$ and $\log G_t$ are smooth with bounded derivatives. Then $\lambda^{\mathrm u}$ and $\lambda$ are continuous on $[0,1]$, and if $N\to\infty$ with $\max_n\Delta t_n\to0$, the following hold uniformly in $n$.
\begin{enumerate}
    \item[(a)] If $\mathbf 1_{\mathrm{guided}}=0$ in \cref{eq:euler-step}, $r_{n-1,n}=\lambda^{\mathrm u}(t_n)\sqrt{\Delta t_n}+O(\Delta t_n)$, so $\sum_nr_{n-1,n}\to \infty$.
    \item[(b)] If $\mathbf 1_{\mathrm{guided}}=1$ in \cref{eq:euler-step}, $r_{n-1,n}=\int_{t_{n-1}}^{t_n}\lambda(t)\,\dd t+o(\Delta t_n)$, so $\sum_nr_{n-1,n}\to\Lambda:=\int_0^1\lambda(t)\,\dd t<\infty$, with $\Lambda=0$ if and only if $G_t\propto G^*_t$ and $\log G_t$ is affine in $x$ for every $t$.
\end{enumerate}
\end{theorem}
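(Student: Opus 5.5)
The plan is to start from the exact rejection formula \cref{eq:rejection-rate} of \cref{thm:exact}(a). On the common space $(x_{n-1},x_n)$, the density ratio between $\pi_{n-1}\otimes\Kf_{n|n-1}$ and $\pi_n\otimes\Lb_{n-1|n}$ is the normalised backward weight $w_{n-1|n}/Z_n$. Hence $r_{n-1,n}=\TV(p\otimes q,q\otimes p)=\E_{q\otimes q}[(1-e^{\ell'-\ell})_+]$, where $\ell=\log w_{t'|t}(X_t,\xi)$ is computed under $X_t\sim\pi_t$ with the step \cref{eq:euler-step}. The normalising constant cancels in the difference $\ell-\ell'$.

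I would then carry out three routine reductions. First, use the elementary bounds $|(1-e^{-u})_+-u_+|\le u^2/2$ and the symmetry $\E(\ell-\ell')_+=\tfrac12\E|\ell-\ell'|$. Together these give $r_{n-1,n}=\tfrac12\E|\ell-\ell'|+O(\E|\ell-\ell'|^2)$. Second, establish the expansions \cref{eq:guided-logweight} by Taylor expanding $\log G_{t'}(Y_{t'})$ about $(t,X_t)$ along the Euler step. Third, expand the Gaussian log-ratio $\log\Kb/\Lb$, which is explicit because both kernels are Gaussian with the same covariance $\sigma_t\sigma_t^\top\Delta t$ and means differing by $\sigma_t g_t\Delta t$.

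In the guided case the $\sqrt{\Delta t}\,g_t^\top\xi$ terms cancel exactly between the potential and the proposal correction. Collecting the $O(\Delta t)$ terms then gives $\Psi_t$ as in \cref{eq:Psi-main}. The Itô-type term $\tfrac12\xi^\top M_t\xi$ combines with $\tfrac12\operatorname{tr}(\sigma\sigma^\top\nabla^2\log G)$, and the $\tfrac12\|g\|^2$ terms from the potential and the Gaussian correction cancel. What remains is $\gen_tG_t/G_t$ plus the centred quadratic form. Because derivatives are bounded, the remainders are bounded by polynomials in $\|\xi\|$ and have all moments.

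For (a), write $\ell-\ell'=\sqrt{\Delta t}(g_t(X)^\top\xi-g_t(X')^\top\xi')+O(\Delta t)$ in $L^1$. Conditionally on $X,X'$, the leading term is centred Gaussian with variance $\|g(X)\|^2+\|g(X')\|^2$. Since $\E|Z|=\sqrt{2/\pi}\,s$, we get $\tfrac12\E|\ell-\ell'|=\lambda^{\mathrm u}(t_n)\sqrt{\Delta t_n}+O(\Delta t_n)$. The quadratic error term is $O(\Delta t)$. Summing gives at least $\min_t\lambda^{\mathrm u}(t)\sum_n\sqrt{\Delta t_n}\ge \min\lambda^{\mathrm u}/\sqrt{\max\Delta t_n}\to\infty$. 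This needs $\lambda^{\mathrm u}\not\equiv0$ near somewhere; otherwise one uses the integral of $\lambda^{\mathrm u}$ over the region where it is positive, which I would add as a stated nondegeneracy.

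For (b), $\tfrac12\E|\ell-\ell'|=\Delta t_n\lambda(t_n)+O(\Delta t_n^{3/2})$ and the quadratic term is $O(\Delta t_n^2)$. Continuity of $\lambda$ then gives $\Delta t_n\lambda(t_n)=\int_{t_{n-1}}^{t_n}\lambda+o(\Delta t_n)$ uniformly, and the sum converges to $\Lambda$ as a Riemann sum.

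Continuity of $\lambda,\lambda^{\mathrm u}$ in $t$ follows by dominated convergence. It also needs weak continuity of $t\mapsto\pi_t$ with uniformly integrable polynomial moments, which I would derive from the smooth coefficients of the generative SDE.

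For the characterisation $\Lambda=0$, note first that $\Lambda=0$ iff $\lambda\equiv0$ by continuity, iff $\Psi_t(X,\xi)$ is a.s. constant in $(X,\xi)$. Being constant in $\xi$ forces $M_t\equiv0$, i.e. $\nabla^2\log G_t=0$ since $\sigma_t$ is invertible, so $\log G_t$ is affine. Then $\gen_tG_t/G_t=c(t)$ is constant in $x$, so $\tilde G_t:=e^{\int_0^tc}G_t$ satisfies $\gen_t\tilde G_t=0$ with $\tilde G_0=G_0$, and uniqueness of the Kolmogorov backward equation (Feynman–Kac) gives $G_t\propto G^*_t$. The converse is direct.

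The main obstacle I expect is uniformity of the remainders together with the $\TV$-to-$\tfrac12\E|\cdot|$ reduction when $\E|\ell-\ell'|$ is only $O(\Delta t)$. The $O(\Delta t^{3/2})$ remainder must be controlled in $L^1$ uniformly in $t$ and in $X\sim\pi_t$, not just pointwise. I would first try bounding all remainders by $C\Delta t^{3/2}(1+\|\xi\|^3)$ using bounded third derivatives, so that no moment assumptions on $\pi_t$ are needed.
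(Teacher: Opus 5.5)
Your architecture is the paper's. You reduce $r_{n-1,n}$ to $\tfrac12\E|\ell_n-\ell'_n|$, Taylor-expand the log-weight along the Euler step with the exact cancellation of $\sqrt{\Delta t_n}\,g_t^{\top}\xi$ in the guided case, and evaluate the unguided leading term by the conditional Gaussian mean absolute difference. You get continuity by dominated convergence plus weak continuity of $\pi_t$, and you characterise $\Lambda=0$ by killing the curvature term and then using the eigenfunction/uniqueness argument of \cref{lem:doob-residual}.

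\textbf{The gap is the first reduction.} The identity $r_{n-1,n}=\E_{q\otimes q}[(1-e^{\ell'-\ell})_+]$, with both log-weights drawn from the proposal law $q=\pi_n\otimes\Lb_{n-1|n}$, is false. In the stationary exchange one of the two compared objects is the lower replica, whose law is the target $p=\pi_{n-1}\otimes\Kf_{n|n-1}$. So with $\bar W:=e^{\ell}/\E_qe^{\ell}=\dd p/\dd q$,
\begin{equation*}
    r_{n-1,n}=\E_{p\otimes q}\big[(1-e^{\ell'-\ell})_+\big]=\E_{q\otimes q}\big[\bar W\,(1-e^{\ell'-\ell})_+\big]=\frac{\E_{q\otimes q}|e^{\ell}-e^{\ell'}|}{2\,\E_qe^{\ell}},
\end{equation*}
which is \cref{eq:app-rej-weight}. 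For a concrete check, take $q$ uniform on two points and $\bar W=(1.8,0.2)$: the rejection rate is $0.4$, while your formula gives $2/9$. The normalising constant cancels in the acceptance ratio, not in the rejection rate.

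This matters for your error control. Your bound $|(1-e^{-u})_+-u_+|\le u^2/2$ yields a remainder free of exponential moments only because the tilt $\bar W$ was dropped. If you instead keep $\ell$ under $p$, the inequality still applies, but $\ell,\ell'$ are no longer identically distributed. Then the symmetry $\E(\ell-\ell')_+=\tfrac12\E|\ell-\ell'|$ is lost, and your expansions, which describe $\ell$ under the proposal step, no longer apply to it.

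Either way you must pay for the change of measure, for instance by bounding $\E_{q\otimes q}\big[|\bar W-1|\min\{1,|\ell-\ell'|\}\big]$. Since $\bar W-1=(e^{\ell}-\E_qe^{\ell})/\E_qe^{\ell}$, this needs exponential moments of $\ell_n$. That is exactly the hypothesis $\sup_n\E e^{2|\ell_n|}\le C_0$ of \cref{lem:smallweight}, verified in \cref{lem:moments} from $|\ell_n|\le C\sqrt{\Delta t_n}(1+\|\xi\|)$. These moments come from Gaussianity of $\xi$, not from $\pi_t$. The extra term is then second order in $\ell_n$: $O(\Delta t_n)$ unguided and $O(\Delta t_n^2)$ guided. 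So your leading terms and remainder orders survive once this is added.

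Three smaller points.
\begin{enumerate}
    \item In the guided bookkeeping the $\|g_t\|^2$ terms do not cancel. The guidance drift contributes $+\Delta t_n\|g_t\|^2$ and the Gaussian correction $-\tfrac12\Delta t_n\|g_t\|^2$. The surviving $+\tfrac12\Delta t_n\|g_t\|^2$ is what turns $\gen_t\log G_t$ into $\gen_tG_t/G_t$. Your end formula is right, but a full cancellation would leave $\gen_t\log G_t$, which differs by the $x$-dependent $\tfrac12\|g_t\|^2$ and gives the wrong $\lambda$.
    \item In (a), $\min_t\lambda^{\mathrm u}(t)=0$ always, because $G_1\equiv1$ forces $g_1\equiv0$. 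So your first lower bound is vacuous and the fallback is the real argument. No extra nondegeneracy needs to be assumed: $\lambda^{\mathrm u}(t)=0$ if and only if $\nabla\log G_t\equiv0$, by positivity of $\pi_t$ and continuity. Hence for any non-constant $G_0$, continuity gives $\lambda^{\mathrm u}\ge c>0$ on an interval.
    \item The same positivity of $\pi_t$ is what lets you pass from ``$\Psi_t(X_t,\xi)$ almost surely constant'' to $M_t\equiv0$ and $\gen_tG_t/G_t$ constant everywhere in the zero-barrier step. You should state it there.
\end{enumerate}
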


See \cref{app:barriers} for the proof. Unguided, the log-weight is a mean-zero fluctuation of order $\sqrt{\Delta t}$, which a mean absolute difference does not average out. Guided, the proposal correction is the log-ratio of two Gaussians with equal covariance and cancels the $\sqrt{\Delta t}$ term exactly. $\Lambda=0$ exactly when the guided proposal is the Doob proposal, so $\Lambda$ measures the deviation of the guided ladder from the Doob $h$-transform, and by \cref{eq:roundtrip} the restart rate tends to zero for the unguided ladder and to $(2+2\Lambda)^{-1}$ for the guided one.

Guided and unguided ladders therefore differ qualitatively, and the gap widens with parallelism: the unguided rejection is set by the diffusivity of the sampler, not by the potentials, so additional levels only add rejections, whereas the guided rejection is set by the Doob residual and its total is fixed. $\Lambda$ also sets the schedule. A guided ladder with $N\approx2\Lambda$ equi-rejection levels attains stable communication, after which a fresh reference draw reaches the target every $2+2\Lambda$ iterations whatever $N$, so a longer run rather than a finer ladder increases the sample size. An unguided ladder should have levels about $1.2$ apart in log-weight standard deviation, merged wherever $\nabla\log G_t$ is small, and where $G_t\equiv1$ a single level suffices, the coarse-to-fine ladder of \cref{sec:experiments}.

\subsection{Related Work}
\label{sec:related}

\paragraph{Parallel control.} CREPE \citep{he_crepe_2026} is the only other parallel control method (\cref{tab:landscape}). Exactness costs memory, $O(N^2d)$ against $O(Nd)$, but the gap is smaller in practice: CREPE's reversal error accumulates over the model steps an exchange spans and forces a fine ladder, whereas ENCORE tolerates a coarse one (\cref{sec:experiments}). ENCORE also returns the law of the full trajectory rather than its marginals, and applies without a noising kernel and to deterministic samplers.

\paragraph{Sequential control.} FKD \citep{singhal_general_2025} and TDS \citep{wu_practical_2023} are the sequential counterparts of unguided and guided ENCORE. Against their $O(Md)$, ENCORE uses less memory once $M>N^2$, i.e.\ once $M\ge4\Lambda^2$ at the $N\approx2\Lambda$ levels of \cref{sec:diffusion}. The expansions behind \cref{thm:barrier} govern their weights as well: the unguided log-weight variance does not vanish under refinement while the guided one does, recovering the annealed-SMC barrier of \citet{syed_optimized_2026} (\cref{app:smc-barrier}). RNE \citep{he_rne_2026} estimates the marginal ratio by the transition ratio $\Kb/\Kfh$ and is not exact, and FKC \citep{skreta_feynmankac_2025} derives weights in continuous time for marginal-based targets and reduces to TDS on the ladder of \cref{sec:exact}.

\paragraph{Guidance and search.} Guidance \citep{dhariwal_diffusion_2021,ho_classifierfree_2022,chung_diffusion_2023,song_lossguided_2023,bansal_universal_2024} constructs $\Lb_{n-1|n}$ from an estimate of $G^*_n$, and search methods such as SVDD \citep{li_derivativefree_2025} and LiDAR \citep{kim_lookahead_2026} select among candidate continuations. Neither targets $\pi_0$ \citep{bradley_classifierfree_2024,dandapanthula_are_2026}, and both supply the proposals and potentials that ENCORE corrects.

%% file: main/05_experiments.tex
\section{Experiments}
\label{sec:experiments}

In this section, we evaluate the empirical performance of ENCORE, with experimental details and additional results provided in \cref{app:experiments}.
We first test robustness to perturbed transition kernels in synthetic settings, then evaluate generalisation to molecular and image-generation tasks.

\begin{figure}[t]
\centering
\includegraphics[width=\linewidth]{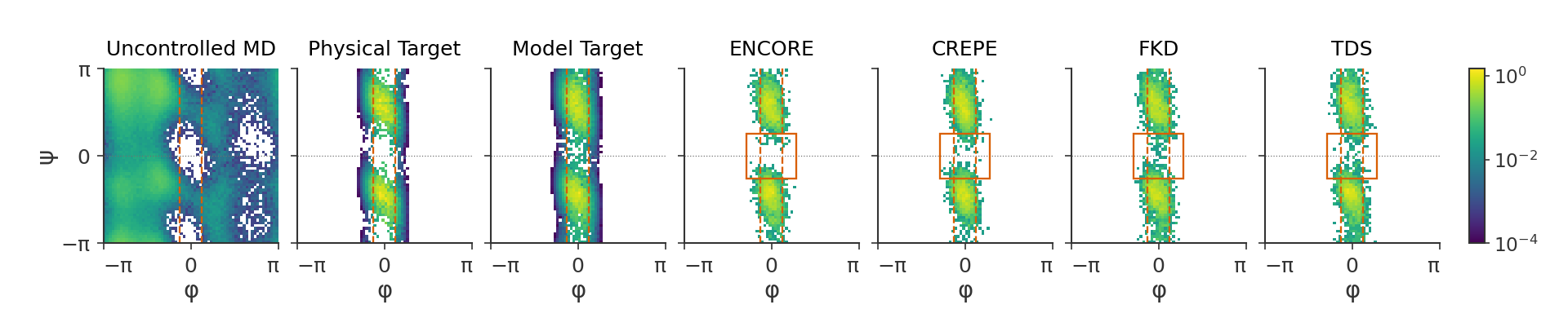}
\caption{Alanine transition-region sampling at \(800\,\mathrm K\); each method panel shows 5,000 samples.}
\label{fig:alanine-transition-region}
\end{figure}

\subsection{Synthetic Reward Tilting with Sampler Perturbations}
\label{sec:synthetic}

\begin{wrapfigure}{r}{0.35\textwidth}
\vspace{-1.5\baselineskip}
\centering
\includegraphics[width=\linewidth]{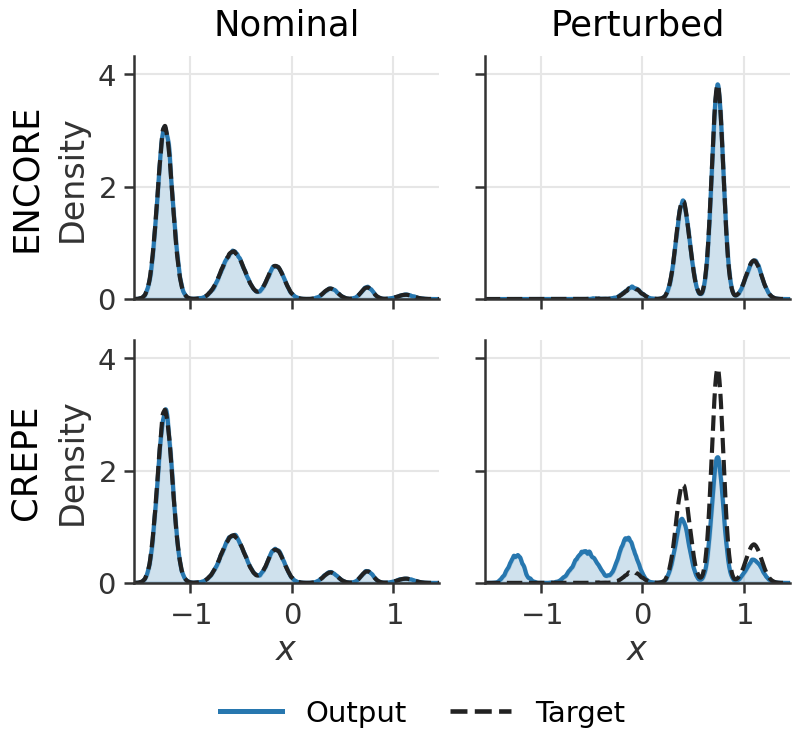}
\caption{1D densities and sampler-specific targets.}
\label{fig:gmm-endpoints}
\vspace{-1.5\baselineskip}
\end{wrapfigure}

We evaluate reward tilting on 6-mode 1D and 40-mode 2D Gaussian mixtures with exact scores under several reward settings.
To assess robustness to sampler misspecification, we compare nominal transitions with perturbed kernels obtained by altering the transition variance while keeping the reward fixed.
Because each perturbation changes the sampler output law \(p_0\), every method is evaluated against the corresponding target \(\pi_0\propto G_0p_0\).
ENCORE remains competitive across all evaluated settings and is notably robust to transition perturbations.
In the 1D example in \cref{fig:gmm-endpoints}, ENCORE and CREPE both closely match the nominal target; after perturbation, ENCORE remains accurate (\(\mathrm W_2=0.015\)), whereas CREPE substantially distorts the target mode weights (\(\mathrm W_2=0.650\)).
See \cref{app:gmm} for detailed 1D and 2D results.

\subsection{Sampling Molecular Transition Regions}
\label{sec:alanine-transition}

In molecular dynamics, transitions between metastable states pass through low-probability reactive regions that are rarely sampled by unbiased dynamics \citep{du_rare_2026,omidi_metalica_2026}.
Efficiently targeting these regions is therefore important but challenging.
We study alanine dipeptide, whose slow conformational changes are well described by the backbone torsion angles \(\phi\) and \(\psi\).
To bias sampling towards a transition region, we apply a harmonic restraint of width \(0.2\) rad to \(\phi\) in an \(800\,\mathrm K\) sampler while leaving \(\psi\) unconstrained.
We construct numerical references for the tilted model distribution and the corresponding physical ensemble using importance sampling over model outputs and molecular-dynamics (MD) frames, respectively.
In \cref{fig:alanine-transition-region}, the red boxes shows residual density connecting the two populated regions for CREPE, FKD, and TDS, whereas ENCORE separates them more clearly.
Across three seeds, ENCORE achieves the lowest mean regional TVD against the model reference (\(0.012\)), compared with \(0.027\) for CREPE and \(0.037\) for both FKD and TDS.
% See \cref{app:alanine-transition} for full results and experimental details.

\subsection{Boltzmann Tempering of Molecular Ensembles}
\label{sec:alanine}

Following \citet{skreta_feynmankac_2025,he_crepe_2026}, we temper \(800\,\mathrm K\) samplers to \(300\,\mathrm K\) for alanine dipeptide (ALDP), \(500\,\mathrm K\) for tetrapeptide (A4), and \(600\,\mathrm K\) for hexapeptide (A6).
For ENCORE and FKD, we formulate the temperature change as endpoint Boltzmann reweighting, \(G_0(x)=\exp\{-(\beta_{T_t}-\beta_{T_s})U(x)\}\), with potential energy \(U\) evaluated using OpenMM \citep{eastman_openmm_2023}; CREPE, RNE, and FKC use the model-density tempering constructions.
Across all evaluations, ENCORE achieves the lowest mean error in most cases and the second lowest in all others; see \cref{tab:encore-a6-results,tab:encore-boltzmann-results}.
The A6 TICA densities in \cref{fig:alanine-a6-tica} show broader conformational coverage for ENCORE and CREPE than for FKC-Noise and RNE, illustrating the diversity retained by RE in this setting.
% See \cref{app:alanine} for full results and experimental details.

\paragraph{Ladder spacing and exchange acceptance.}
\label{sec:alanine-acceptance}

We further compare dense and coarse-to-fine PT ladders, where the latter use a large initial jump followed by finer spacing; see \cref{fig:alanine-acceptance-diversity}.
Across the coarse high-noise interval, ENCORE accepts every exchange in our experiments, whereas CREPE rejects every proposal.
CREPE's RMSD error increases substantially under the coarse-to-fine ladder, while ENCORE remains robust to the wider spacing.
ENCORE's unit acceptance follows from \(\Lb=\Kb\) and \(G_n=G_{n-1}\equiv1\).
For all three peptides, ENCORE achieves lower mean energy TVD under both ladders and lower RMSD \(\mathrm W_1\) on the coarse-to-fine ladder.
See \cref{app:acceptance-diversity} for details.

\subsection{Text-to-Image Reward Tilting}
\label{sec:t2i}

We apply ImageReward tilting \citep{xu_imagereward_2023} to four-step SDXL-Turbo \citep{sauer_adversarial_2024} and DMD2-UNet \citep{yin_improved_2024}, using 32 prompts and 32 outputs per prompt.
ENCORE applies to both without a paired noising kernel.
CREPE uses an approximate pairing on SDXL-Turbo and is not evaluated on DMD2, whose native transitions lack the required pairing.
At matched chain counts, ENCORE and CREPE achieve similar mean ImageReward on SDXL-Turbo.
Compared with FKD, 16-chain ENCORE achieves comparable reward and prompt alignment with higher diversity scores on both samplers; on DMD2, their mean DINO-Vendi scores are \(3.276\) and \(1.473\), respectively (\cref{tab:encore-image-turbo,tab:encore-image-dmd2}).
\Cref{fig:t2i-qualitative} provides a qualitative comparison.
Since exact target samples are unavailable, these comparisons assess the reward--diversity trade-off rather than distributional accuracy.

\input{tabs/alanine_a6}

\begin{figure}[t]
\centering
\begin{minipage}[t]{0.272\textwidth}
\vspace{0pt}
\centering
\includegraphics[height=1.65in]{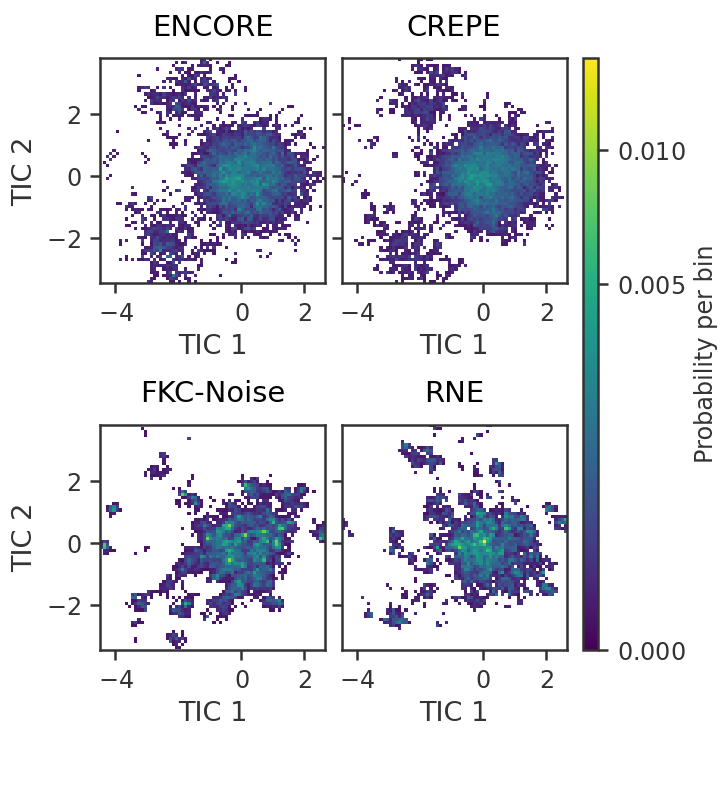}
\captionof{figure}{A6 target-fitted TICA densities; 49,000 samples per method.}
\label{fig:alanine-a6-tica}
\end{minipage}%
\hfill%
\begin{minipage}[t]{0.355\textwidth}
\vspace{0pt}
\centering
\includegraphics[height=1.65in]{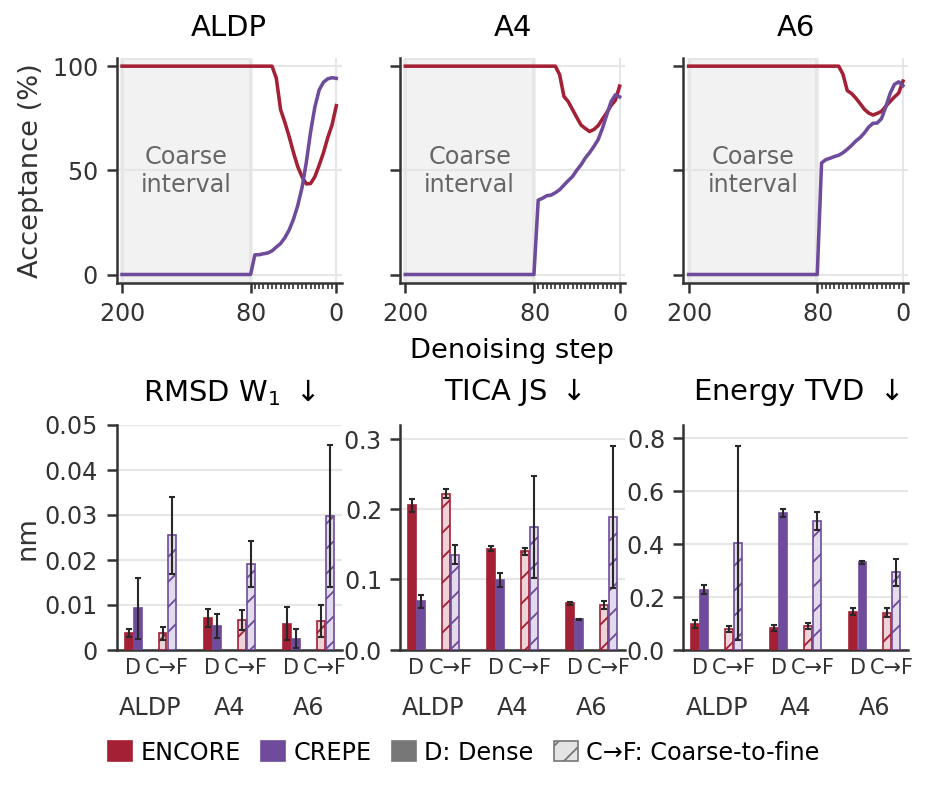}
\captionof{figure}{Coarse-to-fine acceptance (top) and sampling errors under both ladders (bottom).}
\label{fig:alanine-acceptance-diversity}
\label{fig:acceptance-alanine}
\label{fig:alanine-ladder-errors}
\end{minipage}%
\hfill%
\begin{minipage}[t]{0.352\textwidth}
\vspace{0pt}
\centering
\includegraphics[height=1.65in]{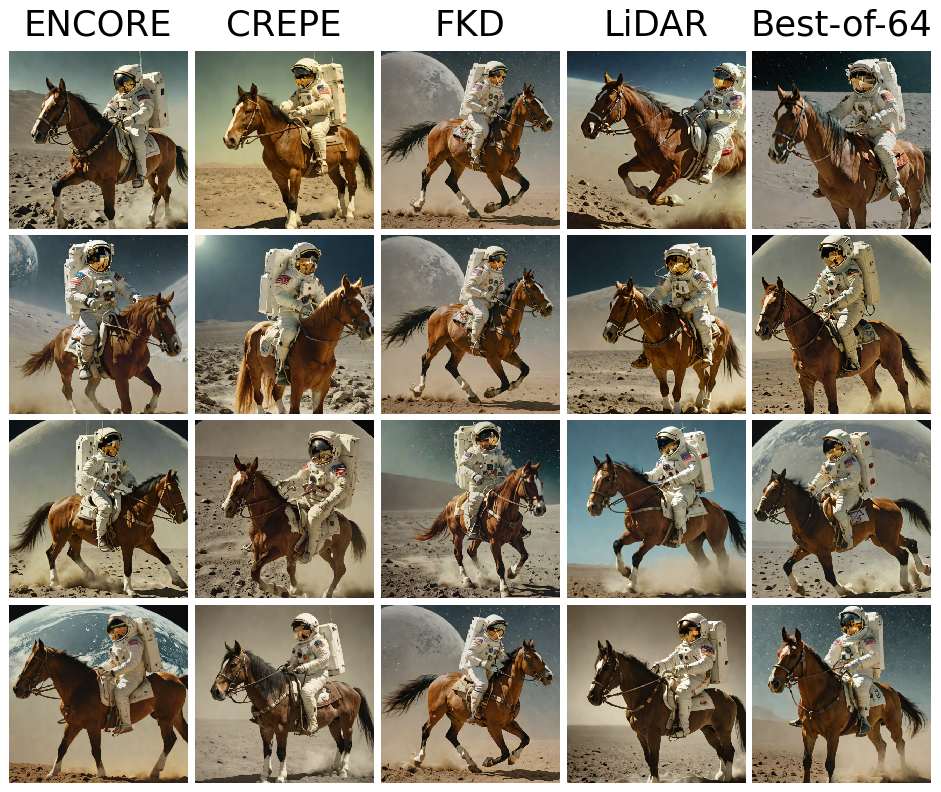}
\captionof{figure}{SDXL-Turbo samples for \emph{a photograph of an astronaut riding a horse}.}
\label{fig:t2i-qualitative}
\end{minipage}
\end{figure}

%% file: tabs/alanine_a6.tex
\begin{table}[t]
  \centering
  \caption{A6 Boltzmann-tempering error using 5,000 samples per evaluation
 (mean \(\pm\) standard deviation over three subsampling evaluations from one sampler run).}
  \label{tab:encore-a6-results}
  % \vspace{1.8ex}
  \begingroup
  \enctablesetup
  \resizebox{\textwidth}{!}{%
  \begin{tabular}{@{}clcccc@{}}
    \toprule
    System & Method & Energy TVD\(\downarrow\) & Distance TVD\(\downarrow\) & Aligned \(\mathrm{W}_2\downarrow\) & Torsion MMD\(^2\)\(\downarrow\) \\
    \midrule
    \multirow[c]{7}{50mm}{\encbtsystem{Alanine Hexapeptide}{A6}{800}{600}{a6}}
         & ENCORE    & \encbtbest{0.098200}{0.001705} & \encbtbest{0.005450}{0.000307} & \encbtsecond{1.253078}{0.003031} & \encbtsecond{0.001642}{0.000218} \\
         & CREPE     & \encbtvalue{0.336600}{0.007045} & \encbtvalue{0.013980}{0.000705} & \encbtvalue{1.287915}{0.001899} & \encbtvalue{0.004168}{0.000196} \\
         & RNE       & \encbtvalue{0.160000}{0.004198} & \encbtvalue{0.022591}{0.000815} & \encbtvalue{1.591211}{0.002857} & \encbtvalue{0.009120}{0.000598} \\
         & FKC-Score & \encbtvalue{0.168667}{0.004739} & \encbtvalue{0.012311}{0.000477} & \encbtvalue{1.596248}{0.005398} & \encbtvalue{0.011241}{0.000576} \\
         & FKC-Noise & \encbtsecond{0.134600}{0.004685} & \encbtvalue{0.009663}{0.000895} & \encbtvalue{1.584179}{0.003716} & \encbtvalue{0.008250}{0.000308} \\
         & FKD       & \encbtvalue{0.196267}{0.006130} & \encbtsecond{0.005946}{0.000267} & \encbtbest{1.192835}{0.002978} & \encbtbest{0.001129}{0.000043} \\
    \cmidrule(lr){2-6}
         & Uncontrolled & \encbtvalue{0.889667}{0.001746} & \encbtvalue{0.020097}{0.001293} & \encbtvalue{1.262478}{0.002014} & \encbtvalue{0.009424}{0.000226} \\
    \bottomrule
  \end{tabular}}
  \endgroup
\end{table}

%% file: main/06_conclusion.tex
\section{Limitations and Future Work}

ENCORE achieves exact replica exchange by augmenting each replica with a partial generation trajectory instead of retaining only its current state.
This incurs additional memory cost, and its practical efficiency depends on the choice of replica ladder, intermediate potentials, and proposal kernels.
Moreover, the controlled target is defined relative to the sampler output distribution \(p_0\), so ENCORE does not correct misspecification of the underlying generative model.
Future work could develop more memory-efficient trajectory representations and adaptive designs for the ladder, potentials, and proposals, potentially drawing on ideas from SMC-based diffusion control.

%% file: appendices/theory.tex
% Requires the macros in notation_macros.tex.
\providecommand{\trunc}{\mathcal T}

\section{Background}
\label{app:background}
\label{app:algorithms}
\label{app:notation}

This appendix expands \cref{sec:background}: the diffusion sampler and its time reversal, the Doob $h$-transform in discrete and continuous time, and the two annealed Monte Carlo samplers of \cref{sec:bg-mc} in full. Results quoted from the literature are stated without proof.

\subsection{Diffusion Samplers}
\label{app:bg-diffusion}

The sampler of \cref{sec:bg-diffusion} draws $x_N\sim\eta$ and then, for $n=N,\ldots,1$, $x_{n-1}\sim\Kb_{n-1|n}(\dd x_{n-1}\mid x_n)$. Its marginals and its path law are
\begin{align}
    p_N&=\eta, & p_{n-1}(\dd x_{n-1})&=\int p_n(\dd x_n)\,\Kb_{n-1|n}(\dd x_{n-1}\mid x_n),\\
    \bp_N&=\eta, & \bp_{n-1}&=\bp_n\otimes\Kb_{n-1|n},
\end{align}
the second line being \cref{eq:path-law}, where $\bx_n=(x_N,\ldots,x_n)$ is the partial trajectory generated down to level $n$. The kernels can be simulated, and for a stochastic sampler they have densities. The marginals $p_n$ with $n<N$ are integrals over the learned transitions and are not available in closed form. This is the obstacle that every correction in this paper works around.

\paragraph{Time reversal.} The kernel $\Kf_{n|n-1}$ of \cref{eq:time-reversal} is the conditional law of $x_n$ given $x_{n-1}$ under the sampler, defined by the identity of laws of the pair $(x_{n-1},x_n)$,
\begin{align}
    p_{n-1}\otimes\Kf_{n|n-1}&=p_n\otimes\Kb_{n-1|n},\quad\text{i.e.}\nonumber\\
    p_{n-1}(\dd x_{n-1})\,\Kf_{n|n-1}(\dd x_n\mid x_{n-1})&=p_n(\dd x_n)\,\Kb_{n-1|n}(\dd x_{n-1}\mid x_n).
    \label{eq:app-reversal}
\end{align}
It exists whenever the joint law on the right admits a regular conditional distribution, which is the case on the Euclidean spaces used here. When densities exist,
\begin{equation}
    \Kf_{n|n-1}(x_n\mid x_{n-1})=\frac{p_n(x_n)}{p_{n-1}(x_{n-1})}\,\Kb_{n-1|n}(x_{n-1}\mid x_n),
\end{equation}
which involves the ratio of two marginals. It can therefore be neither evaluated nor simulated.

\paragraph{The noising kernel.} A diffusion model is trained on a noising process, and the transition $\Kfh_{n|n-1}$ of that process from $t_{n-1}$ to $t_n$ is tractable, typically Gaussian. If the learned sampler were the exact time reversal of the noising process at the chosen discretisation, then $\Kfh_{n|n-1}$ would equal $\Kf_{n|n-1}$. Model error and discretisation error make the two differ, and the local reversal error
\begin{equation}
    \delta_n(x_n,x_{n-1}):=\log\frac{\Kf_{n|n-1}(x_n\mid x_{n-1})}{\Kfh_{n|n-1}(x_n\mid x_{n-1})}
    \label{eq:app-reversal-error}
\end{equation}
is what \cref{app:crepe-weights} quantifies. Distilled few-step samplers are not trained as reversals of a noising process and have no $\Kfh_{n|n-1}$ at all. A deterministic sampler has kernels without densities. The unguided ENCORE weight $w_{n-1|n}=G_{n-1}/G_n$ involves no kernel density and applies to it (\cref{app:general-kernels}).

\subsection{The Doob \texorpdfstring{$h$}{h}-Transform}
\label{app:bg-doob}

\paragraph{Discrete time.} For the target $\pi_0\propto G_0\,p_0$ of \cref{eq:endpoint-target}, the Doob potentials and proposals of \cref{eq:doob} are
\begin{align}
    G^*_0&:=G_0,\qquad G^*_n:=\Kb_{n-1|n}G^*_{n-1},\nonumber\\
    \Lb^*_{n-1|n}(\dd x_{n-1}\mid x_n)&:=\frac{G^*_{n-1}(x_{n-1})}{G^*_n(x_n)}\,\Kb_{n-1|n}(\dd x_{n-1}\mid x_n).
    \label{eq:app-doob}
\end{align}
Unrolling the recursion gives
\begin{equation}
    G^*_n(x_n)=\int\Kb_{n-1|n}(\dd x_{n-1}\mid x_n)\cdots\Kb_{0|1}(\dd x_0\mid x_1)\,G_0(x_0)=\E\big[G_0(X_0)\mid X_n=x_n\big]
\end{equation}
under the sampler, so $G^*_n(X_n)$ is a martingale in the generation direction. $\Lb^*_{n-1|n}$ is a probability kernel because $\int G^*_{n-1}(x_{n-1})\Kb_{n-1|n}(\dd x_{n-1}\mid x_n)=G^*_n(x_n)$. The chain that starts from $\eta^*\propto G^*_N\eta$ and follows $\Lb^*$ has path law
\begin{align}
    \eta^*(\dd x_N)\prod_{n=N}^{1}\Lb^*_{n-1|n}(\dd x_{n-1}\mid x_n)
    &=\frac{G^*_N(x_N)\,\eta(\dd x_N)}{\E_\eta[G^*_N]}\prod_{n=N}^{1}\frac{G^*_{n-1}(x_{n-1})}{G^*_n(x_n)}\,\Kb_{n-1|n}(\dd x_{n-1}\mid x_n)\nonumber\\
    &=\frac{G^*_0(x_0)}{\E_\eta[G^*_N]}\;\eta(\dd x_N)\prod_{n=N}^{1}\Kb_{n-1|n}(\dd x_{n-1}\mid x_n),
    \label{eq:app-doob-path}
\end{align}
since the potentials telescope, and $\E_\eta[G^*_N]=\E_{p_0}[G_0]$ by the martingale property. The right side is the sampler's path law reweighted by $G_0(x_0)$, so its $x_0$-marginal is $G_0\,p_0/\E_{p_0}[G_0]=\pi_0$ \citep{delmoral_feynmankac_2004}. This is the exact guidance of \cref{sec:bg-control}. It is intractable because each $G^*_n$ is a conditional expectation over the remaining generation.

\paragraph{The Doob relations on the ladder.} \Cref{thm:exact}(c) refers to the two relations of \cref{eq:doob} for a general ladder $\pi_n\propto G_np_n$ with proposals $\Lb_{n-1|n}$:
\begin{align}
    \text{potential relation:}&\qquad G_n\propto\Kb_{n-1|n}G_{n-1},\nonumber\\
    \text{proposal relation:}&\qquad \Lb_{n-1|n}(\dd x_{n-1}\mid x_n)\propto G_{n-1}(x_{n-1})\,\Kb_{n-1|n}(\dd x_{n-1}\mid x_n).
    \label{eq:app-doob-relations}
\end{align}
Iterating the potential relation from $G_0$ gives $G_n\propto G^*_n$ for every $n$, the constants compounding along the way and cancelling in $\pi_n$. The proposal relation then reads $\Lb_{n-1|n}=\Lb^*_{n-1|n}$. At the top level the potential relation reads $1=G_N\propto G^*_N$, which holds only if $G^*_N$ is constant, i.e.\ if $x_N$ is independent of $x_0$ under the sampler. For a diffusion this holds up to the small dependence of $x_0$ on the initial noise, so $\eta^*\approx\eta$ and the top exchange carries a correspondingly small residual rejection.

\paragraph{Continuous time.} Let the sampler discretise the generative SDE
\begin{equation}
    \dd X_t=b_t(X_t)\,\dd t+\sigma_t\,\dd B_t,\qquad\text{run from }t=1\text{ to }t=0,
    \label{eq:app-sde}
\end{equation}
whose generator, with the sign of $\partial_t$ reflecting that $t$ decreases along generation, is
\begin{equation}
    \gen_t:=-\partial_t+b_t\cdot\nabla+\tfrac12\operatorname{tr}\big(\sigma_t\sigma_t^{\top}\nabla^2\big).
    \label{eq:app-generator}
\end{equation}
The continuous-time Doob potential
\begin{equation}
    G^*_t(x):=\E\big[G_0(X_0)\mid X_t=x\big]
\end{equation}
solves the backward Kolmogorov equation $\gen_tG^*_t=0$ on $(0,1]$ with terminal condition $G^*_0=G_0$, and is its unique solution with bounded logarithm under the assumptions of \cref{app:barriers} \citep[Ch.~1]{delmoral_feynmankac_2004}. The $h$-transformed process
\begin{equation}
    \dd X_t=\big(b_t(X_t)+\sigma_t\sigma_t^{\top}\nabla\log G^*_t(X_t)\big)\,\dd t+\sigma_t\,\dd B_t,
    \qquad X_1\sim\eta^*\propto G^*_1\eta,
    \label{eq:app-htransform}
\end{equation}
has terminal law $\pi_0$: this is the continuous-time proposal relation, and the guided step of \cref{eq:euler-step} is its Euler--Maruyama discretisation with a general $G_t$ in place of $G^*_t$.

Two facts about this discretisation are used in \cref{app:barriers}. The first is that for a potential whose logarithm is affine in $x$, the guided step is the Doob proposal of that potential exactly. Indeed, the sampler step is Gaussian, $\Kb_{t'|t}(\cdot\mid x)=N(\mu(x),\Sigma)$ with $\mu(x)=x+b_t(x)\Delta t$ and $\Sigma=\sigma_t\sigma_t^{\top}\Delta t$, and if $\log G_{t'}(y)=c^{\top}y+d$ then completing the square gives
\begin{align}
    G_{t'}(y)\,\Kb_{t'|t}(\dd y\mid x)
    &\propto\exp\Big(c^{\top}y-\tfrac12(y-\mu(x))^{\top}\Sigma^{-1}(y-\mu(x))\Big)\,\dd y\nonumber\\
    &\propto\exp\Big(-\tfrac12\big(y-\mu(x)-\Sigma c\big)^{\top}\Sigma^{-1}\big(y-\mu(x)-\Sigma c\big)\Big)\,\dd y,
    \label{eq:app-complete-square}
\end{align}
so the twisted kernel is $N(\mu(x)+\Sigma c,\Sigma)$: the sampler step with its mean shifted by $\Sigma\nabla\log G_{t'}=\sigma_t\sigma_t^{\top}\nabla\log G_{t'}\,\Delta t$, which is the guided step. The second fact is that the rate of change of $\log G_t$ along the sampler is $\gen_tG_t/G_t$, and that it is constant in $x$ if and only if $G_t\propto G^*_t$ (\cref{lem:doob-residual}). Equivalently, $G_t$ is an eigenfunction of $\gen_t$, $\gen_tG_t=c(t)G_t$. The Doob potential is the eigenfunction with eigenvalue zero, the space-time harmonic function of the diffusion.

\subsection{Parallel Annealing: Non-Equilibrium Replica Exchange}
\label{app:bg-parallel}

\paragraph{Replicas.} A \emph{replica} is a state of $\Xsp$ carried by a level of the ladder. The sampler holds one replica per level, $x_0,\ldots,x_N$, and targets $\pi_N\otimes\cdots\otimes\pi_0$ by a Markov chain on $\Xsp^{N+1}$. A replica keeps its identity when it is exchanged between levels, so one can follow it up and down the ladder over the iterations. A fresh replica enters at the reference level $N$ when that level is redrawn from $\pi_N$, and a replica leaves the ladder when the level it occupies is redrawn.

\paragraph{The sampler.} Non-equilibrium replica exchange \citep{ballard_replica_2009,zhang_accelerated_2026} is \cref{alg:ne-re}. It proposes, for a pair of adjacent replicas $(x_{n-1},x_n)$, a downward transition $y_{n-1}\sim\Lb_{n-1|n}(\cdot\mid x_n)$ from the upper replica and an upward transition $y_n\sim\Lf_{n|n-1}(\cdot\mid x_{n-1})$ from the lower one, and replaces the pair by $(y_{n-1},y_n)$ with the probability \cref{eq:apt}. Parallel tempering \citep{swendsen_replica_1986,geyer_markov_1991,hukushima_exchange_1996} is the special case $\Lb_{n-1|n}=\Lf_{n|n-1}=\mathrm{Id}$, for which the incremental weight is $w_{n-1,n}(x_{n-1},x_n)=\pi_{n-1}(x_{n-1})/\pi_n(x_n)$ and \cref{eq:apt} reduces to the swap acceptance
\begin{equation}
    \alpha_{n-1,n}=\min\Big\{1,\ \frac{\pi_{n-1}(x_n)\,\pi_n(x_{n-1})}{\pi_{n-1}(x_{n-1})\,\pi_n(x_n)}\Big\}.
\end{equation}
The even--odd schedule attempts the pairs $(n-1,n)$ with $n\equiv m\pmod 2$ at iteration $m$. The pairs attempted in one iteration are disjoint, so the exchanges run in parallel, and consecutive iterations attempt complementary sets of pairs, which is what makes the communication non-reversible and lets a replica move several levels in one direction without turning back \citep{syed_nonreversible_2022}.

\begin{algorithm}[H]
\caption{Non-equilibrium replica exchange}
\label{alg:ne-re}
\KwIn{iterations $M$; ladder $(\pi_n)_{n=0}^N$; kernels $(\Lb_{n-1|n})$, $(\Lf_{n|n-1})$.}
Initialise one replica $x^0_n$ per level $n=0,\ldots,N$\;
\For{$m=1,\ldots,M$}{
    Draw $x^m_N\sim\pi_N$
    \tcp*[r]{refresh reference replica}
    \ForEach{$n\in\{1,\ldots,N\}$ with $n\equiv m\pmod 2$, in parallel}{
        Draw $y^m_{n-1}\sim\Lb_{n-1|n}(\dd y_{n-1}\mid x^{m-1}_n)$ and $y^m_n\sim\Lf_{n|n-1}(\dd y_n\mid x^{m-1}_{n-1})$\;
        With probability $\alpha^m_{n-1,n}:=\alpha_{n-1,n}(x^{m-1}_{n-1},x^{m-1}_n;y^m_{n-1},y^m_n)$ of \cref{eq:apt}, set $(x^m_{n-1},x^m_n)\gets(y^m_{n-1},y^m_n)$
        \tcp*[r]{exchange}
    }
    Set $x^m_n\gets x^{m-1}_n$ for every level not exchanged\;
    Optionally apply a $\pi_n$-invariant MCMC move to each $x^m_n$, in parallel over $n$\;
    Record $x^m_0$\;
}
\end{algorithm}

\paragraph{Restarts and round trips.} Following a replica through the iterations, a \emph{restart} occurs when a replica that entered at the reference level $N$ reaches level $0$. The recorded sample is then a fresh draw from the reference, transported to the target through accepted exchanges, and it is independent of the samples recorded before the previous restart. A \emph{round trip} is a restart followed by the return of the same replica to level $N$. The \emph{restart rate} $\tau$ is the long-run number of restarts per iteration \citep{katzgraber_feedbackoptimized_2006,syed_nonreversible_2022}. It is the natural measure of particle diversity for parallel annealing: between restarts the recorded samples are correlated through the replica that produced them, and the number of effectively independent samples after $M$ iterations is of order $\tau M$.

\paragraph{Estimators.} In the notation of \cref{alg:ne-re}, $x^m_0$ is the level-$0$ state recorded at iteration $m$ and $(y^m_{n-1},x^{m-1}_n)$ the downward pair proposed at iteration $m$ at level $n$. The recorded states estimate expectations under the target by the Monte Carlo average, and the incremental weights of the downward proposals estimate the ratio of normalising constants $Z:=\mathcal Z_0/\mathcal Z_N$ level by level:
\begin{equation}
    \widehat\pi_0^{\mathrm{RE}}[f]:=\frac{1}{M}\sum_{m=1}^{M}f(x^m_0),
    \qquad
    \widehat Z^{\mathrm{RE}}:=\prod_{n=1}^{N}\Big(\frac{1}{M}\sum_{m=1}^{M}w_{n-1,n}(y^m_{n-1},x^{m-1}_n)\Big).
    \label{eq:app-pt-estimators}
\end{equation}
The second is the stepping-stone estimator \citep{xie_improving_2011}: at stationarity the downward pair has law $\pi_n\otimes\Lb_{n-1|n}$, and the incremental weight \cref{eq:generic-weight} has expectation $\mathcal Z_{n-1}/\mathcal Z_n$ under that law, so each factor is consistent for one ratio of adjacent normalising constants and the product is consistent for $Z$ as $M\to\infty$.

\paragraph{The rejection rate.} At stationarity the pair $(x^{m-1}_{n-1},y^m_n)$ proposed upwards has law $\pi_{n-1}\otimes\Lf_{n|n-1}$ and the pair $(y^m_{n-1},x^{m-1}_n)$ proposed downwards has law $\pi_n\otimes\Lb_{n-1|n}$, and \citet[Theorem~1]{zhang_accelerated_2026} show that
\begin{equation}
    r_{n-1,n}=r\big(\pi_{n-1}\otimes\Lf_{n|n-1},\ \pi_n\otimes\Lb_{n-1|n}\big),
    \qquad r(p,q):=\TV(p\otimes q,\,q\otimes p),
\end{equation}
which is \cref{eq:apt-rejection}. Here $r(p,q)$ is the rejection rate of a swap between two independent replicas with laws $p$ and $q$ on a common space. It is symmetric in $(p,q)$, vanishes if and only if $p=q$, and satisfies $\TV(p,q)\le r(p,q)\le2\,\TV(p,q)$. \cref{lem:swap-tv} below records these facts, since they are used in the proof of \cref{thm:exact}.

\paragraph{Schedule and number of levels.} How $\tau$ depends on the schedule and on $N$, the global communication barrier $\Lambda$ that the summed rejection rates converge to as the schedule is refined, the equi-rejection schedule, and the recommendation $N\approx2\Lambda$ quoted in \cref{sec:bg-mc} are developed in \citet{syed_nonreversible_2022}. \cref{app:barriers} computes $\Lambda$ for the ladder of \cref{sec:exact} on a diffusion.

\subsection{Sequential Annealing: Sequential Monte Carlo Samplers}
\label{app:bg-sequential}

Sequential Monte Carlo samplers \citep{delmoral_sequential_2006} are \cref{alg:smc}. Particle $m$ at level $n$ is a state $x^m_n$ with weight $w^m_n$, and the weighted population approximates $\pi_n$ in the sense that
\begin{equation}
    \frac{\sum_{m=1}^Mw^m_n\,f(x^m_n)}{\sum_{m=1}^Mw^m_n}\ \xrightarrow[M\to\infty]{}\ \pi_n(f)
    \qquad\text{for bounded measurable }f.
\end{equation}
Without resampling, $w^m_0$ is the product of the incremental weights \cref{eq:generic-weight} along the particle's path, and $\tfrac1M\sum_mw^m_0$ is an unbiased estimator of the ratio of normalising constants of $\pi_0$ and $\pi_N$. Resampling replaces the population by a multinomial draw from the normalised weights when they degenerate, and a $\pi_{n-1}$-invariant mutation may follow to restore diversity. Forward proposals appear only inside \cref{eq:generic-weight}, which is evaluated and never simulated.

\begin{algorithm}[H]
\caption{Sequential Monte Carlo sampler}
\label{alg:smc}
\KwIn{number of particles $M$; ladder $(\pi_n)_{n=0}^N$; kernels $(\Lb_{n-1|n})$, $(\Lf_{n|n-1})$; resampling rule.}
Draw $x^{m}_N\sim\pi_N$ and set $w^{m}_N\gets1$ for $m=1,\ldots,M$\;
\For{$n=N,\ldots,1$}{
    \ForEach{$m=1,\ldots,M$ in parallel}{
        Draw $x^{m}_{n-1}\sim\Lb_{n-1|n}(\dd x_{n-1}\mid x^{m}_n)$
        \tcp*[r]{propagate}
        $w^{m}_{n-1}\gets w^{m}_n\,w_{n-1,n}(x^{m}_{n-1},x^{m}_n)$
        \tcp*[r]{reweight}
    }
    \If{the resampling rule fires}{
        Resample $x^{1:M}_{n-1}$ from the normalised weights and set $w^{1:M}_{n-1}\gets1$, optionally followed by a $\pi_{n-1}$-invariant mutation\;
    }
}
\Return the weighted population $(x^{1:M}_0,w^{1:M}_0)$, which targets $\pi_0$\;
\end{algorithm}

\paragraph{Estimators.} The weighted population at level $0$ estimates expectations under the target, and the average weights accumulated between resampling events estimate $Z=\mathcal Z_0/\mathcal Z_N$:
\begin{equation}
    \widehat\pi_0^{\mathrm{SMC}}[f]:=\frac{\sum_{m=1}^Mw^m_0\,f(x^m_0)}{\sum_{m=1}^Mw^m_0},
    \qquad
    \widehat Z^{\mathrm{SMC}}:=\prod_{k=1}^{R}\Big(\frac1M\sum_{m=1}^{M}w^{m}_{(k)}\Big),
    \label{eq:app-smc-estimators}
\end{equation}
where $R$ is the number of resampling events and $w^m_{(k)}$ is the weight of particle $m$ accumulated since the previous resampling event, so that with resampling at every level $\widehat Z^{\mathrm{SMC}}=\prod_{n=1}^N\big(\tfrac1M\sum_mw_{n-1,n}(x^m_{n-1},x^m_n)\big)$. The expectation estimator is consistent, and $\widehat Z^{\mathrm{SMC}}$ is unbiased for $Z$ for any $M$ and any adaptive resampling rule, and consistent as $M\to\infty$ \citep{delmoral_feynmankac_2004,syed_optimized_2026}.

\paragraph{Relative variance and the discrepancy.} Since $\widehat Z^{\mathrm{SMC}}$ is unbiased, its relative variance $\Var[\widehat Z^{\mathrm{SMC}}/Z]$ is the natural measure of particle diversity for sequential annealing, and it is what an annealing schedule should minimise \citep{syed_optimized_2026}. Under the performance model of that paper (independent incremental weights across levels and particles, and efficient local exploration in the sense that the weight at each level is distributed as under $\pi_n\otimes\Lb_{n-1|n}$), the relative variance depends on the schedule only through the incremental discrepancies
\begin{equation}
    \mathcal D_{n-1,n}:=\log\Big(1+\chi^2\big(\pi_{n-1}\otimes\Lf_{n|n-1}\,\big\|\,\pi_n\otimes\Lb_{n-1|n}\big)\Big)
    =\log\frac{\E[w_{n-1,n}^2]}{\E[w_{n-1,n}]^2},
    \label{eq:app-discrepancy}
\end{equation}
the R\'enyi $2$-divergence between the backward target and the forward proposal at level $n$, the expectations under $\pi_n\otimes\Lb_{n-1|n}$. Note that these are the same two joint laws whose swap rejection rate is \cref{eq:apt-rejection}. With $R$ resampling events accumulating discrepancies $\mathcal D_{(1)},\ldots,\mathcal D_{(R)}$ between them \citep[Theorem~1]{syed_optimized_2026},
\begin{equation}
    \Var\Big[\frac{\widehat Z^{\mathrm{SMC}}}{Z}\Big]=\prod_{k=1}^{R}\Big(1+\frac{e^{\mathcal D_{(k)}}-1}{M}\Big)-1,
    \qquad
    \mathcal D_{0,N}:=\sum_{n=1}^{N}\mathcal D_{n-1,n}=\sum_{k=1}^R\mathcal D_{(k)}.
    \label{eq:app-smc-variance}
\end{equation}
For a fixed number of resampling events the right side is an increasing function of the total discrepancy $\mathcal D_{0,N}$, minimised over its distribution when the $\mathcal D_{(k)}$ are equal, and for $\mathcal D_{0,N}$ small it is $\approx\mathcal D_{0,N}/M$. Optimising the annealing schedule for sequential annealing therefore means minimising $\mathcal D_{0,N}$, the sum of the R\'enyi $2$-divergences between adjacent levels. In the small-weight regime $\mathcal D_{n-1,n}=\Var(\ell_n)+O(\E|\ell_n|^3)$ with $\ell_n:=\log w_{n-1,n}$, so $\mathcal D_{0,N}$ is the summed log-weight variance to leading order. \cref{app:smc-barrier} evaluates it for the sequential control weights of FKD and TDS.

\paragraph{Computational duality.} Both samplers fill the same array $(x^m_n)$ of \cref{sec:bg-mc}, sequential annealing one level at a time in parallel over $m$ and parallel annealing one iteration at a time in parallel over $n$. The estimators \cref{eq:app-pt-estimators} and \cref{eq:app-smc-estimators} are the same functions of this array with the roles of $m$ and $n$ interchanged, and \citet[Section~6.3]{syed_optimized_2026} show that under the two performance models $\widehat Z^{\mathrm{RE}}$ and $\widehat Z^{\mathrm{SMC}}$ with resampling at every level have the same law. Consequently the schedule that minimises the variance of $\widehat Z^{\mathrm{SMC}}$ also minimises the variance of $\widehat Z^{\mathrm{RE}}$, and the same divergences between $\pi_{n-1}\otimes\Lf_{n|n-1}$ and $\pi_n\otimes\Lb_{n-1|n}$ govern both: the rejection rate \cref{eq:apt-rejection} through $r(p,q)\le\sqrt{\chi^2(q\|p)/2}$ (\cref{lem:swap-tv}), the discrepancy through $\log(1+\chi^2)$. The restart-rate barrier of parallel annealing and the variance barrier of sequential annealing scale with each other, and the schedules that optimise the two criteria are comparable \citep[Section~E.1]{syed_optimized_2026}.

\section{CREPE and the Reversal Error}
\label{app:crepe-weights}

CREPE \citep{he_crepe_2026} is \cref{alg:ne-re} on the ladder of \cref{sec:exact} with the noising kernel as forward proposal, $\Lf_{n|n-1}=\Kfh_{n|n-1}$, and with the backward weight $w_{n-1|n}$ of \cref{eq:weight-factor} used in place of the full incremental weight in the acceptance probability \cref{eq:apt}. In the notation of \cref{eq:weight-factor}, the forward weight it drops is
\begin{equation}
    w_{n|n-1}(x_{n-1},x_n)=\frac{\Kfh_{n|n-1}(x_n\mid x_{n-1})}{\Kf_{n|n-1}(x_n\mid x_{n-1})}=e^{-\delta_n(x_n,x_{n-1})},
\end{equation}
with $\delta_n$ the local reversal error \cref{eq:app-reversal-error}, and CREPE sets it to one.

\begin{proposition}
\label{prop:crepe}
The CREPE exchange at $(n-1,n)$ is $\pi_n\otimes\pi_{n-1}$-reversible if and only if $\delta_n\equiv0$, i.e.\ $\Kfh_{n|n-1}=\Kf_{n|n-1}$. Otherwise, with $x_{n-1},x_n$ the current replicas and $y_{n-1}\sim\Lb_{n-1|n}(\cdot\mid x_n)$, $y_n\sim\Kfh_{n|n-1}(\cdot\mid x_{n-1})$ the proposals, its log acceptance ratio differs from the correct one by
\begin{equation}
    \epsilon_n=\delta_n(x_n,y_{n-1})-\delta_n(y_n,x_{n-1}),
    \label{eq:crepe-error}
\end{equation}
one reversal error for each of the two pairs it exchanges. When the exchange spans several model steps, the errors of those steps add. The ENCORE ratio has $\epsilon_n\equiv0$ for every sampler.
\end{proposition}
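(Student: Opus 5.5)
The plan is to compare the CREPE acceptance ratio with the ratio that would make the exchange reversible. Take the pair of \cref{alg:ne-re} with $\Lf_{n|n-1}=\Kfh_{n|n-1}$. By the generic non-equilibrium replica exchange construction, the move with proposal density $\Lb_{n-1|n}(y_{n-1}\mid x_n)\,\Kfh_{n|n-1}(y_n\mid x_{n-1})$ satisfies detailed balance with respect to $\pi_n\otimes\pi_{n-1}$ if and only if the acceptance ratio equals
\begin{equation*}
\frac{\pi_{n-1}(y_{n-1})\pi_n(y_n)\,\Lb_{n-1|n}(x_{n-1}\mid y_n)\,\Kfh_{n|n-1}(x_n\mid y_{n-1})}{\pi_{n-1}(x_{n-1})\pi_n(x_n)\,\Lb_{n-1|n}(y_{n-1}\mid x_n)\,\Kfh_{n|n-1}(y_n\mid x_{n-1})}=\frac{w_{n-1,n}(y_{n-1},x_n)}{w_{n-1,n}(x_{n-1},y_n)},
\end{equation*}
where $w_{n-1,n}$ is the full weight \cref{eq:generic-weight}. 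By \cref{eq:weight-factor}, $w_{n-1,n}=w_{n-1|n}\,w_{n|n-1}$, and for CREPE $w_{n|n-1}=e^{-\delta_n}$. CREPE uses only the $w_{n-1|n}$ factors. Taking logarithms, the correct log-ratio minus CREPE's is $-\delta_n(x_n,y_{n-1})+\delta_n(y_n,x_{n-1})$. Up to the sign convention chosen for "differs from", this is \cref{eq:crepe-error}. Along the way I will check that the marginal factors $p_n,p_{n-1}$ all cancel through \cref{eq:time-reversal}, so that the discrepancy is only the $\delta_n$ terms.

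For the equivalence, the "if" direction is immediate: $\delta_n\equiv0$ gives $\epsilon_n\equiv0$, so CREPE's ratio is the correct Metropolis--Hastings ratio and the exchange is reversible. The "only if" direction is the main obstacle. An MH-type kernel with ratio $R'$ is reversible only if $\min\{1,R'\}\,q=\min\{1,1/R'\}\,q^{\mathrm{rev}}$ almost everywhere on the proposal support. Comparing this with the correct ratio $R$ forces $R'=R$ a.e., so $\epsilon_n=0$ for almost every $(x_{n-1},x_n,y_{n-1},y_n)$. Since $\epsilon_n$ is a difference of functions of the independent pairs $(x_n,y_{n-1})$ and $(x_{n-1},y_n)$, it vanishes a.e. only if $\delta_n$ equals a constant $c$ a.e. Because $\Kfh_{n|n-1}$ and $\Kf_{n|n-1}$ are both probability kernels, $\int \Kf\,e^{-c}=1$ gives $c=0$. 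I will assume the proposals have full support so that "a.e." is meaningful; this positivity condition is where I expect some care to be needed.

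For the multi-step claim, I will write $\Kf$ and $\Kfh$ over several model steps as compositions. Assuming the path-level noising kernel is the product of the per-step kernels, the log-ratio of the path densities is the sum of the per-step $\delta$'s, and these enter $\epsilon_n$ additively. Finally, for ENCORE, \cref{thm:exact}(a) together with the path-exchange derivation shows that the implicit forward proposal is exactly $\Kf_{n|n-1}$. Hence $w_{n|n-1}\equiv1$, and \cref{eq:guided-acceptance} is the correct ratio with $\epsilon_n\equiv0$ for any sampler.
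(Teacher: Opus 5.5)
Your proposal is correct and follows essentially the same route as the paper: factor the full weight as $w_{n-1|n}\,e^{-\delta_n}$, observe that CREPE drops the forward factor so that its ratio is the correct one times $e^{\epsilon_n}$, and get the converse from $\epsilon_n=0$ a.e.\ forcing $\delta_n$ to a constant that normalisation sets to zero. The multi-step claim then follows by summing the per-step log-ratios along the intermediate path, and ENCORE has no forward factor; for this last point the paper cites \cref{prop:ratio}. Your detailed-balance argument for the converse is actually more careful than the paper's, since it uses the independence of the two exchanged pairs and states the full-support caveat, whereas the paper only asserts that a tilted Metropolis--Hastings kernel fails to preserve $\pi_n\otimes\pi_{n-1}$.
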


\begin{proof}
Write $w^{\widehat P}_{n-1,n}$ for the incremental weight \cref{eq:generic-weight} with forward proposal $\Kfh_{n|n-1}$, and $A^{\star}_{n-1,n}$ for the Metropolis--Hastings ratio of the move for the target $\pi_n\otimes\pi_{n-1}$.

\emph{The correct ratio.} By \cref{eq:apt},
\begin{equation}
    A^{\star}_{n-1,n}=\frac{w^{\widehat P}_{n-1,n}(y_{n-1},x_n)}{w^{\widehat P}_{n-1,n}(x_{n-1},y_n)}.
\end{equation}

\emph{The ratio CREPE uses.} By \cref{eq:weight-factor} with $\Lf=\Kfh$,
\begin{align}
    w^{\widehat P}_{n-1,n}(x_{n-1},x_n)
    &=w_{n-1|n}(x_{n-1},x_n)\,\frac{\Kfh_{n|n-1}(x_n\mid x_{n-1})}{\Kf_{n|n-1}(x_n\mid x_{n-1})}\\
    &=w_{n-1|n}(x_{n-1},x_n)\,e^{-\delta_n(x_n,x_{n-1})}.
\end{align}
CREPE replaces each $w^{\widehat P}_{n-1,n}$ by $w_{n-1|n}$, so
\begin{align}
    A^{\mathrm{CREPE}}_{n-1,n}
    &=\frac{w_{n-1|n}(y_{n-1},x_n)}{w_{n-1|n}(x_{n-1},y_n)}\\
    &=\frac{w^{\widehat P}_{n-1,n}(y_{n-1},x_n)\,e^{\delta_n(x_n,y_{n-1})}}{w^{\widehat P}_{n-1,n}(x_{n-1},y_n)\,e^{\delta_n(y_n,x_{n-1})}}\\
    &=A^{\star}_{n-1,n}\,\exp\big\{\delta_n(x_n,y_{n-1})-\delta_n(y_n,x_{n-1})\big\}\\
    &=A^{\star}_{n-1,n}\,e^{\epsilon_n}.
\end{align}

\emph{Reversibility.} If $\delta_n\equiv0$ then $A^{\mathrm{CREPE}}=A^{\star}$ and the exchange is a Metropolis--Hastings kernel for $\pi_n\otimes\pi_{n-1}$, hence reversible. Conversely, if $\epsilon_n=0$ for almost every configuration then $\delta_n$ is constant, $\Kf_{n|n-1}=e^{c}\Kfh_{n|n-1}$, and $c=0$ because both are probability kernels. If $\epsilon_n\not\equiv0$, the accepted moves are those of a Metropolis--Hastings kernel with a tilted ratio, which does not preserve $\pi_n\otimes\pi_{n-1}$.

\emph{Compounding.} When the exchange spans $K$ model steps and the substitution is made step by step along the simulated intermediate path, as CREPE does, the reversal error of the composite move is the sum of the per-step errors along that path,
\begin{equation}
    \delta_n=\sum_{k=1}^K\delta^{(k)}.
\end{equation}
These do not cancel in general, so $|\epsilon_n|$ grows with $K$, and $\min\{1,e^{\epsilon_n}A^{\star}\}$ is driven towards zero across a coarse interval.

\emph{ENCORE.} There is no forward proposal: by \cref{prop:ratio} the ENCORE ratio is exactly the ratio of two values of $w_{n-1|n}$, and it is the correct Metropolis--Hastings ratio for $\bpi_n\otimes\bpi_{n-1}$. Hence $\epsilon_n\equiv0$.
\end{proof}

\section{ENCORE: Derivations and Proofs}
\label{app:derivations}

Throughout, in an exchange between levels $n$ and $n-1$, $\bx_n$ is the upper replica, $\bx_{n-1}=(\by_n,x_{n-1})$ the lower one with $\by_n=\trunc_{n|n-1}(\bx_{n-1})$ its truncation, $y_{n-1}\sim\Lb_{n-1|n}(\cdot\mid x_n)$ the proposed extension and $\by_{n-1}=(\bx_n,y_{n-1})$, as in \cref{sec:ladder}. $\trunc_{n|n-1}:\Traj{n-1}\to\Traj{n}$ drops the last state. The backward proposal $\Lb_{n-1|n}$ is fixed, and $w_{n-1|n}$ is the backward weight of \cref{eq:weight-factor},
\begin{equation}
    w_{n-1|n}(x_{n-1},x_n)=\frac{G_{n-1}(x_{n-1})}{G_n(x_n)}\,\frac{\Kb_{n-1|n}(x_{n-1}\mid x_n)}{\Lb_{n-1|n}(x_{n-1}\mid x_n)}.
\end{equation}
For a law $\bpi_n$ on $\Traj{n}$ and a kernel $\Lb_{n-1|n}$, the product $\bpi_n\otimes\Lb_{n-1|n}$ is the law on $\Traj{n-1}$ of ``draw $\bx_n\sim\bpi_n$, then $x_{n-1}\sim\Lb_{n-1|n}(\dd x_{n-1}\mid x_n)$''. Primes denote independent copies. Where densities are assumed to exist they are denoted by the same symbol as the measure. \cref{app:general-kernels} removes the assumption.

\subsection{The Path Exchange}
\label{app:path-exchange}

The exchange compares two laws on $\Traj{n-1}$: the extended upper replica $\by_{n-1}$, which has law $\bpi_n\otimes\Lb_{n-1|n}$, and the lower replica $\bx_{n-1}$, which has law $\bpi_{n-1}$. The first lemma is the statement in \cref{sec:ladder} that their ratio is the backward weight.

\begin{lemma}[Size-biasing identity]
\label{lem:sb}
Let $\mathcal Z_n:=\E_{\bp_n}[G_n(X_n)]$ be the normalising constant of $\bpi_n$. Then
\begin{equation}
    \bpi_{n-1}(\dd\bx_{n-1})
    =\frac{\mathcal Z_n}{\mathcal Z_{n-1}}\;w_{n-1|n}(x_{n-1},x_n)\;(\bpi_n\otimes\Lb_{n-1|n})(\dd\bx_{n-1}).
    \label{eq:app-sb}
\end{equation}
Consequently, writing $W_{n-1|n}:=w_{n-1|n}(x_{n-1},x_n)$ for the weight of a draw from $\bpi_n\otimes\Lb_{n-1|n}$,
\begin{equation}
    \E[W_{n-1|n}]=\frac{\mathcal Z_{n-1}}{\mathcal Z_n}
    \qquad\text{and}\qquad
    \frac{\dd\bpi_{n-1}}{\dd(\bpi_n\otimes\Lb_{n-1|n})}(\bx_{n-1})=\frac{w_{n-1|n}(x_{n-1},x_n)}{\E[W_{n-1|n}]}.
    \label{eq:app-sb-rn}
\end{equation}
\end{lemma}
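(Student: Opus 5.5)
The plan is to verify \cref{eq:app-sb} directly as an identity of measures on $\Traj{n-1}$ by unfolding both sides with \cref{eq:path-law}, and then to read off the two consequences by integrating.

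First, write the left side. By \cref{eq:trajectory-target} and \cref{eq:path-law},
\[
\bpi_{n-1}(\dd\bx_{n-1})=\frac{G_{n-1}(x_{n-1})}{\mathcal Z_{n-1}}\,\bp_n(\dd\bx_n)\,\Kb_{n-1|n}(\dd x_{n-1}\mid x_n),
\]
where $\bx_{n-1}=(\bx_n,x_{n-1})$. Second, write the reference measure on the right as
\[
(\bpi_n\otimes\Lb_{n-1|n})(\dd\bx_{n-1})=\frac{G_n(x_n)}{\mathcal Z_n}\,\bp_n(\dd\bx_n)\,\Lb_{n-1|n}(\dd x_{n-1}\mid x_n).
\]
Third, multiply this by $\frac{\mathcal Z_n}{\mathcal Z_{n-1}}w_{n-1|n}(x_{n-1},x_n)$. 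The factor $G_n(x_n)$ and the constant $\mathcal Z_n$ cancel. The kernel ratio $\Kb/\Lb$ converts $\Lb_{n-1|n}(\dd x_{n-1}\mid x_n)$ into $\Kb_{n-1|n}(\dd x_{n-1}\mid x_n)$. What remains is exactly the left side. Every factor of $\bp_n$ appears identically on both sides and is never evaluated, which is the point of the lifting.

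The only step needing care is the kernel-ratio conversion. It is justified by the standing assumption $\Kb_{n-1|n}(\cdot\mid x_n)\ll\Lb_{n-1|n}(\cdot\mid x_n)$, which makes $\Kb/\Lb$ a Radon--Nikodym derivative. One must also check that this derivative can be chosen jointly measurable in $(x_{n-1},x_n)$, so that the product formula holds as measures on $\Traj{n-1}$ rather than pointwise in $x_n$. On Euclidean spaces this follows from the standard measurable-version result for densities of kernels; when both kernels have densities with respect to Lebesgue measure it is immediate. One should also note that $G_n(x_n)>0$ $\bpi_n$-a.s., so dividing by $G_n$ is harmless: the set where $G_n=0$ is null for the reference measure. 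I expect this measure-theoretic bookkeeping to be the only real obstacle. The general-kernel version is deferred to \cref{app:general-kernels}.

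Finally, integrate \cref{eq:app-sb} over $\Traj{n-1}$. Since $\bpi_{n-1}$ is a probability measure, this gives $1=\frac{\mathcal Z_n}{\mathcal Z_{n-1}}\E[W_{n-1|n}]$, which is the first claim of \cref{eq:app-sb-rn}. Substituting $\mathcal Z_n/\mathcal Z_{n-1}=1/\E[W_{n-1|n}]$ back into \cref{eq:app-sb} shows that $w_{n-1|n}/\E[W_{n-1|n}]$ is a density of $\bpi_{n-1}$ with respect to $\bpi_n\otimes\Lb_{n-1|n}$, which is the second claim, holding almost surely.
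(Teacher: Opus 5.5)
Your proof is the paper's argument read in the opposite direction. The paper expands $\bpi_{n-1}$ and substitutes $\bp_n=\mathcal Z_n\bpi_n/G_n$, while you expand $\bpi_n\otimes\Lb_{n-1|n}$ and multiply by the weight; otherwise it is the same cancellation of $\bp_n$, the same use of $\dd\Kb_{n-1|n}/\dd\Lb_{n-1|n}$, and the same integration giving \cref{eq:app-sb-rn}.

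One small correction to your side remark. That $\{G_n(x_n)=0\}$ is null for $\bpi_n\otimes\Lb_{n-1|n}$ only makes $w_{n-1|n}$ well defined there. \cref{eq:app-sb} needs that set to be null for $\bpi_{n-1}$ as well (e.g.\ $G_n>0$), which the paper likewise assumes implicitly when it divides by $G_n$.
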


\begin{proof}
By the definition \cref{eq:trajectory-target} of $\bpi_{n-1}$ and the path law \cref{eq:path-law},
\begin{align*}
    \bpi_{n-1}(\dd\bx_{n-1})
    &=\frac{G_{n-1}(x_{n-1})}{\mathcal Z_{n-1}}\;\bp_{n-1}(\dd\bx_{n-1})\\
    &=\frac{G_{n-1}(x_{n-1})}{\mathcal Z_{n-1}}\;\bp_n(\dd\bx_n)\,\Kb_{n-1|n}(\dd x_{n-1}\mid x_n).
\end{align*}
Substitute $\bp_n=\mathcal Z_n\bpi_n/G_n$, which is \cref{eq:trajectory-target} at level $n$, and write $\Kb_{n-1|n}(\dd x_{n-1}\mid x_n)$ as its density with respect to $\Lb_{n-1|n}(\dd x_{n-1}\mid x_n)$, which exists because $\Kb_{n-1|n}(\cdot\mid x_n)\ll\Lb_{n-1|n}(\cdot\mid x_n)$:
\begin{align*}
    \bpi_{n-1}(\dd\bx_{n-1})
    &=\frac{G_{n-1}(x_{n-1})}{\mathcal Z_{n-1}}\;\frac{\mathcal Z_n\,\bpi_n(\dd\bx_n)}{G_n(x_n)}\;
      \frac{\Kb_{n-1|n}(x_{n-1}\mid x_n)}{\Lb_{n-1|n}(x_{n-1}\mid x_n)}\;\Lb_{n-1|n}(\dd x_{n-1}\mid x_n)\\
    &=\frac{\mathcal Z_n}{\mathcal Z_{n-1}}\;w_{n-1|n}(x_{n-1},x_n)\;\bpi_n(\dd\bx_n)\,\Lb_{n-1|n}(\dd x_{n-1}\mid x_n).
\end{align*}
Every factor of $\bp_n$ has cancelled, which is why no marginal appears in the weight. Integrating both sides over $\Traj{n-1}$ gives $1=(\mathcal Z_n/\mathcal Z_{n-1})\E[W_{n-1|n}]$, which is the first statement of \cref{eq:app-sb-rn}, and dividing \cref{eq:app-sb} by $\bpi_n\otimes\Lb_{n-1|n}$ gives the second.
\end{proof}

\begin{proposition}[Acceptance ratio]
\label{prop:ratio}
The Metropolis--Hastings ratio of the path exchange $(\bx_{n-1},\bx_n)\mapsto(\by_{n-1},\by_n)$ of \cref{sec:ladder} for the target $\bpi_n\otimes\bpi_{n-1}$ is
\begin{equation}
    A_{n-1,n}=\frac{w_{n-1|n}(y_{n-1},x_n)}{w_{n-1|n}(x_{n-1},y_n)},
    \label{eq:app-ratio}
\end{equation}
so that $\boldsymbol\alpha_{n-1,n}=\min\{1,A_{n-1,n}\}$ is \cref{eq:guided-acceptance}.
\end{proposition}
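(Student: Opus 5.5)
We treat the path exchange as a reversible-jump move on $\Traj{n-1}\times\Traj{n}$ and compute its Metropolis--Hastings ratio as the ratio of the target-times-proposal measure of the forward move to that of its reverse, using \cref{lem:sb} to rewrite every law on $\Traj{n-1}$ in terms of $\bpi_n\otimes\Lb_{n-1|n}$. The forward move takes $(\bx_{n-1},\bx_n)$ with $\bx_{n-1}=(\by_n,x_{n-1})$, draws $y_{n-1}\sim\Lb_{n-1|n}(\cdot\mid x_n)$, and outputs $(\by_{n-1},\by_n)=((\bx_n,y_{n-1}),\by_n)$. The reverse move applied to $(\by_{n-1},\by_n)$ truncates $\by_{n-1}$ back to $\bx_n$ and extends $\by_n$ by a fresh draw $x_{n-1}\sim\Lb_{n-1|n}(\cdot\mid y_n)$, recovering $(\bx_{n-1},\bx_n)$. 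The map is therefore an involution on the augmented space of (current pair, auxiliary draw): $(\by_n,x_{n-1},\bx_n,y_{n-1})\leftrightarrow(\bx_n,y_{n-1},\by_n,x_{n-1})$. It is a pure coordinate relabelling, so its Jacobian is one.

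The joint measure of the forward move is
\[
\bpi_{n-1}(\dd\bx_{n-1})\,\bpi_n(\dd\bx_n)\,\Lb_{n-1|n}(\dd y_{n-1}\mid x_n).
\]
By \cref{lem:sb},
\[
\bpi_{n-1}(\dd\bx_{n-1})=\tfrac{\mathcal Z_n}{\mathcal Z_{n-1}}\,w_{n-1|n}(x_{n-1},y_n)\,\bpi_n(\dd\by_n)\,\Lb_{n-1|n}(\dd x_{n-1}\mid y_n).
\]
Substituting, the forward measure becomes
\[
\tfrac{\mathcal Z_n}{\mathcal Z_{n-1}}\,w_{n-1|n}(x_{n-1},y_n)\;\bpi_n(\dd\by_n)\,\Lb_{n-1|n}(\dd x_{n-1}\mid y_n)\;\bpi_n(\dd\bx_n)\,\Lb_{n-1|n}(\dd y_{n-1}\mid x_n).
\]
Applying the same identity to the reverse move gives
\[
\tfrac{\mathcal Z_n}{\mathcal Z_{n-1}}\,w_{n-1|n}(y_{n-1},x_n)\;\bpi_n(\dd\bx_n)\,\Lb_{n-1|n}(\dd y_{n-1}\mid x_n)\;\bpi_n(\dd\by_n)\,\Lb_{n-1|n}(\dd x_{n-1}\mid y_n).
\]
The reference measure
\[
\bpi_n(\dd\bx_n)\,\Lb_{n-1|n}(\dd y_{n-1}\mid x_n)\,\bpi_n(\dd\by_n)\,\Lb_{n-1|n}(\dd x_{n-1}\mid y_n)
\]
is symmetric under the involution, and the constants cancel. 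The Radon--Nikodym derivative of the reverse measure with respect to the forward one is therefore
\[
\frac{w_{n-1|n}(y_{n-1},x_n)}{w_{n-1|n}(x_{n-1},y_n)}=A_{n-1,n},
\]
which is \cref{eq:app-ratio}. By the standard reversible-jump/MH argument (Green's theorem for involutive proposals), accepting with $\min\{1,A_{n-1,n}\}$ makes the exchange $\bpi_{n-1}\otimes\bpi_n$-reversible. So $\boldsymbol\alpha_{n-1,n}$ coincides with \cref{eq:guided-acceptance}.

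The main obstacle is the measure-theoretic bookkeeping rather than any computation. The two sides must be expressed against a common dominating measure on the augmented space, and the "density" $\Kb/\Lb$ needs the absolute continuity assumption $\Kb_{n-1|n}(\cdot\mid x_n)\ll\Lb_{n-1|n}(\cdot\mid x_n)$. The step that needs the most care is the one where \cref{lem:sb} is invoked with $\bx_{n-1}$ decomposed as $(\by_n,x_{n-1})$. There we must check that the symmetric reference measure is genuinely invariant under the coordinate swap, so that no hidden Jacobian or marginal factor survives. That this holds is exactly the cancellation of $\bp_n$ noted in \cref{lem:sb}.
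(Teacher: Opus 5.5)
Your proof is correct and follows the paper's argument. You treat the exchange as a reversible-jump move whose dimension-matching map is a coordinate permutation (Jacobian one), you regroup the target and proposal factors into two instances of $\dd\bpi_{n-1}/\dd(\bpi_n\otimes\Lb_{n-1|n})$, and you apply \cref{lem:sb} to each so that the constants $\mathcal Z_n/\mathcal Z_{n-1}$ cancel. Your phrasing at the level of measures, with the swap-symmetric reference measure $(\bpi_n\otimes\Lb_{n-1|n})^{\otimes 2}$, is a slightly cleaner form of the paper's density computation, and it directly covers the kernels-without-densities case that the paper handles in a separate remark.
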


\begin{proof}
The move is a reversible-jump exchange \citep{green_reversible_1995}. The forward proposal draws $y_{n-1}\sim\Lb_{n-1|n}(\dd y_{n-1}\mid x_n)$ and maps
\begin{equation*}
    \big(\bx_n,\ \bx_{n-1}\big)=\big(\bx_n,\ (\by_n,x_{n-1})\big)\ \longmapsto\ \big(\by_n,\ (\bx_n,y_{n-1})\big)=\big(\by_n,\ \by_{n-1}\big);
\end{equation*}
the reverse proposal draws $x_{n-1}\sim\Lb_{n-1|n}(\dd x_{n-1}\mid y_n)$ and maps back. On the augmented space $\Traj{n}\times\Traj{n-1}\times\Xsp$ the dimension-matching map is a permutation of coordinates, so its Jacobian is one, and the ratio is the target density at the proposed pair times the reverse proposal density, over the target density at the current pair times the forward proposal density:
\begin{align}
    A_{n-1,n}
    &=\frac{\bpi_n(\by_n)\;\bpi_{n-1}(\bx_n,y_{n-1})\;\Lb_{n-1|n}(x_{n-1}\mid y_n)}
           {\bpi_n(\bx_n)\;\bpi_{n-1}(\by_n,x_{n-1})\;\Lb_{n-1|n}(y_{n-1}\mid x_n)}
    \label{eq:app-mh1}\\[0.4em]
    &=\frac{\bpi_{n-1}(\bx_n,y_{n-1})\;\big/\;\big[\bpi_n(\bx_n)\,\Lb_{n-1|n}(y_{n-1}\mid x_n)\big]}
           {\bpi_{n-1}(\by_n,x_{n-1})\;\big/\;\big[\bpi_n(\by_n)\,\Lb_{n-1|n}(x_{n-1}\mid y_n)\big]}
    \label{eq:app-mh2}\\[0.4em]
    &=\frac{(\mathcal Z_n/\mathcal Z_{n-1})\;w_{n-1|n}(y_{n-1},x_n)}{(\mathcal Z_n/\mathcal Z_{n-1})\;w_{n-1|n}(x_{n-1},y_n)}\\
    &=\frac{w_{n-1|n}(y_{n-1},x_n)}{w_{n-1|n}(x_{n-1},y_n)}.
\end{align}
The second line regroups the six factors into two densities of $\bpi_{n-1}$ with respect to $\bpi_n\otimes\Lb_{n-1|n}$, one at the proposed pair $(\bx_n,y_{n-1})$ and one at the current pair $(\by_n,x_{n-1})$. The third applies \cref{lem:sb} to each, and the constant cancels.
\end{proof}

\paragraph{Unguided exchange.}
\label{app:unguided-comm}
For $\Lb_{n-1|n}=\Kb_{n-1|n}$ the proposal correction in $w_{n-1|n}$ is one and the ratio reduces to
\begin{equation}
    A_{n-1,n}=\frac{G_n(y_n)\,G_{n-1}(y_{n-1})}{G_n(x_n)\,G_{n-1}(x_{n-1})},
\end{equation}
the potentials of the two proposed states over the potentials of the two current ones, with no model density.

\paragraph{Kernels without densities.}
\label{app:general-kernels}
The proof used the densities of $\Kb$ and $\Lb$ only through the Radon--Nikodym derivative $\dd\Kb_{n-1|n}(\cdot\mid x_n)/\dd\Lb_{n-1|n}(\cdot\mid x_n)$, which exists under the assumption $\Kb_{n-1|n}(\cdot\mid x_n)\ll\Lb_{n-1|n}(\cdot\mid x_n)$ of \cref{sec:exact}, so $w_{n-1|n}$, \cref{lem:sb} and \cref{prop:ratio} are defined for kernels without densities. For $\Lb_{n-1|n}=\Kb_{n-1|n}$ the derivative equals one almost surely, so the unguided ratio holds even when $\Kb_{n-1|n}$ is deterministic, as for a distilled or ODE sampler.

\paragraph{Path-dependent potentials and proposals.}
\label{app:smc-connection}
Nothing above requires the potentials or proposals to depend on the last state only. With $G_n:\Traj{n}\to[0,\infty)$ and $\Lb_{n-1|n}(\dd z\mid\bx_n)$, the weight becomes
\begin{equation}
    w_{n-1|n}(\bx_n,y_{n-1})=\frac{G_{n-1}(\bx_n,y_{n-1})}{G_n(\bx_n)}\,\frac{\Kb_{n-1|n}(y_{n-1}\mid x_n)}{\Lb_{n-1|n}(y_{n-1}\mid\bx_n)},
\end{equation}
and \cref{lem:sb}, \cref{prop:ratio} and \cref{thm:exact}(a),(b) hold verbatim with $G_n(\bx_n)$ in place of $G_n(x_n)$. This covers rewards on whole trajectories, at the cost that the exchange then reads the stored history rather than its last state. The same weight is the incremental weight of an SMC sampler on the trajectory ladder: extending a particle $\bx_n$ with $\Lb_{n-1|n}$ and weighting by $\dd\bpi_{n-1}/\dd(\bpi_n\otimes\Lb_{n-1|n})$ gives $w_{n-1|n}(y_{n-1},x_n)$ up to a constant by \cref{lem:sb}. SMC uses $w_{n-1|n}$ to reweight a population. ENCORE uses the ratio of two evaluations of it.

\subsection{Proof of \texorpdfstring{\cref{thm:exact}}{Theorem 3.1}}
\label{app:thm-exact}

The three parts are proved in turn. Part (a) is invariance and the rejection-rate identity, part (b) is uniform ergodicity under bounded weights, and part (c) is the characterisation of the maximal restart rate.

\subsubsection{Part (a): invariance and the rejection rate}
\label{app:product-invariance}
\label{app:rejection}

\begin{proof}[Proof of invariance]
The exchange at $(n-1,n)$ is a Metropolis--Hastings kernel with the ratio of \cref{prop:ratio}, hence reversible with respect to $\bpi_n\otimes\bpi_{n-1}$. Extended by the identity on the other levels, it leaves $\prodpi$ invariant. Since $G_N\equiv1$, $\bpi_N=\eta$, so redrawing $x_N\sim\eta$ leaves the level-$N$ marginal invariant, and any $\bpi_n$-invariant local move leaves $\prodpi$ invariant. Invariance is preserved under composition, so one iteration of \cref{alg:encore} leaves $\prodpi$ invariant. The $\bx_0$-marginal of $\prodpi$ is $\bpi_0\propto G_0(x_0)\,\bp_0$, whose $x_0$-marginal is
\begin{equation*}
    \int_{\bx_0:\ \text{last state }x_0}\bpi_0(\dd\bx_0)\ \propto\ G_0(x_0)\int_{\bx_0:\ \text{last state }x_0}\bp_0(\dd\bx_0)=G_0(x_0)\,p_0(\dd x_0),
\end{equation*}
which is $\pi_0$. Hence the recorded state has stationary law $\pi_0$.
\end{proof}

The rejection rate is computed in three steps. The path exchange is a swap between two laws on $\Traj{n-1}$. The rejection rate of a swap is a total variation. And the total variation on trajectory space equals the total variation of the projections onto the last two states, which are the two laws in \cref{eq:rejection-rate}. Fix $n$ and abbreviate
\begin{equation}
    p:=\bpi_n\otimes\Lb_{n-1|n}\quad\text{(law of the extended upper replica)},
    \qquad
    q:=\bpi_{n-1}\quad\text{(law of the lower replica)},
    \label{eq:app-pq}
\end{equation}
both on $\Traj{n-1}$. Let $W_{n-1|n}:=w_{n-1|n}(x_{n-1},x_n)$ under $p$, let $W'_{n-1|n}$ be an independent copy, and let $\bar W_{n-1|n}:=W_{n-1|n}/\E[W_{n-1|n}]$, so that $\dd q/\dd p=\bar W_{n-1|n}$ by \cref{lem:sb}.

\begin{lemma}[Rejection rate of a swap]
\label{lem:swap-tv}
Let $p,q$ be laws on a common space, $u\sim p$ and $v\sim q$ independent, and consider the swap $(u,v)\mapsto(v,u)$ accepted with the Metropolis--Hastings probability $\min\{1,(\dd q/\dd p)(u)/(\dd q/\dd p)(v)\}$ for the target $p\otimes q$. Its rejection rate is
\begin{equation}
    r(p,q)=\TV(p\otimes q,\,q\otimes p)=\tfrac12\,\E\big|\bar W(u)-\bar W(u')\big|,
    \qquad \bar W:=\frac{\dd q}{\dd p},\quad u,u'\sim p\ \text{i.i.d.},
    \label{eq:app-swap}
\end{equation}
and it satisfies
\begin{equation}
    \TV(p,q)\ \le\ r(p,q)\ \le\ \min\Big\{2\,\TV(p,q),\ \sqrt{\Var(\bar W)/2}\Big\}.
    \label{eq:app-swap-bounds}
\end{equation}
\end{lemma}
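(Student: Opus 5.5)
The plan is to compute the rejection rate directly from the Metropolis--Hastings acceptance and then derive the bounds from elementary inequalities on $\bar W$. Assume first $q\ll p$, so that $\bar W=\dd q/\dd p$ exists. The swap $(u,v)\mapsto(v,u)$ has acceptance $\min\{1,\bar W(u)/\bar W(v)\}$. Under $q(\dd v)=\bar W(v)\,p(\dd v)$, the expected acceptance is
\begin{equation*}
\E_{p\otimes p}\big[\bar W(u')\min\{1,\bar W(u)/\bar W(u')\}\big]=\E\min\{\bar W(u),\bar W(u')\},
\end{equation*}
with $u,u'\sim p$ i.i.d. Since $\min\{a,b\}=\tfrac12(a+b)-\tfrac12|a-b|$ and $\E\bar W=1$, this gives $1-r=1-\tfrac12\E|\bar W(u)-\bar W(u')|$, which is the second equality in \cref{eq:app-swap}.

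For the first equality, the density of $q\otimes p$ with respect to $p\otimes p$ at $(u,u')$ is $\bar W(u)$, and that of $p\otimes q$ is $\bar W(u')$. Their total variation is therefore $\tfrac12\E|\bar W(u)-\bar W(u')|$, matching the expression above. If $q\not\ll p$, I would work with densities with respect to $\mu=p+q$. The same computation gives acceptance $\int\min\{p(u)q(v),p(v)q(u)\}\,\dd\mu^{\otimes2}$, which equals $1-\TV(p\otimes q,q\otimes p)$ by $\int\min\{f,g\}=1-\TV$. So the identity with $\TV$ holds in general, and the $\bar W$ form holds when $q\ll p$. The bounds also go through via $\mu$-densities.

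The bounds then follow from \cref{eq:app-swap}:
\begin{itemize}
\item \emph{Upper bound by $2\,\TV$.} Apply the triangle inequality: $|\bar W(u)-\bar W(u')|\le|\bar W(u)-1|+|\bar W(u')-1|$, and $\tfrac12\E|\bar W-1|=\TV(p,q)$. In the general case use the $\TV$ triangle inequality through $p\otimes p$.
\item \emph{Variance bound.} By Jensen, $\E|\bar W(u)-\bar W(u')|\le\sqrt{\E(\bar W(u)-\bar W(u'))^2}=\sqrt{2\Var\bar W}$, hence $r\le\tfrac12\sqrt{2\Var\bar W}=\sqrt{\Var(\bar W)/2}$.
\item \emph{Lower bound.} Condition on $u$ and apply Jensen to the convex map $x\mapsto|x-c|$: $\E[|\bar W(u)-\bar W(u')|\mid u]\ge|\bar W(u)-\E\bar W(u')|=|\bar W(u)-1|$. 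Taking expectations gives $r\ge\tfrac12\E|\bar W-1|=\TV(p,q)$. More generally, marginalising both product laws onto the first coordinate gives $\TV(q,p)\le\TV(q\otimes p,p\otimes q)$, because total variation does not increase under a measurable map.
\end{itemize}

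The main subtlety is the absolute-continuity assumption in the $\bar W$ representation. In our application \cref{lem:sb} supplies $\dd q/\dd p$, so the assumption holds. Everything else is routine.
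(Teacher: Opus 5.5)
Your proposal is correct and follows essentially the same route as the paper: in both, the acceptance probability becomes an integral of a minimum of two product densities. You reach the $\bar W$ form first via $\min\{a,b\}=\tfrac12(a+b)-\tfrac12|a-b|$ and then identify it with $\TV(p\otimes q,q\otimes p)$, whereas the paper applies $\int\min\{f,g\}=1-\TV(f,g)$ first and then substitutes $q=\bar W p$; the three bounds use the same conditional Jensen, triangle-inequality and Cauchy--Schwarz steps. Your extra remarks (the extension to $q\not\ll p$ via densities with respect to $\mu=p+q$, and the data-processing proof of the lower bound) are valid but not needed, since \cref{lem:sb} guarantees $q\ll p$ wherever the lemma is applied.
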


\begin{proof}
\emph{The rejection rate is a total variation.} The acceptance probability is an expectation of a minimum, which becomes an integral of a minimum of two products of densities:
\begin{align*}
    1-r(p,q)
    &=\E\Big[\min\Big\{1,\frac{(\dd q/\dd p)(u)}{(\dd q/\dd p)(v)}\Big\}\Big]\\
    &=\int\!\!\int p(u)\,q(v)\,\min\Big\{1,\frac{q(u)\,p(v)}{p(u)\,q(v)}\Big\}\,\dd u\,\dd v\\
    &=\int\!\!\int\min\big\{p(u)q(v),\ q(u)p(v)\big\}\,\dd u\,\dd v .
\end{align*}
For probability densities $f,g$ one has $\int\min\{f,g\}=1-\TV(f,g)$. Taking $f=p\otimes q$ and $g=q\otimes p$ gives $r(p,q)=\TV(p\otimes q,q\otimes p)$.

\emph{The weight form.} Substitute $q=\bar Wp$ in the last integral and rewrite it as an integral over two independent draws from $p$:
\begin{align*}
    \TV(p\otimes q,\,q\otimes p)
    &=\tfrac12\int\!\!\int\big|p(u)q(u')-q(u)p(u')\big|\,\dd u\,\dd u'\\
    &=\tfrac12\int\!\!\int\big|\bar W(u')-\bar W(u)\big|\,p(u)\,p(u')\,\dd u\,\dd u'
    =\tfrac12\,\E\big|\bar W(u)-\bar W(u')\big|.
\end{align*}

\emph{Lower bound.} Since $\E\bar W=1$, $\TV(p,q)=\tfrac12\E|\bar W(u)-1|$. Conditioning on $\bar W(u)$ and applying Jensen's inequality to the convex map $s\mapsto|\bar W(u)-s|$,
\begin{equation*}
    \E\big[\,|\bar W(u)-\bar W(u')|\ \big|\ \bar W(u)\big]\ \ge\ \big|\bar W(u)-\E\bar W(u')\big|=|\bar W(u)-1|,
\end{equation*}
and taking expectations gives $r(p,q)\ge\TV(p,q)$.

\emph{Upper bounds.} The triangle inequality $|\bar W(u)-\bar W(u')|\le|\bar W(u)-1|+|1-\bar W(u')|$ gives $r(p,q)\le2\,\TV(p,q)$. By Cauchy--Schwarz and independence,
\begin{equation*}
    \E|\bar W(u)-\bar W(u')|\ \le\ \sqrt{\E\big[(\bar W(u)-\bar W(u'))^2\big]}=\sqrt{2\Var(\bar W)},
\end{equation*}
hence $r(p,q)\le\sqrt{\Var(\bar W)/2}$.
\end{proof}

\begin{proof}[Proof of the rejection rate \cref{eq:rejection-rate}]
\emph{The path exchange is a swap between $p$ and $q$.} Under $\prodpi$ the replicas at levels $n$ and $n-1$ are independent, so the proposal $u:=\by_{n-1}=(\bx_n,y_{n-1})$ has law $p$ and the lower replica $v:=\bx_{n-1}$ has law $q$, independently. By \cref{prop:ratio} and \cref{lem:sb} the acceptance ratio is
\begin{equation*}
    A_{n-1,n}=\frac{w_{n-1|n}(u)}{w_{n-1|n}(v)}=\frac{\bar W_{n-1|n}(u)}{\bar W_{n-1|n}(v)}=\frac{(\dd q/\dd p)(u)}{(\dd q/\dd p)(v)},
\end{equation*}
which is the acceptance ratio of the swap in \cref{lem:swap-tv}. Hence
\begin{equation}
    r_{n-1,n}=r(p,q)=\tfrac12\,\E|\bar W_{n-1|n}-\bar W'_{n-1|n}|,
    \label{eq:app-rej-traj}
\end{equation}
with $\bar W_{n-1|n}$ the normalised backward weight, and the bounds \cref{eq:app-swap-bounds} apply. $\Var(\bar W_{n-1|n})$ is the squared coefficient of variation of the sequential control weight $w_{n-1|n}$ under $p$, so a level at which SMC would lose particle diversity is a level at which ENCORE rejects.

\emph{Projection to the state space.} Let $\mathrm{pr}$ map a trajectory to its last two states, $\mathrm{pr}(\bx_{n-1})=(x_n,x_{n-1})$. By \cref{lem:sb}, $\dd q/\dd p=\bar W_{n-1|n}$ depends on the trajectory only through $\mathrm{pr}$, say $\bar W_{n-1|n}=h\circ\mathrm{pr}$. Then the projected laws $p_{\mathrm{pr}}:=p\circ\mathrm{pr}^{-1}$ and $q_{\mathrm{pr}}:=q\circ\mathrm{pr}^{-1}$ have the same density: for every measurable $B$,
\begin{equation*}
    q_{\mathrm{pr}}(B)=q(\mathrm{pr}^{-1}B)=\int_{\mathrm{pr}^{-1}B}h\circ\mathrm{pr}\,\dd p=\int_Bh\,\dd p_{\mathrm{pr}},
\end{equation*}
so $\dd q_{\mathrm{pr}}/\dd p_{\mathrm{pr}}=h$. Since $r(p,q)=\tfrac12\E|\bar W_{n-1|n}(u)-\bar W_{n-1|n}(u')|$ depends only on the law of $\bar W_{n-1|n}$ under $p$, which is the law of $h$ under $p_{\mathrm{pr}}$, we have $r(p,q)=r(p_{\mathrm{pr}},q_{\mathrm{pr}})$.

\emph{The projections.} Under $p$, the last state $x_n$ of $\bx_n\sim\bpi_n$ has law $\pi_n$ and $x_{n-1}\sim\Lb_{n-1|n}(\cdot\mid x_n)$, so $p_{\mathrm{pr}}=\pi_n\otimes\Lb_{n-1|n}$. Under $q$, by \cref{eq:trajectory-target}, \cref{eq:path-law} and the time reversal \cref{eq:app-reversal},
\begin{align*}
    q_{\mathrm{pr}}(\dd x_n,\dd x_{n-1})
    &\propto G_{n-1}(x_{n-1})\int_{\bx_n:\ \text{last state }x_n}\bp_n(\dd\bx_n)\,\Kb_{n-1|n}(\dd x_{n-1}\mid x_n)\\
    &=G_{n-1}(x_{n-1})\,p_n(\dd x_n)\,\Kb_{n-1|n}(\dd x_{n-1}\mid x_n)\\
    &=G_{n-1}(x_{n-1})\,p_{n-1}(\dd x_{n-1})\,\Kf_{n|n-1}(\dd x_n\mid x_{n-1}),
\end{align*}
so $q_{\mathrm{pr}}=\pi_{n-1}\otimes\Kf_{n|n-1}$. Therefore
\begin{equation*}
    r_{n-1,n}=r\big(\pi_{n-1}\otimes\Kf_{n|n-1},\ \pi_n\otimes\Lb_{n-1|n}\big),
\end{equation*}
which is \cref{eq:rejection-rate}, and by \cref{eq:apt-rejection} this is the rejection rate of \cref{alg:ne-re} with $\Lf_{n|n-1}=\Kf_{n|n-1}$.
\end{proof}

\subsubsection{Part (b): uniform ergodicity}
\label{app:ergodicity}

Assume $0<\underline w_{n-1|n}\le w_{n-1|n}\le\overline w_{n-1|n}$ for every $n$ and set
\begin{equation}
    c_{n-1,n}:=\frac{\underline w_{n-1|n}}{\overline w_{n-1|n}}\in(0,1].
\end{equation}
Since $A_{n-1,n}$ is a ratio of two values of $w_{n-1|n}$, every exchange is accepted with probability at least $c_{n-1,n}$, whatever the state of the ladder. The proof is a Doeblin minorisation of the whole ladder over a window of iterations long enough for every replica to be discarded and replaced.

\begin{proof}
Realise every acceptance as $\mathbf 1\{U\le\boldsymbol\alpha_{n-1,n}\}$ with independent uniforms $U$, and every draw from $\eta$, from a proposal $\Lb$ and from a local move as a measurable function of the current state and of independent noise. Write $\bX=(\bx_N,\ldots,\bx_0)$ for the ladder and $\mathcal K$ for the one-iteration kernel of \cref{alg:encore}. A minorisation of a single level does not suffice, because the future of one replica depends on the others, so we minorise the whole chain.

\emph{Step 1: what happens when every exchange is accepted.}
Under the even--odd schedule the pair $(n-1,n)$ is attempted at the iterations $m\equiv n\pmod 2$, so a replica at a level $0<n<N$ takes part in exactly one attempted exchange per iteration. Suppose every attempted exchange in a window of iterations is accepted. A replica that moves down from level $n$ to $n-1$ at iteration $m$ is in the pair $(n-2,n-1)$ at iteration $m+1$, since $n-1\equiv m+1\pmod 2$, and moves down again. A replica that moves up keeps moving up in the same way. So each replica descends monotonically to level $0$, idles there for one iteration, ascends monotonically to level $N$, and is discarded at the start of the next iteration, when the reference level is redrawn. The longest such journey, for a replica that has just entered at level $N$, is
\begin{equation*}
    \underbrace{N}_{\text{descend}}+\underbrace{1}_{\text{idle at }0}+\underbrace{N}_{\text{ascend}}+\underbrace{1}_{\text{discard}}=2N+2\ \text{iterations},
\end{equation*}
so a replica present at the start of the window is discarded within $W:=2N+2$ iterations. After $W$ iterations every level holds a replica that entered at the reference level inside the window, extended by proposal draws and truncated along the way.

\emph{Step 2: the all-accept event is independent of the states.}
Let $E$ be the event that every uniform used in the window satisfies $U\le c_{n-1,n}$ for its pair. Since $\{U\le c_{n-1,n}\}\subseteq\{U\le\boldsymbol\alpha_{n-1,n}\}$ whatever the states, every attempted exchange is accepted on $E$. Since $E$ is an event in the uniforms alone, it is independent of the initial ladder $\bX$ and of the reference, proposal and local-move noise. Each pair is attempted every second iteration, hence at most $N+1$ times in the window, so
\begin{equation}
    \Pr(E)\ \ge\ \epsilon:=\prod_{n=1}^{N}c_{n-1,n}^{\,N+1}>0 .
    \label{eq:app-epsilon}
\end{equation}

\emph{Step 3: minorisation.}
On $E$ the evolution over the window is the all-accept dynamics of Step 1. By Step 1 the ladder at the end of the window is a function $F(\text{noise})$ of the noise drawn inside the window alone: every state that depended on $\bX$ has been discarded, and the local moves only alter states that are themselves functions of that noise. Let $\nu$ be the law of $F(\text{noise})$. It does not depend on $\bX$. For every measurable $B$,
\begin{equation}
    \mathcal K^{W}(\bX,B)\ \ge\ \Pr\big(E\cap\{F(\text{noise})\in B\}\big)=\Pr(E)\,\nu(B)\ \ge\ \epsilon\,\nu(B),
    \label{eq:app-doeblin}
\end{equation}
using the independence of $E$ from the noise. This is a Doeblin condition for $\mathcal K^{W}$, uniform in $\bX$.

\emph{Step 4: conclusion.}
By the coupling argument for Doeblin chains \citep[Thm.~16.2.4]{meyn_markov_2009}, for every $\bX$ and $k$,
\begin{equation}
    \big\|\mathcal K^{k}(\bX,\cdot)-\prodpi\big\|_{\TV}\ \le\ (1-\epsilon)^{\lfloor k/W\rfloor},
    \label{eq:app-rate}
\end{equation}
so the chain is uniformly ergodic, and the law of the recorded state $x_0$ converges to $\pi_0$ at this rate.
\end{proof}

\begin{remark}
The hypothesis holds for bounded rewards, $|r|\le C$ with $G_n=\exp\{\kappa_nr\}$, $0\le\kappa_n\le\kappa$, and $\Lb=\Kb$, since then $e^{-2\kappa C}\le w_{n-1|n}\le e^{2\kappa C}$. It fails for unbounded tilts such as the Boltzmann reweighting, for which a drift condition on $\log G_n$ would be required.
\end{remark}

\subsubsection{Part (c): the maximal restart rate and the Doob relations}
\label{app:doob-max}

\begin{proof}
By \cref{eq:roundtrip}, $\tau\le\tfrac12$ with equality if and only if $r_{n-1,n}=0$ for every $n$, since $s\mapsto s/(1-s)$ is non-negative on $[0,1)$ and vanishes only at $s=0$. By \cref{lem:swap-tv}, $r_{n-1,n}=0$ if and only if the two laws in \cref{eq:rejection-rate} coincide,
\begin{equation}
    \pi_{n-1}\otimes\Kf_{n|n-1}=\pi_n\otimes\Lb_{n-1|n}\qquad\text{as laws of }(x_n,x_{n-1}).
    \label{eq:app-doob-equality}
\end{equation}
Two laws on a product agree if and only if their $x_n$-marginals agree and their conditionals of $x_{n-1}$ given $x_n$ agree, so we compute both sides.

\emph{The two laws.} The right side has density proportional to $G_n(x_n)\,p_n(x_n)\,\Lb_{n-1|n}(x_{n-1}\mid x_n)$. The left side, by the time reversal \cref{eq:app-reversal}, has density proportional to
\begin{equation*}
    G_{n-1}(x_{n-1})\,p_{n-1}(x_{n-1})\,\Kf_{n|n-1}(x_n\mid x_{n-1})=G_{n-1}(x_{n-1})\,p_n(x_n)\,\Kb_{n-1|n}(x_{n-1}\mid x_n).
\end{equation*}

\emph{Marginals.} Integrating out $x_{n-1}$,
\begin{align*}
    \text{right side:}&\qquad \pi_n(\dd x_n)\propto G_n(x_n)\,p_n(\dd x_n),\\
    \text{left side:}&\qquad \propto p_n(\dd x_n)\int G_{n-1}(y)\,\Kb_{n-1|n}(\dd y\mid x_n)=\big(\Kb_{n-1|n}G_{n-1}\big)(x_n)\,p_n(\dd x_n).
\end{align*}
The marginals agree if and only if $G_n\propto\Kb_{n-1|n}G_{n-1}$, the potential relation of \cref{eq:app-doob-relations}.

\emph{Conditionals.} Given $x_n$, the right side draws $x_{n-1}\sim\Lb_{n-1|n}(\cdot\mid x_n)$, and the left side draws $x_{n-1}$ from
\begin{equation*}
    \frac{G_{n-1}(y)\,\Kb_{n-1|n}(\dd y\mid x_n)}{(\Kb_{n-1|n}G_{n-1})(x_n)}.
\end{equation*}
They agree if and only if $\Lb_{n-1|n}\propto G_{n-1}\Kb_{n-1|n}$, the proposal relation of \cref{eq:app-doob-relations}.

Hence $\tau=\tfrac12$ if and only if the two Doob relations hold at every level $n$. By \cref{app:bg-doob} this is $G_n\propto G^*_n$ and $\Lb_{n-1|n}=\Lb^*_{n-1|n}$ for every $n$, with the top-level caveat noted there. In that case every acceptance ratio equals one, each reference draw reaches level $0$ through accepted exchanges, and its trajectory is a draw from $\eta\otimes\Lb^*_{N-1|N}\otimes\cdots\otimes\Lb^*_{0|1}$, the Doob-transformed chain of \cref{eq:app-doob-path}, whose terminal marginal is $\pi_0$.
\end{proof}

\subsection{Cost and Memory}
\label{app:cost}

\paragraph{Kernel simulations.} Each iteration of \cref{alg:encore} simulates one proposal $\Lb_{n-1|n}$ for each of the $N/2$ pairs active under the even--odd schedule, in parallel, and two consecutive iterations extend every level once. When the levels coincide with the model steps, two iterations therefore cost the same $N$ kernel simulations as one pass of the sampler, and per sample the cost matches sequential control (\cref{tab:landscape}). When a level spans several model steps, $\Lb_{n-1|n}$ composes them. The intermediate states are used once in the acceptance ratio and discarded. The optional local move costs one further proposal per level.

\paragraph{Memory.} The upward move replaces the lower replica by its truncation $\trunc_{n|n-1}(\bx_{n-1})$, so the initial segment must be stored. The replica at level $n$ holds its states at the levels $N,\ldots,n$ it has passed through, $N-n+1$ states, and the ladder holds
\begin{equation}
    \sum_{n=0}^{N}(N-n+1)=\frac{(N+1)(N+2)}{2}=O(N^2)
\end{equation}
states in all, $O(N^2d)$ against $O(Nd)$ for state-space replica exchange. Only states at the levels are stored, not at model steps in between, so a coarse ladder is cheap in memory as well as in communication. For a deterministic sampler with a tractable inverse the stored prefix can be recomputed from the last state by inverting the transitions, which trades the $O(N^2d)$ memory for extra model evaluations.

\section{Diffusion Models: Communication Barriers}
\label{app:barriers}

This appendix proves \cref{thm:barrier} and develops its consequences in the order of \cref{sec:diffusion}: the guided proposal against the Doob proposal, schedule tuning, the optimal spacing of an unguided ladder, the corresponding SMC barrier, and a closed-form example.

\subsection{Setting and Assumptions}
\label{app:barrier-setting}

We look at the exchange between the levels at times $t=t_n$ and $t'=t_{n-1}=t-\Delta t_n$, and index the objects attached to it by $n$. The same computation applies to every level. Let $X_t\sim\pi_t$ be the last state of the upper replica and $Y_{t'}$ the proposed extension produced by the step \cref{eq:euler-step} with noise $\xi\sim N(0,I)$ independent of $X_t$. The log-weight of \cref{eq:logweight} at level $n$ is
\begin{equation}
    \ell_n:=\log w_{t'|t}(X_t,\xi)
    =\log\frac{G_{t'}(Y_{t'})}{G_t(X_t)}+\mathbf 1_{\mathrm{guided}}\log\frac{\Kb_{t'|t}(Y_{t'}\mid X_t)}{\Lb_{t'|t}(Y_{t'}\mid X_t)},
    \label{eq:app-logweight}
\end{equation}
$\ell'_n$ is an independent copy, and by \cref{eq:app-rej-traj} the rejection rate of the exchange is
\begin{equation}
    r_{n-1,n}=\frac{\E|e^{\ell_n}-e^{\ell'_n}|}{2\,\E e^{\ell_n}}.
    \label{eq:app-rej-weight}
\end{equation}
We write $m_{1,n}:=\E|\ell_n|$ and $m_{4,n}:=\E\ell_n^4$ for the moments that control the expansion of \cref{eq:app-rej-weight}. Recall $g_t:=\sigma_t^{\top}\nabla\log G_t$ and $M_t:=\sigma_t^{\top}\nabla^2\log G_t\,\sigma_t$ from \cref{sec:diffusion}, and define the Doob residual
\begin{equation}
    \psi_t:=\frac{\gen_tG_t}{G_t}
    =-\partial_t\log G_t+b_t\cdot\nabla\log G_t+\tfrac12\operatorname{tr}M_t+\tfrac12\|g_t\|^2,
    \label{eq:app-psi}
\end{equation}
the second equality by the identity $G^{-1}\gen_tG=\gen_t\log G+\tfrac12\|\sigma_t^{\top}\nabla\log G\|^2$, which follows from $\nabla^2G/G=\nabla^2\log G+\nabla\log G\,\nabla\log G^{\top}$. With this notation \cref{eq:Psi-main} reads
\begin{equation}
    \Psi_t(x,\xi)=\psi_t(x)+\tfrac12\big(\xi^{\top}M_t(x)\xi-\operatorname{tr}M_t(x)\big),
    \qquad
    \E\big[\Psi_t(X_t,\xi)\mid X_t\big]=\psi_t(X_t),
    \label{eq:app-Psi}
\end{equation}
the conditional mean because $\E[\xi^{\top}M\xi]=\operatorname{tr}M$ for $\xi\sim N(0,I)$.

\paragraph{Assumptions.}
\begin{enumerate}
    \item[(A1)] $\sigma_t$ is invertible for every $t$, and $t\mapsto\sigma_t$ and $t\mapsto\sigma_t^{-1}$ are continuous on $[0,1]$.
    \item[(A2)] $(t,x)\mapsto b_t(x)$ is $C^{0,3}$ and $(t,x)\mapsto\log G_t(x)$ is $C^{1,4}$, with all these derivatives bounded on $[0,1]\times\Xsp$.
    \item[(A3)] The sampler and guided steps are the two cases of \cref{eq:euler-step}, with $\xi\sim N(0,I)$ independent of $X_t$.
\end{enumerate}
(A2) makes $\nabla\log G_t$, $\nabla^2\log G_t$ and $\partial_t\log G_t$, hence $g_t$, $M_t$ and $\psi_t$, bounded, and every remainder below uniform in $x$. Under (A1)--(A2) the diffusion is uniformly elliptic with bounded smooth drift, so its transition densities are everywhere positive, and hence so are $p_t$ and $\pi_t\propto G_tp_t$ for every $t<1$. Replacing the Euler--Maruyama steps by exact transitions of the two SDEs changes only the $O(\Delta t_n^{3/2})$ remainders, by the same expansions applied through the Itô formula.

\begin{remark}[On the assumptions]
(A2) is a convenience, not a description of practice: it is what makes the Taylor expansions uniform, and it excludes settings of interest. A Boltzmann tilt $\log G_0=-\beta U$ with $U$ of unbounded gradient violates it, as does constrained generation with $G_0=\mathbf 1_{\mathcal C}$, for which $\log G_0$ is not finite. The expansions remain valid in the first case on any region where the stated derivatives are bounded, and localisation arguments extend them under polynomial growth with Gaussian tails for $\pi_t$, at the price of a longer proof. The second case is outside the scope of an expansion in $\Delta t_n$ altogether, since the weight is then an indicator. We state \cref{thm:barrier} under (A2) and read it as a guide to the behaviour of the sampler rather than as a covering theorem for every application.
\end{remark}

\subsection{Three Lemmas}
\label{app:barrier-lemmas}

The first lemma turns the rejection rate \cref{eq:app-rej-weight} into a mean absolute difference of log-weights, the second evaluates that mean for Gaussian log-weights, and the third is the pair of Taylor expansions behind \cref{eq:guided-logweight}.

\begin{lemma}[Small-weight expansion]
\label{lem:smallweight}
Suppose $\sup_n\E e^{2|\ell_n|}\le C_0$. Then there is a constant $C$ depending only on $C_0$ such that, for every $n$,
\begin{equation}
    \Big|\,r_{n-1,n}-\tfrac12\,\E|\ell_n-\ell'_n|\,\Big|\ \le\ C\big(m_{4,n}^{1/2}+m_{1,n}^2\big).
    \label{eq:app-smallweight}
\end{equation}
\Cref{lem:moments} below verifies the hypothesis and bounds $m_{1,n}$ and $m_{4,n}$ by powers of $\Delta t_n$ uniformly in $n$, so that the right side is $O(\Delta t_n)$ in the unguided case and $O(\Delta t_n^2)$ in the guided case.
\end{lemma}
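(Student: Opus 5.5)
The plan is to compare $r_{n-1,n}=\E|e^{\ell}-e^{\ell'}|/(2\E e^{\ell})$ with $\tfrac12\E|\ell-\ell'|$ by linearising the exponential around $0$. We will control the error in the numerator and in the denominator separately, then combine the two. Drop the index $n$ and write $\ell,\ell'$ for the i.i.d.\ log-weights. All constants below depend only on $C_0$, which bounds $\E e^{2|\ell|}$.

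\emph{Numerator.} By the mean value theorem, $e^{\ell}-e^{\ell'}=(\ell-\ell')e^{\theta}$ for some $\theta$ between $\ell$ and $\ell'$. Hence
\begin{equation*}
\big|\,|e^{\ell}-e^{\ell'}|-|\ell-\ell'|\,\big|\le|\ell-\ell'|\,|e^{\theta}-1|\le|\ell-\ell'|\,\max(|\ell|,|\ell'|)\,e^{|\ell|+|\ell'|},
\end{equation*}
using $|e^{\theta}-1|\le|\theta|e^{|\theta|}$. Bound $|\ell-\ell'|\max(|\ell|,|\ell'|)\le 2(\ell^2+\ell'^2)$ and apply Cauchy--Schwarz together with independence:
\begin{equation*}
\E\big[(\ell^2+\ell'^2)e^{|\ell|+|\ell'|}\big]\le 2\,(\E\ell^4)^{1/2}\,(\E e^{2|\ell|})^{1/2}\,\E e^{|\ell'|}\le C\,m_4^{1/2},
\end{equation*}
since $\E e^{|\ell'|}\le C_0^{1/2}$. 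This gives $\E|e^{\ell}-e^{\ell'}|=\E|\ell-\ell'|+O(m_4^{1/2})$.

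\emph{Denominator.} We have $|\E e^{\ell}-1|\le\E|\ell|e^{|\ell|}\le m_1^{1/2}$-type bounds are too weak here, so we need more care. Write $e^{\ell}-1=\ell+R$ with $|R|\le\tfrac12\ell^2e^{|\ell|}$. Cauchy--Schwarz gives $\E|R|\le C m_4^{1/2}$, so $\E e^{\ell}=1+\E\ell+O(m_4^{1/2})$. The term $\E\ell$ is only bounded by $m_1$, and that is not obviously small enough. The fix is to note that dividing the numerator by $\E e^{\ell}=1+\delta$ changes the ratio by a factor $1/(1+\delta)$, with $|\delta|\le m_1+Cm_4^{1/2}$. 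The resulting error is then $\tfrac12\E|\ell-\ell'|\cdot|\delta|/(1+\delta)$. Since $\E|\ell-\ell'|\le 2m_1$, this error is at most $C(m_1^2+m_1m_4^{1/2})$, and $m_1m_4^{1/2}\le m_1^2+m_4$. Finally $m_4\le Cm_4^{1/2}$, because $m_4\le C$ by the exponential moment hypothesis. The denominator is bounded below, since $\E e^{\ell}\ge e^{\E\ell}\ge e^{-m_1}$ by Jensen and $m_1\le C$. This lets us treat $1/(1+\delta)$ uniformly without assuming $\delta$ small.

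\emph{Combining.} Put the two pieces together:
\begin{equation*}
r-\tfrac12\E|\ell-\ell'|=\frac{\tfrac12\big(\E|e^{\ell}-e^{\ell'}|-\E|\ell-\ell'|\big)-\tfrac12\E|\ell-\ell'|\,\delta}{1+\delta}.
\end{equation*}
Both terms in the numerator are $O(m_4^{1/2}+m_1^2)$, and the denominator $1+\delta$ is bounded below, which gives \cref{eq:app-smallweight}.

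The main obstacle is the denominator step. A naive bound leaves an $O(m_1)$ error, which in the unguided case is $O(\sqrt{\Delta t})$ and would swamp the leading term. The key observation that removes it is that this error enters only multiplied by $\E|\ell-\ell'|$, which is itself $O(m_1)$.
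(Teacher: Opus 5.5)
Your proof is correct and follows the paper's argument. You linearise $e^{\ell}$ and bound the remainder by Cauchy--Schwarz against $\E e^{2|\ell|}$ to get an $O(m_4^{1/2})$ numerator error, then observe that the $O(m_1)$ denominator error enters only multiplied by $\E|\ell-\ell'|\le 2m_1$. The differences are minor: you use the mean value theorem for the numerator where the paper applies the triangle inequality to the two Taylor remainders, and your lower bound $\E e^{\ell}\ge e^{-m_1}\ge e^{-C_0^{1/2}}$ (Jensen) together with $m_4\le C$ replaces the paper's case split on whether the denominator perturbation is at most $\tfrac12$ and whether $m_4\le 1$.
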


\begin{proof}
Fix $n$ and drop the index. The proof compares the numerator and the denominator of \cref{eq:app-rej-weight} with their first-order expansions.

\emph{Step 1: expand the exponential.} Write $e^{u}=1+u+\rho(u)$ with $|\rho(u)|\le\tfrac12u^2e^{|u|}$. By Cauchy--Schwarz and the hypothesis,
\begin{equation*}
    \E|\rho(\ell)|\ \le\ \tfrac12\big(\E\ell^4\big)^{1/2}\big(\E e^{2|\ell|}\big)^{1/2}\ \le\ \tfrac12C_0^{1/2}m_4^{1/2}.
\end{equation*}

\emph{Step 2: the numerator.} With $\mathrm N:=\E|e^{\ell}-e^{\ell'}|$ and $e^{\ell}-e^{\ell'}=(\ell-\ell')+(\rho(\ell)-\rho(\ell'))$, the triangle inequality gives
\begin{equation*}
    \mathrm N=\E|\ell-\ell'|+\rho_{\mathrm N},\qquad |\rho_{\mathrm N}|\le\E|\rho(\ell)|+\E|\rho(\ell')|\le C_0^{1/2}m_4^{1/2}.
\end{equation*}

\emph{Step 3: the denominator.} With $\mathrm D:=\E e^{\ell}=1+\E\ell+\E\rho(\ell)$,
\begin{equation*}
    \mathrm D=1+\rho_{\mathrm D},\qquad |\rho_{\mathrm D}|\le|\E\ell|+\E|\rho(\ell)|\le m_1+\tfrac12C_0^{1/2}m_4^{1/2}.
\end{equation*}

\emph{Step 4: combine.} Since $r=\mathrm N/(2\mathrm D)$,
\begin{equation*}
    r-\tfrac12\E|\ell-\ell'|
    =\frac{\mathrm N-\E|\ell-\ell'|}{2\mathrm D}+\tfrac12\E|\ell-\ell'|\Big(\frac1{\mathrm D}-1\Big)
    =\frac{\rho_{\mathrm N}}{2\mathrm D}-\frac{\E|\ell-\ell'|}{2}\cdot\frac{\rho_{\mathrm D}}{\mathrm D}.
\end{equation*}
Suppose first that $|\rho_{\mathrm D}|\le\tfrac12$, so that $\mathrm D\ge\tfrac12$. Using $\E|\ell-\ell'|\le2m_1$ and the bounds of Steps 2 and 3,
\begin{equation*}
    \Big|r-\tfrac12\E|\ell-\ell'|\Big|
    \ \le\ |\rho_{\mathrm N}|+2m_1|\rho_{\mathrm D}|
    \ \le\ C_0^{1/2}m_4^{1/2}+2m_1^2+C_0^{1/2}m_1m_4^{1/2}.
\end{equation*}
The last term is at most $\tfrac12C_0^{1/2}(m_1^2+m_4)$ by the inequality $ab\le\tfrac12(a^2+b^2)$, and $m_4\le m_4^{1/2}$ when $m_4\le1$, which gives \cref{eq:app-smallweight}. If $m_4>1$ or $|\rho_{\mathrm D}|>\tfrac12$, then $m_4^{1/2}+m_1^2$ is bounded below by a constant depending only on $C_0$, while the left side is at most $1+m_1\le2+m_1^2$, so \cref{eq:app-smallweight} holds with a larger $C$. The constant depends on $C_0$ only, hence not on $n$.
\end{proof}

\begin{lemma}[Gaussian mean absolute difference]
\label{lem:gaussdiff}
Let $g:\Xsp\to\mathbb R^d$ be bounded, $X,X'$ i.i.d., and $\xi,\xi'\sim N(0,I)$ independent of $(X,X')$ and of each other. Then
\begin{equation}
    \E\big|g(X)^{\top}\xi-g(X')^{\top}\xi'\big|=\sqrt{\tfrac{2}{\pi}}\;\E\sqrt{\|g(X)\|^2+\|g(X')\|^2}.
    \label{eq:app-gaussdiff}
\end{equation}
\end{lemma}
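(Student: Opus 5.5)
Conditioning on $(X,X')$ reduces everything to a one-dimensional Gaussian computation. Given $X=x$ and $X'=x'$, the random variables $g(x)^{\top}\xi$ and $g(x')^{\top}\xi'$ are independent centred Gaussians with variances $\|g(x)\|^2$ and $\|g(x')\|^2$, because $\xi,\xi'$ are independent standard normals and independent of $(X,X')$. Their difference
\begin{equation*}
    Z:=g(x)^{\top}\xi-g(x')^{\top}\xi'
\end{equation*}
is therefore distributed as $N\big(0,\ s^2\big)$ with $s^2:=\|g(x)\|^2+\|g(x')\|^2$. A single stacked vector makes this transparent: $Z=(g(x),-g(x'))^{\top}(\xi,\xi')$ with $(\xi,\xi')\sim N(0,I_{2d})$.

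The second step is the standard half-normal mean: for $Z\sim N(0,s^2)$, $\E|Z|=s\,\E|N(0,1)|=s\sqrt{2/\pi}$. This follows from $\int_{\mathbb R}|z|\varphi(z)\,\dd z=2\int_0^\infty z\varphi(z)\,\dd z=2/\sqrt{2\pi}$. The degenerate case $s=0$ is consistent, since both sides then vanish. Thus
\begin{equation*}
    \E\big[\,|g(X)^{\top}\xi-g(X')^{\top}\xi'|\ \big|\ X,X'\big]=\sqrt{\tfrac2\pi}\,\sqrt{\|g(X)\|^2+\|g(X')\|^2}.
\end{equation*}

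The third step takes expectations over $(X,X')$ via the tower property, which gives \cref{eq:app-gaussdiff}. Boundedness of $g$ ensures all expectations are finite and justifies the use of Fubini and conditional expectations. The only point requiring any care is making the conditioning rigorous, that is, using the independence of $(\xi,\xi')$ from $(X,X')$ to compute the conditional law by freezing $(x,x')$. This is a routine application of the freezing (substitution) lemma for independent variables, and I expect no real obstacle.
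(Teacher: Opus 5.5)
Your proof is correct and matches the paper's argument: condition on $(X,X')$ so that the difference is $N\big(0,\|g(X)\|^2+\|g(X')\|^2\big)$, apply $\E|N(0,s^2)|=s\sqrt{2/\pi}$, and then take expectations over $(X,X')$.
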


\begin{proof}
Conditionally on $(X,X')$, $g(X)^{\top}\xi-g(X')^{\top}\xi'$ is Gaussian with mean zero and variance $\|g(X)\|^2+\|g(X')\|^2$, and $\E|N(0,s^2)|=s\sqrt{2/\pi}$. Take expectations over $(X,X')$.
\end{proof}

\begin{lemma}[One-step expansions]
\label{lem:onestep}
Under (A1)--(A3), with all functions evaluated at $X_t$, the following hold uniformly in $X_t$.
\begin{enumerate}
    \item[(a)] For the sampler step ($\mathbf 1_{\mathrm{guided}}=0$),
    \begin{equation}
        \log\frac{G_{t'}(Y_{t'})}{G_t(X_t)}
        =\sqrt{\Delta t_n}\,g_t^{\top}\xi+\Delta t_n\Big[\Psi_t(X_t,\xi)-\tfrac12\|g_t\|^2\Big]+O\big(\Delta t_n^{3/2}(1+\|\xi\|^3)\big).
        \label{eq:app-onestep-a}
    \end{equation}
    \item[(b)] For the guided step ($\mathbf 1_{\mathrm{guided}}=1$) the same holds with the bracket $\Psi_t(X_t,\xi)+\tfrac12\|g_t\|^2$.
    \item[(c)] For any $f$ that is $C^{1,4}$ with bounded derivatives, the sampler step satisfies
    \begin{equation}
        \E\big[f_{t'}(Y_{t'})\mid X_t=x\big]=f_t(x)+\Delta t_n\,(\gen_tf_t)(x)+O(\Delta t_n^2).
        \label{eq:app-onestep-c}
    \end{equation}
    \item[(d)] At the point $Y_{t'}$ produced by the guided step, the proposal correction is exactly
    \begin{equation}
        \log\frac{\Kb_{t'|t}(Y_{t'}\mid X_t)}{\Lb_{t'|t}(Y_{t'}\mid X_t)}=-\sqrt{\Delta t_n}\,g_t^{\top}\xi-\tfrac{\Delta t_n}{2}\|g_t\|^2 .
        \label{eq:app-onestep-d}
    \end{equation}
\end{enumerate}
\end{lemma}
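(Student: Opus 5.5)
The plan for (a) and (b) is a second-order Taylor expansion of $\phi:=\log G$ in the increment of one step, followed by matching the result against \cref{eq:app-Psi}; (d) is an exact Gaussian computation and (c) is a standard weak-order expansion. I treat (a) and (b) together by writing the step \cref{eq:euler-step} as
\[
Y_{t'}-x=\sigma_t\sqrt{\Delta t_n}\,\xi+\Delta t_n\big(b_t(x)+\gamma\,\sigma_t\sigma_t^{\top}\nabla\phi_t(x)\big),
\qquad \gamma:=\mathbf 1_{\mathrm{guided}}.
\]

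\emph{Time part.} First I split
\[
\phi_{t'}(Y_{t'})-\phi_t(x)=\big[\phi_{t'}(Y_{t'})-\phi_t(Y_{t'})\big]+\big[\phi_t(Y_{t'})-\phi_t(x)\big].
\]
The time difference is $-\int_{t'}^{t}\partial_s\phi_s(Y_{t'})\,\dd s$. I replace the integrand by $\partial_t\phi_t(x)$, which yields $-\Delta t_n\,\partial_t\phi_t(x)$. The error of this replacement is at most $\Delta t_n$ times the oscillation of $\partial_s\phi_s$ over $[t',t]$ and over the segment from $x$ to $Y_{t'}$. Bounded $\partial_t\nabla\phi$ controls the spatial part of that oscillation by $C\|Y_{t'}-x\|=O(\sqrt{\Delta t_n}(1+\|\xi\|))$.

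\emph{Space part.} I expand $\phi_t$ to third order around $x$, with the remainder controlled by the bounded fourth derivative, and keep terms through order $\Delta t_n$. The first-order term $\nabla\phi_t^{\top}(Y_{t'}-x)$ gives
\[
\sqrt{\Delta t_n}\,g_t^{\top}\xi+\Delta t_n\big(b_t\cdot\nabla\phi_t+\gamma\|g_t\|^2\big).
\]
The quadratic term gives $\tfrac12\Delta t_n\,\xi^{\top}M_t\xi$, plus cross terms of order $\Delta t_n^{3/2}\|\xi\|$. The cubic and higher terms are $O(\Delta t_n^{3/2}(1+\|\xi\|^3))$, uniformly in $x$ by (A2).

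\emph{Matching with $\Psi_t$.} Using \cref{eq:app-psi} and \cref{eq:app-Psi},
\[
\Psi_t=-\partial_t\phi_t+b_t\cdot\nabla\phi_t+\tfrac12\|g_t\|^2+\tfrac12\xi^{\top}M_t\xi,
\]
because the $\tfrac12\operatorname{tr}M_t$ terms cancel. The order-$\Delta t_n$ bracket is therefore $\Psi_t-\tfrac12\|g_t\|^2+\gamma\|g_t\|^2$. This equals $\Psi_t-\tfrac12\|g_t\|^2$ for $\gamma=0$ and $\Psi_t+\tfrac12\|g_t\|^2$ for $\gamma=1$, which are (a) and (b).

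\emph{Part (d).} Both kernels are Gaussian with covariance $\Sigma=\sigma_t\sigma_t^{\top}\Delta t_n$, invertible by (A1). Their means are $\mu=x+b_t\Delta t_n$ and $\mu+\Sigma c$, with $c=\nabla\phi_t(x)$. The normalising constants cancel, and the log-ratio is
\[
-c^{\top}\Sigma^{-1}\Sigma(Y-\mu)\ \text{rewritten as}\ -c^{\top}(Y-\mu)+\tfrac12c^{\top}\Sigma c,
\]
which is the standard completing-the-square difference, as in \cref{eq:app-complete-square}. At the realised point $Y-\mu=\Sigma c+\sigma_t\sqrt{\Delta t_n}\,\xi$ this equals
\[
-c^{\top}\Sigma c-\sqrt{\Delta t_n}\,c^{\top}\sigma_t\xi+\tfrac12 c^{\top}\Sigma c=-\sqrt{\Delta t_n}\,g_t^{\top}\xi-\tfrac{\Delta t_n}{2}\|g_t\|^2,
\]
exactly, with no remainder.

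\emph{Part (c).} I apply the same Taylor expansion to $f$ instead of $\phi$, taken to fourth order in space and with $\gamma=0$, and then take expectations in $\xi$. Odd moments of $\xi$ vanish. The quadratic term gives $\tfrac12\Delta t_n\operatorname{tr}(\sigma_t\sigma_t^{\top}\nabla^2 f_t)$ via $\E[\xi\xi^{\top}]=I$. Together with $b_t\cdot\nabla f_t\,\Delta t_n$ and $-\partial_t f_t\,\Delta t_n$ this gives $\Delta t_n\,\gen_t f_t$ by \cref{eq:app-generator}. The surviving corrections involve $\E\|\xi\|^4$ and $\Delta t_n^2$ drift terms, and are $O(\Delta t_n^2)$ uniformly by the bounded derivatives.

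The main obstacle is the time part of the remainder, in (a)–(b) and especially in (c). $C^{1,4}$ regularity gives only continuity of $\partial_t\phi$ in $t$, so the spatial oscillation is $O(\Delta t_n^{3/2})$ but the temporal oscillation is $o(\Delta t_n)$ uniformly, not $O(\Delta t_n^{3/2})$ or $O(\Delta t_n^2)$. My first attempt is to read (A2) as also bounding $\partial_t\nabla\phi$ and a Lipschitz modulus of $\partial_t\phi$ in $t$, which yields the stated orders. Failing that, I would weaken the remainders to $o(\Delta t_n)$ uniformly. Under that weakening the claimed orders in (a)–(c) no longer hold verbatim. It is enough downstream, however: \cref{thm:barrier}(b) only needs $o(\Delta t_n)$, and (a) needs $O(\Delta t_n)$ after \cref{lem:smallweight}.
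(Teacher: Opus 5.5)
Your proof is correct. For (a), (b) and (d) it follows the paper's argument:
\begin{itemize}
\item a Taylor expansion of $\log G$ about $(t,X_t)$ in the increment $D_n=Y_{t'}-X_t$;
\item regrouping the order-$\Delta t_n$ terms through \cref{eq:app-psi}, where the $\tfrac12\operatorname{tr}M_t$ terms cancel as you say;
\item the guided drift entering only through the linear term, as $\Delta t_n\|g_t\|^2$;
\item the exact log-ratio of two Gaussians with common covariance.
\end{itemize}

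Your first reading of (A2) is bounded $\partial_t\nabla\log G_t$ together with a Lipschitz modulus of $\partial_t\log G_t$ in $t$. This is exactly what the paper's remainder $O(\Delta t_n\|D_n\|+\|D_n\|^3+\Delta t_n^2)$ tacitly assumes. So you have found a looseness in the stated hypothesis, not a defect in your own argument.

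The real difference is in (c). The paper integrates the expansion of (a), including its $O(\Delta t_n^{3/2}(1+\|\xi\|^3))$ remainder, against $\E\|\xi\|^3<\infty$. As written, that only controls the remainder to $O(\Delta t_n^{3/2})$. Your fourth-order spatial expansion uses the fact that the order-$\Delta t_n^{3/2}$ terms are odd in $\xi$ and vanish in expectation. That is what actually gives $O(\Delta t_n^2)$ for the spatial part, and it is where the fourth derivative in $C^{1,4}$ is genuinely needed.

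For the time part of (c), however, your strengthened reading does not yet deliver the stated order. After you replace $\partial_sf_s(Y_{t'})$ by $\partial_tf_t(x)$, the spatial piece $\Delta t_n\,\E[\partial_sf_s(Y_{t'})-\partial_sf_s(x)]$ is controlled by bounded $\partial_t\nabla f$ only as $C\Delta t_n\E\|D_n\|=O(\Delta t_n^{3/2})$. To reach $O(\Delta t_n^2)$ you must also assume $\partial_t\nabla^2 f$ is bounded. Then the expectation equals $\Delta t_n\,b_t\cdot\nabla\partial_sf_s(x)+O(\E\|D_n\|^2)=O(\Delta t_n)$.

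Without that extra assumption, (c) holds with $O(\Delta t_n^{3/2})$. Part (c) is used only in \cref{prop:twisted}, so this merely weakens that proposition's remainder to $O(\Delta t_n^{3/2})$ and leaves \cref{thm:barrier} unaffected.
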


\begin{proof}
(a) Let $D_n:=Y_{t'}-X_t=b_t\Delta t_n+\sigma_t\sqrt{\Delta t_n}\,\xi$, so $\|D_n\|=O(\sqrt{\Delta t_n}(1+\|\xi\|))$. Taylor expand $\log G_{t'}(X_t+D)$ around $(t,X_t)$ to first order in $t$ and second order in $x$, with all derivatives evaluated at $(t,X_t)$:
\begin{align*}
    \log\frac{G_{t'}(Y_{t'})}{G_t(X_t)}
    &=-\Delta t_n\,\partial_t\log G_t
      +\nabla\log G_t\cdot D_n
      +\tfrac12D_n^{\top}\nabla^2\log G_t\,D_n
      +O\big(\Delta t_n\|D_n\|+\|D_n\|^3+\Delta t_n^2\big).
\end{align*}
Substitute $D_n$ into the linear term,
\begin{align*}
    \nabla\log G_t\cdot D_n
    &=\sqrt{\Delta t_n}\,\nabla\log G_t^{\top}\sigma_t\xi+\Delta t_n\,b_t\cdot\nabla\log G_t\\
    &=\sqrt{\Delta t_n}\,g_t^{\top}\xi+\Delta t_n\,b_t\cdot\nabla\log G_t,
\end{align*}
and into the quadratic term,
\begin{align*}
    \tfrac12D_n^{\top}\nabla^2\log G_t\,D_n
    &=\tfrac12\Delta t_n\,\xi^{\top}\sigma_t^{\top}\nabla^2\log G_t\,\sigma_t\xi+O\big(\Delta t_n^{3/2}(1+\|\xi\|)\big)\\
    &=\tfrac12\Delta t_n\,\xi^{\top}M_t\xi+O\big(\Delta t_n^{3/2}(1+\|\xi\|)\big).
\end{align*}
Collecting the terms of order $\Delta t_n$ and using \cref{eq:app-psi} to rewrite $-\partial_t\log G_t+b_t\cdot\nabla\log G_t$ as $\psi_t-\tfrac12\operatorname{tr}M_t-\tfrac12\|g_t\|^2$, the bracket is
\begin{align*}
    -\partial_t\log G_t+b_t\cdot\nabla\log G_t+\tfrac12\xi^{\top}M_t\xi
    &=\psi_t+\tfrac12\big(\xi^{\top}M_t\xi-\operatorname{tr}M_t\big)-\tfrac12\|g_t\|^2\\
    &=\Psi_t(X_t,\xi)-\tfrac12\|g_t\|^2
\end{align*}
by \cref{eq:app-Psi}. The remainder is $O(\Delta t_n^{3/2}(1+\|\xi\|^3))$ by (A2), since $\|D_n\|=O(\sqrt{\Delta t_n}(1+\|\xi\|))$.

(b) Replace $b_t$ by $b_t+\sigma_t\sigma_t^{\top}\nabla\log G_t$ in $D_n$. The extra drift contributes, through the linear term,
\begin{equation*}
    \Delta t_n\,\nabla\log G_t^{\top}\sigma_t\sigma_t^{\top}\nabla\log G_t=\Delta t_n\,\|g_t\|^2,
\end{equation*}
and only $O(\Delta t_n^{3/2})$ terms elsewhere, so the bracket becomes $\Psi_t(X_t,\xi)-\tfrac12\|g_t\|^2+\|g_t\|^2=\Psi_t(X_t,\xi)+\tfrac12\|g_t\|^2$.

(c) Apply the expansion of (a) to $f$ in place of $\log G$ and take expectations, using $\E\xi=0$, $\E[\xi^{\top}M\xi]=\operatorname{tr}M$ and $\E\|\xi\|^3<\infty$. The $\Delta t_n$ term is $-\partial_tf_t+b_t\cdot\nabla f_t+\tfrac12\operatorname{tr}(\sigma_t\sigma_t^{\top}\nabla^2f_t)=\gen_tf_t$ by \cref{eq:app-generator}.

(d) Both kernels are Gaussian with covariance $\Sigma:=\sigma_t\sigma_t^{\top}\Delta t_n$. Their means are $X_t+b_t\Delta t_n$ and $X_t+b_t\Delta t_n+s$ with $s:=\Sigma\nabla\log G_t$. The guided step produces $Y_{t'}=X_t+b_t\Delta t_n+s+e$ with $e:=\sigma_t\sqrt{\Delta t_n}\,\xi$, so the log-ratio of the two densities at $Y_{t'}$ is
\begin{align*}
    \log\frac{\Kb_{t'|t}(Y_{t'}\mid X_t)}{\Lb_{t'|t}(Y_{t'}\mid X_t)}
    &=-\tfrac12(s+e)^{\top}\Sigma^{-1}(s+e)+\tfrac12e^{\top}\Sigma^{-1}e\\
    &=-\tfrac12s^{\top}\Sigma^{-1}s-s^{\top}\Sigma^{-1}e,
\end{align*}
and the two quadratic forms are
\begin{align*}
    s^{\top}\Sigma^{-1}s&=\Delta t_n\,\nabla\log G_t^{\top}\sigma_t\sigma_t^{\top}\nabla\log G_t=\Delta t_n\,\|g_t\|^2,\\
    s^{\top}\Sigma^{-1}e&=\nabla\log G_t^{\top}\sigma_t\sqrt{\Delta t_n}\,\xi=\sqrt{\Delta t_n}\,g_t^{\top}\xi.
\end{align*}
There is no remainder.
\end{proof}

The constants in \cref{lem:onestep} depend only on the bounds in (A1)--(A2), which hold on all of $[0,1]\times\Xsp$, so the expansions are uniform in $n$. The next lemma turns them into the moment bounds that \cref{lem:smallweight} needs, again uniformly in $n$.

\begin{lemma}[Moment bounds]
\label{lem:moments}
Under (A1)--(A3) there are constants $C,C_0$ depending only on the bounds in (A1)--(A2) such that, for every schedule with $\max_n\Delta t_n\le1$ and every $n$,
\begin{align}
    \sup_n\E e^{2|\ell_n|}&\le C_0,\nonumber\\
    m_{1,n}\le C\sqrt{\Delta t_n},\qquad m_{4,n}&\le C\Delta t_n^{2}\qquad\text{(unguided)},\nonumber\\
    m_{1,n}\le C\Delta t_n,\qquad m_{4,n}&\le C\Delta t_n^{4}\qquad\text{(guided)}.
    \label{eq:app-uniform-moment}
\end{align}
\end{lemma}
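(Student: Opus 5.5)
Both moment bounds will come from a single pathwise bound: there is a constant $C$, depending only on (A1)--(A2), with
\begin{equation*}
|\ell_n|\le C\sqrt{\Delta t_n}\,(1+\|\xi\|)+C\Delta t_n(1+\|\xi\|^2)\ \text{(unguided)},\qquad |\ell_n|\le C\Delta t_n\,(1+\|\xi\|^3)\ \text{(guided)},
\end{equation*}
valid for all $X_t$ and all $\Delta t_n\le1$. Since $\xi\sim N(0,I)$ has moments of every order, raising these bounds to the first and fourth powers and taking expectations gives $m_{1,n}\le C\sqrt{\Delta t_n}$ and $m_{4,n}\le C\Delta t_n^2$ in the unguided case, and $m_{1,n}\le C\Delta t_n$ and $m_{4,n}\le C\Delta t_n^4$ in the guided case. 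The constants are uniform in $n$ because they depend only on the bounds in (A1)--(A2) and on Gaussian moments.

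I would not derive the pathwise bound from \cref{lem:onestep}: its remainder is stated as $O(\Delta t_n^{3/2}(1+\|\xi\|^3))$, and that is only a local Taylor statement. For exponential moments I need a bound that holds globally in $\xi$. Instead I would bound $\ell_n$ directly using the mean value theorem and Taylor's theorem with integral remainder. All derivatives of $\log G$ are bounded everywhere, so no smallness of $D_n$ is required.

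In the unguided case, $|\log G_{t'}(Y_{t'})-\log G_t(X_t)|\le \|\partial_t\log G\|_\infty\Delta t_n+\|\nabla\log G\|_\infty\|D_n\|$, where $\|D_n\|\le \|b\|_\infty\Delta t_n+\|\sigma\|_\infty\sqrt{\Delta t_n}\|\xi\|$. This gives $|\ell_n|\le C(\sqrt{\Delta t_n}\|\xi\|+\Delta t_n)$ globally.

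In the guided case I add \cref{lem:onestep}(d), which is exact, to the second-order Taylor expansion of $\log G_{t'}(X_t+D_n)$. The $\sqrt{\Delta t_n}g_t^\top\xi$ terms cancel identically. What remains consists of the $\Delta t_n$-order bracket, bounded by $C\Delta t_n(1+\|\xi\|^2)$, and a third-order remainder, bounded by $C\|D_n\|^3+C\Delta t_n\|D_n\|+C\Delta t_n^2$. Because $\nabla^3\log G$ and $\partial_t\nabla\log G$ are bounded, these remainder bounds hold globally. Hence $|\ell_n|\le C\Delta t_n(1+\|\xi\|^3)$.

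The main obstacle is the exponential moment $\sup_n\E e^{2|\ell_n|}\le C_0$. In the unguided case it is immediate: with $\Delta t_n\le1$, $e^{2|\ell_n|}\le e^{2C}e^{2C\|\xi\|}$, and $\E e^{2C\|\xi\|}<\infty$.

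The guided case is harder, because the cubic bound $\|\xi\|^3$ is not exponentially integrable. To handle it I would avoid the cubic term altogether and use a cruder global bound that grows at most quadratically in $\xi$. The potential part is bounded by the mean value theorem as $|\log G_{t'}(Y_{t'})-\log G_t(X_t)|\le C(\Delta t_n+\|D_n\|)$. In the guided case, $\|D_n\|\le C(\Delta t_n+\sqrt{\Delta t_n}\|\xi\|)$, so this part is linear in $\|\xi\|$. The proposal correction from \cref{lem:onestep}(d) is also linear in $\|\xi\|$. Together, $|\ell_n|\le C(1+\|\xi\|)$, and $\E e^{2C(1+\|\xi\|)}<\infty$ uniformly in $n$. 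This gives $C_0$.

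Keeping two separate bounds does the work. The sharp polynomial bound supplies the moment orders $m_{1,n}$ and $m_{4,n}$, and the crude linear bound supplies the exponential integrability. Together with \cref{lem:smallweight}, these give the $O(\Delta t_n)$ and $O(\Delta t_n^2)$ remainders stated there.
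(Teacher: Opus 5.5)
Your proposal is correct and follows the paper's proof. For the exponential moment, the Lipschitz bound on $\log G_t$ combined with the affine-in-$\xi$ correction of \cref{lem:onestep}(d) gives $|\ell_n|\le C(1+\|\xi\|)$, and the one-step Taylor bounds, with the exact cancellation of $\sqrt{\Delta t_n}\,g_t^{\top}\xi$ in the guided case, give $m_{1,n}$ and $m_{4,n}$. Your worry that \cref{lem:onestep} is only local is unfounded: its remainders are already global in $\xi$ because all derivatives are bounded on $[0,1]\times\Xsp$, so re-deriving them is harmless but unnecessary.
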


\begin{proof}
\emph{Exponential moment.} $\log G_t$ is Lipschitz in $x$ and in $t$ by (A2), so
\begin{equation*}
    \Big|\log\frac{G_{t'}(Y_{t'})}{G_t(X_t)}\Big|\le C\big(\Delta t_n+\|Y_{t'}-X_t\|\big)\le C\sqrt{\Delta t_n}\,(1+\|\xi\|),
\end{equation*}
using $\|Y_{t'}-X_t\|=\|D_n\|\le C\sqrt{\Delta t_n}(1+\|\xi\|)$, and the proposal correction \cref{eq:app-onestep-d} is affine in $\xi$ with coefficients of the same order. Hence $|\ell_n|\le C\sqrt{\Delta t_n}(1+\|\xi\|)$ and, for $\Delta t_n\le1$,
\begin{equation*}
    \E e^{2|\ell_n|}\le\E e^{2C(1+\|\xi\|)}=:C_0<\infty,
\end{equation*}
with $C_0$ independent of $n$ and of the schedule.

\emph{Unguided moments.} By \cref{eq:app-onestep-a} without the proposal correction, $\ell_n=\sqrt{\Delta t_n}\,g_t^{\top}\xi+O(\Delta t_n(1+\|\xi\|^2))$, so $|\ell_n|\le C\sqrt{\Delta t_n}(1+\|\xi\|^2)$ for $\Delta t_n\le1$. Taking expectations, $m_{1,n}\le C\sqrt{\Delta t_n}\,\E(1+\|\xi\|^2)$ and $m_{4,n}\le C\Delta t_n^2\,\E(1+\|\xi\|^2)^4$, and the Gaussian moments are finite constants.

\emph{Guided moments.} By the proof of \cref{thm:barrier}(b) below, which uses only \cref{lem:onestep}(b) and (d), $\ell_n=\Delta t_n\Psi_t(X_t,\xi)+O(\Delta t_n^{3/2}(1+\|\xi\|^3))$ with $\Psi_t$ bounded by a quadratic in $\|\xi\|$, so $|\ell_n|\le C\Delta t_n(1+\|\xi\|^3)$ for $\Delta t_n\le1$, and $m_{1,n}\le C\Delta t_n$, $m_{4,n}\le C\Delta t_n^4$ follow as before.

All remainders in \cref{lem:onestep} are of the form $O(\Delta t_n^{a}(1+\|\xi\|^{b}))$ with $\xi$ Gaussian, so they may be integrated term by term and contribute $O(\Delta t_n^{a})$ to $m_{1,n}$ and to $\E|\ell_n-\ell'_n|$, and $O(\Delta t_n^{4a})$ to $m_{4,n}$, with constants independent of $n$.
\end{proof}

\begin{lemma}[Continuity of the local barriers]
\label{lem:continuity}
Under (A1)--(A2), $t\mapsto\lambda^{\mathrm u}(t)$ and $t\mapsto\lambda(t)$ are continuous on $[0,1]$. If moreover $t\mapsto p_t$ has a density that is continuously differentiable in $t$, as holds for the uniformly elliptic diffusion \cref{eq:app-sde} with the smooth bounded coefficients of (A1)--(A2) by parabolic regularity, then both are Lipschitz.
\end{lemma}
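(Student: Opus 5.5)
The plan is to write both local barriers as expectations of continuous functions of $(t,X_t,X'_t,\xi,\xi')$ with a uniformly integrable dominating function. The only $t$-dependence not controlled by (A1)--(A2) is then the law $\pi_t$ of $(X_t,X'_t)$.

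Concretely, set
\begin{equation*}
  F^{\mathrm u}_t(x,x'):=\tfrac{1}{\sqrt{2\pi}}\sqrt{\|g_t(x)\|^2+\|g_t(x')\|^2},
  \qquad
  F_t(x,x',\xi,\xi'):=\tfrac12\big|\Psi_t(x,\xi)-\Psi_t(x',\xi')\big|.
\end{equation*}
Then
\begin{equation*}
  \lambda^{\mathrm u}(t)=\int F^{\mathrm u}_t\,\dd(\pi_t\otimes\pi_t),
  \qquad
  \lambda(t)=\int F_t\,\dd(\pi_t\otimes\pi_t\otimes N(0,I)^{\otimes2}).
\end{equation*}
By (A1)--(A2), $g_t=\sigma_t^{\top}\nabla\log G_t$, $M_t$ and $\psi_t$ of \cref{eq:app-psi} are jointly continuous in $(t,x)$ and bounded. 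This uses the continuity of $\sigma_t$, the $C^{0,3}$ drift, and the $C^{1,4}$ potential; $\partial_t\log G_t$ is continuous and bounded. Hence $F^{\mathrm u}_t$ is bounded and jointly continuous. $F_t$ is continuous and dominated by $C(1+\|\xi\|^2+\|\xi'\|^2)$ by \cref{eq:app-Psi}, and this bound is integrable against the Gaussian factor.

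Each local barrier then splits into two differences. The first is $\int(F_t-F_s)\,\dd(\pi_t\otimes\cdots)$. It tends to zero as $s\to t$ by uniform continuity on compacts plus the domination, after first truncating $\|\xi\|$ and $\|x\|$. More simply, the coefficients in (A2) have bounded derivatives in $x$, which makes $F_t-F_s$ small uniformly in $x$ once we check modulus-of-continuity-in-$t$ statements. The second is $\int F_s\,\dd(\pi_t\otimes\pi_t-\pi_s\otimes\pi_s)$. For it we need $t\mapsto\pi_t$ to be weakly continuous, or better continuous in total variation. Since $F_s$ is bounded after integrating out $\xi,\xi'$ (its $\xi$-average is bounded in $x$), total-variation continuity of $\pi_t$ is enough. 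It gives a bound $\le 2\|F_s\|_\infty\,\TV(\pi_t,\pi_s)\cdot 2$.

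\emph{Main obstacle: continuity of $t\mapsto\pi_t\propto G_tp_t$.} I would obtain it from the generative SDE \cref{eq:app-sde}. The marginal law $p_t$ is weakly continuous because paths are continuous, and $G_t$ is jointly continuous and bounded above and below on compacts, so $G_tp_t$ is weakly continuous. The normaliser $\int G_t\,\dd p_t$ is continuous and positive. Weak continuity then suffices for the second difference, because the integrands $F^{\mathrm u}_s$ and $\int F_s\,\dd N(0,I)^{\otimes2}$ are bounded and continuous in $(x,x')$. One caveat: strictly, $G_t$ bounded with bounded $\log$-derivatives gives global bounds only if $\log G_t$ is bounded, which (A2) implies on $[0,1]\times\Xsp$ only up to linear growth. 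I would therefore truncate with a tightness argument, using second-moment bounds for $X_t$ from the bounded drift.

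For the Lipschitz part, I assume $p_t$ has a density $C^1$ in $t$ with $\int|\partial_tp_t|$ bounded, invoking parabolic regularity as stated. Then $\TV(\pi_t,\pi_s)\le C|t-s|$, since $\partial_t(G_tp_t)=(\partial_tG_t)p_t+G_t\partial_tp_t$ is integrable and the normaliser is $C^1$ and positive. The first difference is also Lipschitz, but it requires $\partial_t$ of $g_t,M_t,\psi_t$ to be bounded. This holds for $g_t$ and $M_t$ if $\sigma_t$ is Lipschitz. For $\psi_t$, which contains $\partial_t\log G_t$, it needs one further time derivative of $\log G_t$ and of $b_t$. If these are not available, I would state Lipschitz continuity under that additional smoothness, or note that the hypothesis is read as including it. Combining the two differences through $|\,|a|-|b|\,|\le|a-b|$ gives the claim.
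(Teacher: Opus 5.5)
Your proposal is correct and is essentially the paper's proof. You use a fixed integrand dominated by $C(1+\|\xi\|^2+\|\xi'\|^2)$ against $\pi_t\otimes\pi_t\otimes N(0,I)^{\otimes 2}$, the same split into an integrand-variation term and a law-variation term, weak continuity of $\pi_t\propto G_tp_t$ from path continuity of the SDE, and a $C^1$-in-$t$ density for the Lipschitz upgrade. Your truncation/tightness treatment of the first term is more careful than the paper's bare appeal to dominated convergence under the moving measure $\pi_t\otimes\pi_t$.

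Your reservation about the Lipschitz part is well founded. The paper asserts that $\partial_t\psi_t$ and $\partial_tM_t$ are bounded ``by (A2)'', but this needs $\partial_t^2\log G_t$, a $t$-derivative of $b_t$ and a Lipschitz $\sigma_t$, and (A1)--(A2) supply none of these. For instance, in one dimension $\lambda^{\mathrm u}(t)=|\sigma_t|\,K(t)$ with $K$ Lipschitz, so $\sigma_t=1+|t-\tfrac12|^{1/2}$ already breaks Lipschitz continuity whenever $K(\tfrac12)>0$. Stating that extra time regularity explicitly is therefore the right fix.

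One slip: if you keep $\log G_t$ unbounded, your truncation needs uniform integrability of $G_t(X_t)$. That requires exponential moments of $X_t$, not second moments.
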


\begin{proof}
We treat $\lambda$. The argument for $\lambda^{\mathrm u}$ is the same with a simpler integrand.

\emph{Step 1: write $\lambda$ as an expectation of a fixed function.} Let $\nu_t:=\pi_t\otimes\pi_t\otimes N(0,I)\otimes N(0,I)$ be the joint law of $(X_t,X'_t,\xi,\xi')$ and
\begin{equation*}
    F(t,x,x',\xi,\xi'):=\tfrac12\big|\Psi_t(x,\xi)-\Psi_t(x',\xi')\big|,
    \qquad\text{so that}\qquad
    \lambda(t)=\int F(t,\cdot)\,\dd\nu_t .
\end{equation*}
By \cref{eq:app-Psi} and the bounds of (A1)--(A2) on $\psi_t$ and $M_t$,
\begin{equation}
    0\le F(t,x,x',\xi,\xi')\le C\big(1+\|\xi\|^2+\|\xi'\|^2\big)
    \label{eq:app-F-bound}
\end{equation}
uniformly in $(t,x,x')$, and $F$ is continuous in $(t,x,x')$ for each $(\xi,\xi')$, because $\psi_t(x)$ and $M_t(x)$ are continuous in $(t,x)$.

\emph{Step 2: continuity in the law.} Fix $t_0$ and let $t\to t_0$. Split
\begin{equation*}
    \lambda(t)-\lambda(t_0)
    =\underbrace{\int\big(F(t,\cdot)-F(t_0,\cdot)\big)\,\dd\nu_t}_{=:I_1(t)}
    +\underbrace{\int F(t_0,\cdot)\,\dd\nu_t-\int F(t_0,\cdot)\,\dd\nu_{t_0}}_{=:I_2(t)} .
\end{equation*}
For $I_1$, the integrand converges to zero pointwise as $t\to t_0$ by continuity of $F$ in $t$, and it is dominated by $2C(1+\|\xi\|^2+\|\xi'\|^2)$ by \cref{eq:app-F-bound}, whose $\nu_t$-integral is a constant independent of $t$. Hence $I_1(t)\to0$ by dominated convergence.

For $I_2$, the function $F(t_0,\cdot)$ is fixed, continuous, and bounded by a function of $(\xi,\xi')$ alone. Integrate out $(\xi,\xi')$ first: $\bar F(x,x'):=\int F(t_0,x,x',\xi,\xi')\,N(\dd\xi)N(\dd\xi')$ is bounded and continuous in $(x,x')$ by dominated convergence, and $I_2(t)=\int\bar F\,\dd(\pi_t\otimes\pi_t)-\int\bar F\,\dd(\pi_{t_0}\otimes\pi_{t_0})$. So it suffices that $t\mapsto\pi_t$ is weakly continuous. The law $p_t$ is the marginal at time $t$ of a diffusion with continuous paths, so $X_t\to X_{t_0}$ almost surely as $t\to t_0$ and $p_t\to p_{t_0}$ weakly. Since $G_t(x)$ is bounded, positive and continuous in $(t,x)$, for every bounded continuous $h$,
\begin{equation*}
    \int h\,\dd\pi_t=\frac{\int h\,G_t\,\dd p_t}{\int G_t\,\dd p_t}\ \longrightarrow\ \frac{\int h\,G_{t_0}\,\dd p_{t_0}}{\int G_{t_0}\,\dd p_{t_0}}=\int h\,\dd\pi_{t_0},
\end{equation*}
the numerator and denominator converging because $hG_t\to hG_{t_0}$ uniformly on compacts and $p_t\to p_{t_0}$ weakly, with tightness of $(p_t)$ supplied by path continuity. Hence $\pi_t\to\pi_{t_0}$ weakly, $I_2(t)\to0$, and $\lambda$ is continuous.

\emph{Step 3: Lipschitz regularity.} Suppose $p_t$ has a density $p_t(x)$ that is $C^1$ in $t$, with $\partial_tp_t$ integrable uniformly in $t$. Then $\pi_t(x)=G_t(x)p_t(x)/\mathcal Z_t$ is $C^1$ in $t$, and for the bounded function $\bar F$ of Step 2 the map $t\mapsto\int\bar F\,\dd(\pi_t\otimes\pi_t)$ is $C^1$ by differentiation under the integral. Moreover $F$ is Lipschitz in $t$ uniformly in $(x,x')$ and with a constant proportional to $1+\|\xi\|^2+\|\xi'\|^2$, since $\partial_t\psi_t$ and $\partial_tM_t$ are bounded by (A2). Writing $\lambda(t)-\lambda(s)=I_1+I_2$ as in Step 2 with $t_0$ replaced by $s$, both terms are bounded by $C|t-s|$, so $\lambda$ is Lipschitz.
\end{proof}

\subsection{Proof of \texorpdfstring{\cref{thm:barrier}}{Theorem 3.2}}
\label{app:barrier-proof}

The continuity of $\lambda^{\mathrm u}$ and $\lambda$ asserted in the theorem is \cref{lem:continuity}. The two expansions are proved next, with all constants uniform in $n$ by \cref{lem:moments}.

\begin{proof}[Proof of (a): the unguided proposal]
With $\mathbf 1_{\mathrm{guided}}=0$ the proposal correction is absent, so by \cref{eq:app-onestep-a} at $t=t_n$,
\begin{equation*}
    \ell_n=\sqrt{\Delta t_n}\,g_{t_n}(X_{t_n})^{\top}\xi+O\big(\Delta t_n(1+\|\xi\|^2)\big).
\end{equation*}
The leading term is a mean-zero Gaussian fluctuation of order $\sqrt{\Delta t_n}$, and it survives the mean absolute difference: by \cref{lem:moments}, $m_{1,n}=O(\sqrt{\Delta t_n})$ and $m_{4,n}^{1/2}=O(\Delta t_n)$ uniformly in $n$, so \cref{lem:smallweight} gives $r_{n-1,n}=\tfrac12\E|\ell_n-\ell'_n|+O(\Delta t_n)$. The $O(\Delta t_n)$ terms of $\ell_n-\ell'_n$ contribute $O(\Delta t_n)$ to the mean absolute difference by the triangle inequality, and \cref{lem:gaussdiff} evaluates the leading term:
\begin{align*}
    r_{n-1,n}&=\tfrac{\sqrt{\Delta t_n}}{2}\,\E\big|g_{t_n}(X_{t_n})^{\top}\xi-g_{t_n}(X'_{t_n})^{\top}\xi'\big|+O(\Delta t_n)\\
     &=\tfrac{\sqrt{\Delta t_n}}{2}\sqrt{\tfrac2\pi}\;\E\sqrt{\|g_{t_n}(X_{t_n})\|^2+\|g_{t_n}(X'_{t_n})\|^2}+O(\Delta t_n)\\
     &=\lambda^{\mathrm u}(t_n)\sqrt{\Delta t_n}+O(\Delta t_n),
\end{align*}
with $\lambda^{\mathrm u}$ as in \cref{eq:local-barriers}, and the $O(\Delta t_n)$ term is bounded by $C\Delta t_n$ with $C$ independent of $n$.

For the sum, $\lambda^{\mathrm u}$ is continuous in $t$ by \cref{lem:continuity}, and it is not identically zero unless $\nabla\log G_t\equiv0$ for every $t$, in which case there is no control. If $\lambda^{\mathrm u}\ge c>0$ on a set of times of length $L>0$, then on any schedule with $\max_n\Delta t_n\to0$,
\begin{align*}
    \sum_{n}r_{n-1,n}\ &\ge\ c\sum_{t_n\in\text{that set}}\sqrt{\Delta t_n}-O(1)\\
    \ &\ge\ c\sum_{t_n\in\text{that set}}\frac{\Delta t_n}{\sqrt{\max_n\Delta t_n}}-O(1)\\
    \ &\ge\ \frac{cL}{\sqrt{\max_n\Delta t_n}}-O(1)\\\ 
    &\to\ \infty .
\end{align*}
On a uniform schedule, $\Delta t_n=1/N$, the Riemann sum gives
\begin{align*}
    \sum_{n=1}^Nr_{n-1,n}
    &=\sqrt N\sum_{n=1}^N\lambda^{\mathrm u}(t_n)\,\Delta t_n+O(1)\\
    &=\sqrt N\int_0^1\lambda^{\mathrm u}(t)\,\dd t+O(1)\\
    &=\Theta(\sqrt N).
\end{align*}
Since $r/(1-r)\ge r$, the restart rate \cref{eq:roundtrip} tends to zero.
\end{proof}

\begin{proof}[Proof of (b): the guided proposal]
By \cref{eq:app-onestep-a} in the guided form (b) and by \cref{eq:app-onestep-d}, both at $t=t_n$, the two terms of \cref{eq:app-logweight} are
\begin{align*}
    \log\frac{G_{t'}(Y_{t'})}{G_t(X_t)}&=\sqrt{\Delta t_n}\,g_t^{\top}\xi+\Delta t_n\Big[\Psi_t(X_t,\xi)+\tfrac12\|g_t\|^2\Big]+O\big(\Delta t_n^{3/2}(1+\|\xi\|^3)\big),\\
    \log\frac{\Kb_{t'|t}(Y_{t'}\mid X_t)}{\Lb_{t'|t}(Y_{t'}\mid X_t)}&=-\sqrt{\Delta t_n}\,g_t^{\top}\xi-\tfrac{\Delta t_n}{2}\|g_t\|^2 .
\end{align*}
Adding them, the $\sqrt{\Delta t_n}$ terms and the $\|g_t\|^2$ terms cancel exactly, leaving
\begin{equation*}
    \ell_n=\Delta t_n\,\Psi_{t_n}(X_{t_n},\xi)+O\big(\Delta t_n^{3/2}(1+\|\xi\|^3)\big),
\end{equation*}
which is the guided case of \cref{eq:guided-logweight}: guidance moves the proposal to where the potential is steep, and the correction it pays for doing so removes exactly the fluctuation that the steepness would otherwise cause. Now $m_{1,n}=O(\Delta t_n)$ and $m_{4,n}^{1/2}=O(\Delta t_n^2)$ uniformly in $n$ by \cref{lem:moments}, so \cref{lem:smallweight} with the triangle inequality on the remainder gives
\begin{equation}
    r_{n-1,n}=\tfrac{\Delta t_n}{2}\,\E\big|\Psi_{t_n}(X_{t_n},\xi)-\Psi_{t_n}(X'_{t_n},\xi')\big|+R_n=\lambda(t_n)\,\Delta t_n+R_n,
    \qquad |R_n|\le C\,\Delta t_n^{3/2},
    \label{eq:app-guided-rate}
\end{equation}
with $C$ independent of $n$, by \cref{lem:moments} and the uniformity of the remainders in \cref{lem:onestep}. The error is therefore uniform in $n$ along any schedule.

\emph{Integral form.} By \cref{lem:continuity}, $\lambda$ is continuous, so $\lambda(t_n)\Delta t_n=\int_{t_{n-1}}^{t_n}\lambda(t)\,\dd t+o(\Delta t_n)$ uniformly in $n$ by uniform continuity on $[0,1]$, and $O(\Delta t_n^2)$ when $\lambda$ is Lipschitz. Hence \cref{eq:app-guided-rate} can be written as
\begin{equation}
    r_{n-1,n}=\int_{t_{n-1}}^{t_n}\lambda(t)\,\dd t+o(\Delta t_n),
    \label{eq:app-guided-rate-integral}
\end{equation}
uniformly in $n$, which is the statement of \cref{thm:barrier}(b) and the form used to construct the equi-rejection schedule in \cref{app:schedule}. The $o(\Delta t_n)$ needs only continuity of $\lambda$, hence only (A1)--(A3). It improves to $O(\Delta t_n^{3/2})$ under the Lipschitz regularity of \cref{lem:continuity}, and to $O(\Delta t_n^2)$ under the additional non-degeneracy condition of \cref{prop:second-order} below.

\emph{The sum.} Summing \cref{eq:app-guided-rate-integral} over the schedule,
\begin{equation*}
    \sum_{n=1}^Nr_{n-1,n}=\int_0^1\lambda(t)\,\dd t+\sum_{n=1}^No(\Delta t_n)\ \to\ \Lambda<\infty,
\end{equation*}
since the error is $o(1)$ by uniformity of the $o(\Delta t_n)$ terms in $n$ and $\sum_n\Delta t_n=1$. Since $r_{n-1,n}\to0$ uniformly in $n$, the odds satisfy $r/(1-r)=r+O(r^2)$ and $\sum_nr_{n-1,n}^2=O(\max_n\Delta t_n)\to0$, so the summed odds have the same limit and the restart rate \cref{eq:roundtrip} converges to $(2+2\Lambda)^{-1}$.

It remains to characterise $\Lambda=0$, which is \cref{lem:zero-barrier} below.
\end{proof}

\begin{proposition}[Second-order accuracy]
\label{prop:second-order}
Assume (A1)--(A3), that $\lambda$ is Lipschitz (\cref{lem:continuity}), and that for every $t$ the difference $Y_t:=\Psi_t(X_t,\xi)-\Psi_t(X'_t,\xi')$ of two independent copies has a density that is bounded near zero, uniformly in $t$. Then, for the guided proposal,
\begin{equation}
    r_{n-1,n}=\int_{t_{n-1}}^{t_n}\lambda(t)\,\dd t+O(\Delta t_n^2)\qquad\text{uniformly in }n .
    \label{eq:app-second-order}
\end{equation}
\end{proposition}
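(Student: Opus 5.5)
The plan is to sharpen the two error sources in the proof of \cref{thm:barrier}(b) separately. There, $r_{n-1,n}=\lambda(t_n)\Delta t_n+R_n$ with $|R_n|\le C\Delta t_n^{3/2}$. Converting $\lambda(t_n)\Delta t_n$ to $\int_{t_{n-1}}^{t_n}\lambda$ already costs only $O(\Delta t_n^2)$ once $\lambda$ is Lipschitz. So everything reduces to showing $R_n=O(\Delta t_n^2)$.

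\emph{Step 1 (small-weight step).} By \cref{lem:smallweight} and \cref{lem:moments}, in the guided case $m_{4,n}^{1/2}+m_{1,n}^2=O(\Delta t_n^2)$. Hence $r_{n-1,n}=\tfrac12\E|\ell_n-\ell'_n|+O(\Delta t_n^2)$, and only the mean absolute difference of log-weights needs attention.

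\emph{Step 2 (remainder step).} I would write
\begin{equation*}
\ell_n=\Delta t_n\Psi_{t_n}(X_{t_n},\xi)+\Delta t_n^{3/2}\Phi_n(X_{t_n},\xi)+O\big(\Delta t_n^2(1+\|\xi\|^4)\big),
\end{equation*}
pushing the Taylor expansion of \cref{lem:onestep}(a),(b) one order further using the $C^{1,4}$ regularity in (A2). The key observation is that $\Phi_n$ is an \emph{odd} polynomial in $\xi$ (cubic and linear terms), with coefficients bounded in $x$. Set $D:=\Psi-\Psi'$, $E:=\Phi-\Phi'$ and $h:=\Delta t_n^{1/2}$. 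Then
\begin{equation*}
\tfrac12\E|\ell_n-\ell'_n|=\tfrac{\Delta t_n}{2}\E|D+hE|+O(\Delta t_n^2),
\end{equation*}
and it remains to show $\E|D+hE|-\E|D|=O(h^2)$.

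\emph{Step 3 (symmetry step).} This is the main obstacle, since naively $\E|D+hE|-\E|D|$ is only $O(h)$. I would use the identity $|a+b|-|a|=b\,\mathrm{sgn}(a)+2(|a+b|-|a|-b\,\mathrm{sgn}(a))/2$, whose correction term vanishes unless $|a|\le|b|$ and is bounded by $2|b|\mathbf 1\{|a|\le|b|\}$. This gives
\begin{equation*}
\E|D+hE|-\E|D|=h\,\E[E\,\mathrm{sgn}(D)]+O\big(h\,\E[|E|\mathbf 1\{|D|\le h|E|\}]\big).
\end{equation*}

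For the first-order term, $\Psi_t(x,\xi)$ is even in $\xi$ by \cref{eq:app-Psi}, while $\Phi$ is odd. Flipping $(\xi,\xi')\mapsto(-\xi,-\xi')$ leaves $D$ invariant and negates $E$, so $\E[E\,\mathrm{sgn}(D)]=0$.

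For the second-order term, I would use the bounded-density hypothesis on $D=Y_t$ near zero. Condition on $E$ (or split on $|E|\le K$ and handle $|E|>K$ by Gaussian tails). Then $\Pr(|D|\le h|E|\mid\cdot)\lesssim h|E|$ roughly, which gives a contribution $O(h^2\E|E|^2)=O(\Delta t_n)$. The delicate point is that the density bound is assumed for $D$ unconditionally, not given $E$. I would handle this by Hölder/truncation, bounding $\E[|E|\mathbf 1\{|D|\le h|E|\}]\le K\Pr(|D|\le hK)+\E[|E|\mathbf 1\{|E|>K\}]$ with $K=\log(1/h)$. This gives $O(h\log(1/h))$, slightly short of the target. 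If needed, I would strengthen this by conditioning on $(X,X')$ and using the Gaussian structure in $\xi$, noting that conditionally $D$ is a quadratic form in $(\xi,\xi')$ with smooth joint law with $E$.

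\emph{Step 4 (assembly).} Multiplying by $\Delta t_n/2$ gives $R_n=O(\Delta t_n^2)$. The uniformity in $n$ follows from the uniform bounds in (A2) and in the density hypothesis, and combining with the Lipschitz Riemann step yields \cref{eq:app-second-order}.
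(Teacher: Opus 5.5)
The overall route is the paper's: the Lipschitz Riemann step, the small-weight lemma reducing everything to $\tfrac12\E|\ell_n-\ell'_n|$, one more order of expansion with an odd $\Delta t_n^{3/2}$ remainder, and the flip $(\xi,\xi')\mapsto(-\xi,-\xi')$ killing $\E[E\operatorname{sgn}(D)]$. That last step is the paper's $f'(0)=0$ for $f(\varepsilon):=\E|Y+\varepsilon Z|$.

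\textbf{The gap.} The proposition rests on one estimate, $\E\bigl[|E|\,\mathbf 1\{|D|\le h|E|\}\bigr]=O(h)$, and you do not establish it. Your truncation does not even give the $O(h\log(1/h))$ you state. $E$ contains a cubic term in $(\xi,\xi')$, so in general $\E\bigl[|E|\,\mathbf 1\{|E|>K\}\bigr]$ decays only like $e^{-cK^{2/3}}$. For $K=\log(1/h)$ this is far larger than $h$. You need $K\asymp\log^{3/2}(1/h)$, and then $2c\,hK^2$ gives $O(h\log^3(1/h))$, i.e.\ $r_{n-1,n}=\int_{t_{n-1}}^{t_n}\lambda(t)\,\dd t+O\bigl(\Delta t_n^2\log^3(1/\Delta t_n)\bigr)$, which is not the claim. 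The fallback of conditioning on $(X,X')$ is only sketched and is not routine. Given $(X,X')$, $D$ is a point mass when $M_t(X)=M_t(X')=0$. It has an unbounded density when, say, $M_t(X)$ has rank one and $M_t(X')=0$. So there is no uniform conditional density to invoke.

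\textbf{How the paper handles it, and why your doubt is justified.} The paper asserts that $f$ is $C^2$ near $0$, with $f''(0)$ a multiple of $\E[Z^2\mid Y=0]\,p_Y(0)$, finite because $Z$ has all moments; evenness then gives $f(\varepsilon)=f(0)+O(\varepsilon^2)$. But that is a condition on the joint law of $(Y,Z)$ at $Y=0$, and in general the marginal density bound does not imply it. Take $|Z|=k$ with probability $(e-1)e^{-k}$, $k\ge1$, with an independent fair sign, and let $Y$ given $|Z|=k$ be uniform on $[-k^2e^{-k},k^2e^{-k}]$. Then $p_Y\le\tfrac{e-1}{2}\sum_kk^{-2}$ and $Z$ has all moments. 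Yet evenness gives exactly $f(\varepsilon)-f(0)=\E\bigl[(\varepsilon|Z|-|Y|)_+\bigr]\asymp\varepsilon^2\log(1/\varepsilon)$. (This identity also makes your sign bookkeeping unnecessary.)

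\textbf{What would close it.} The missing ingredient is a joint hypothesis that the paper's argument also uses tacitly. For instance, assume a bound $c$, uniform in $t$, on the conditional density of $Y_t$ given $Z_t$ on $[-\delta,\delta]$. Then $\E\bigl[|E|\,\mathbf 1\{|D|\le h|E|\}\bigr]\le 2c\,h\,\E[E^2]+\E\bigl[|E|\,\mathbf 1\{|E|>\delta/h\}\bigr]=O(h)$ with no truncation, and your Steps 1--4 give $O(\Delta t_n^2)$ uniformly in $n$. Without such a hypothesis, the bound must be extracted from the Gaussian structure. The marginal bound does suffice when $M_t\equiv0$: then $D$ depends on $(X,X')$ only, while $|E|\le C(1+\|\xi\|^3+\|\xi'\|^3)$ and this bound is independent of $D$. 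So the whole difficulty is the dependence the curvature term creates between $D$ and $E$.

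\textbf{A smaller point, also shared with the paper.} An $O(\Delta t_n^2)$ remainder in Step 2 requires control of $\partial_t^2\log G_t$ (or a Lipschitz $\partial_t\log G_t$) and of the mixed derivative $\partial_t\nabla\log G_t$. The $C^{1,4}$ regularity in (A2) does not provide either.
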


\begin{proof}
The Lipschitz regularity of $\lambda$ gives $\lambda(t_n)\Delta t_n=\int_{t_{n-1}}^{t_n}\lambda+O(\Delta t_n^2)$, so it suffices to show $r_{n-1,n}=\lambda(t_n)\Delta t_n+O(\Delta t_n^2)$. By \cref{lem:smallweight} and \cref{lem:moments} the error in $r_{n-1,n}=\tfrac12\E|\ell_n-\ell'_n|$ is already $O(\Delta t_n^2)$ in the guided case, so the question is the remainder in $\ell_n$ itself.

\emph{The remainder is odd in the noise.} Carrying the expansion of \cref{lem:onestep} one order further, with all derivatives at $(t,X_t)$,
\begin{equation*}
    \ell_n=\Delta t_n\,\Psi_t(X_t,\xi)+\Delta t_n^{3/2}\,R_t(X_t,\xi)+O\big(\Delta t_n^{2}(1+\|\xi\|^4)\big),
\end{equation*}
where $R_t$ collects the terms of order $\Delta t_n^{3/2}$: the cross term $b_t^{\top}\nabla^2\log G_t\,\sigma_t\xi$ from the quadratic part, the term $-\partial_t\nabla\log G_t^{\top}\sigma_t\xi$ from the mixed derivative, and the cubic term $\tfrac16\nabla^3\log G_t[\sigma_t\xi,\sigma_t\xi,\sigma_t\xi]$, together with the corresponding terms from the guided drift. Every term in $R_t$ has an odd power of $\xi$, so $R_t(x,-\xi)=-R_t(x,\xi)$, whereas $\Psi_t(x,-\xi)=\Psi_t(x,\xi)$ by \cref{eq:app-Psi} and the $O(\Delta t_n^2)$ terms are even in $\xi$ as well.

\emph{Parity.} Let $Y:=\Psi_t(X_t,\xi)-\Psi_t(X'_t,\xi')$ and $Z:=R_t(X_t,\xi)-R_t(X'_t,\xi')$, and consider $f(\varepsilon):=\E|Y+\varepsilon Z|$. The map $(\xi,\xi')\mapsto(-\xi,-\xi')$ preserves the law of $(X_t,X'_t,\xi,\xi')$, fixes $Y$ and negates $Z$, so $f(\varepsilon)=f(-\varepsilon)$. Under the density assumption on $Y$, $f$ is twice continuously differentiable near $0$ with $f'(0)=\E[\operatorname{sign}(Y)Z]$ and $f''(0)=\E[Z^2\mid Y=0]\,p_Y(0)$, both finite since $Z$ has all moments by (A2). Evenness forces $f'(0)=0$, hence $f(\varepsilon)=f(0)+O(\varepsilon^2)$ with a constant uniform in $t$. With $\varepsilon=\sqrt{\Delta t_n}$,
\begin{equation*}
    \tfrac12\E|\ell_n-\ell'_n|
    =\tfrac{\Delta t_n}{2}\,f\big(\sqrt{\Delta t_n}\big)+O(\Delta t_n^2)
    =\tfrac{\Delta t_n}{2}\,\E|Y|+O(\Delta t_n^2)
    =\lambda(t_n)\,\Delta t_n+O(\Delta t_n^2),
\end{equation*}
the first equality absorbing the even $O(\Delta t_n^2(1+\|\xi\|^4))$ remainder by the triangle inequality and the Gaussian moments. The constants depend only on the bounds in (A1)--(A2) and on the density bound, hence not on $n$.
\end{proof}

Without the density assumption the parity argument still gives $f(\varepsilon)-f(0)\le\varepsilon\E|Z|$, which is the $O(\Delta t_n^{3/2})$ of \cref{eq:app-guided-rate}. The assumption holds, for instance, whenever $v(t)$ is bounded away from zero and $\Psi_t(X_t,\xi)$ has a bounded density, as it does when $M_t\not\equiv0$ or $\psi_t(X_t)$ is non-degenerate.

\begin{lemma}[The Doob residual]
\label{lem:doob-residual}
Under (A1)--(A2), the following are equivalent for every $t$: (i) $\psi_t=\gen_tG_t/G_t$ is constant in $x$, (ii) $G_t$ is an eigenfunction of $\gen_t$, $\gen_tG_t=c(t)G_t$ for some $c(t)$, and (iii) $G_t\propto G^*_t$, where $G^*_t(x)=\E[G_0(X_0)\mid X_t=x]$ is the eigenfunction with eigenvalue zero.
\end{lemma}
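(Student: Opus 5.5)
The equivalence (i)$\Leftrightarrow$(ii) is immediate from the definition $\psi_t=\gen_tG_t/G_t$. Under (A2), $G_t>0$, so $\gen_tG_t=c(t)G_t$ holds exactly when $\psi_t\equiv c(t)$. The substance is (ii)$\Leftrightarrow$(iii), which I will prove by an integrating-factor argument in time. The idea is to absorb the eigenvalue into a time-dependent normalisation, so that the eigenfunction becomes a space-time harmonic function, and then appeal to uniqueness for the backward Kolmogorov equation.

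\emph{(iii)$\Rightarrow$(ii).} Suppose $G_t=k(t)G^*_t$ for some positive $k$. I first need $k$ to be $C^1$ in $t$. This follows because $\log k(t)=\log G_t(x_0)-\log G^*_t(x_0)$ at any fixed point $x_0$, and both terms are $C^1$ in $t$: the first by (A2), the second by parabolic regularity of $G^*$. Since $\gen_t$ acts on $G_t$ through the spatial part and $-\partial_t$, we get
\[
\gen_tG_t=k\,\gen_tG^*_t-k'(t)\,G^*_t=-\frac{k'(t)}{k(t)}\,G_t,
\]
using $\gen_tG^*_t=0$ from \cref{app:bg-doob}. Hence (ii) holds with $c(t)=-(\log k)'(t)$.

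\emph{(ii)$\Rightarrow$(iii).} The plan has three steps.
\begin{enumerate}
\item Given $\gen_tG_t=c(t)G_t$, with $c$ continuous because $\psi_t$ is continuous in $t$, define $H_t:=\exp\{\int_0^tc(s)\,\dd s\}\,G_t$. Then
\[
\gen_tH_t=e^{\int_0^t c}\big(\gen_tG_t-c(t)G_t\big)=0,
\]
where the $-c(t)G_t$ term comes from $-\partial_t$ acting on the exponential prefactor, which corresponds to the sign convention of \cref{eq:app-generator}. The terminal condition is $H_0=G_0$.
\item Invoke uniqueness, as stated in \cref{app:bg-doob}, of the solution with bounded logarithm of $\gen_tH_t=0$ with $H_0=G_0$. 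This solution is $G^*_t$. By (A2), $\log G_t$ has bounded spatial derivatives, but on the whole of $\Xsp$ it need not itself be bounded. Uniqueness is therefore safer argued directly. Apply Itô's formula to $H_s(X_s)$ along the generative SDE, running from time $t$ down to $0$. With $\gen_sH_s=0$, this is a local martingale. The bounds on $\nabla\log G$ together with $G_t\le Ce^{C\|x\|}$ and Gaussian moments of the SDE make it a true martingale. Taking expectations gives $H_t(x)=\E[G_0(X_0)\mid X_t=x]=G^*_t(x)$.
\item Conclude $G_t=e^{-\int_0^tc}\,G^*_t\propto G^*_t$.
\end{enumerate}
The same Itô/martingale argument also justifies the claim that $G^*$ is the eigenfunction with eigenvalue zero.

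The main obstacle is step 2, upgrading the local martingale $H_s(X_s)$ to a true martingale. This needs integrability of $\sup_s H_s(X_s)$. The first thing I would try is an exponential-growth bound on $H$ derived from the Lipschitz property of $\log G_t$, combined with Gaussian moment estimates for a diffusion with bounded drift and continuous, invertible $\sigma$. A lesser technical point is the smoothness of $G^*$ used in (iii)$\Rightarrow$(ii), which I would take from parabolic regularity, as the paper already does in \cref{lem:continuity}.
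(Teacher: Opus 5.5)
Your proof is correct and follows the paper's route. The integrating factor $e^{\int_0^t c(s)\,\dd s}$ turns the eigenfunction into a space-time harmonic function with terminal value $G_0$, which is then identified with $G^*_t$; the converse is the same one-line computation using $\gen_tG^*_t=0$. The only differences are in care, not approach: you identify $H_t=G^*_t$ by a direct It\^o/true-martingale verification with exponential-growth bounds, where the paper appeals to uniqueness among solutions with bounded logarithm, so you avoid needing $\log G_t$ itself to be bounded under (A2), and you also justify the $C^1$ regularity of $k$ and the continuity of $c$, which the paper takes for granted.
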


\begin{proof}
The first two statements are the same statement, since $\psi_t=\gen_tG_t/G_t$. Suppose $\gen_tG_t=c(t)G_t$. Set $\phi(t):=\int_0^tc(s)\,\dd s$ and $\widetilde G_t:=e^{\phi(t)}G_t$. Since $\gen_t$ acts on the time dependence through $-\partial_t$ only,
\begin{equation*}
    \gen_t\widetilde G_t=-\partial_t\big(e^{\phi(t)}\big)G_t+e^{\phi(t)}\gen_tG_t=-c(t)e^{\phi(t)}G_t+e^{\phi(t)}c(t)G_t=0,
\end{equation*}
and $\widetilde G_0=G_0$. So $\widetilde G$ solves the backward equation $\gen_t\widetilde G_t=0$ with terminal condition $G_0$, whose unique solution with bounded logarithm is $G^*_t$: existence is the Feynman--Kac representation, and uniqueness holds because the difference of two bounded solutions is a bounded space-time harmonic function of the diffusion, hence constant along it and zero at $t=0$ \citep[Ch.~1]{delmoral_feynmankac_2004}. Hence $G_t=e^{-\phi(t)}G^*_t\propto G^*_t$.

Conversely, if $G_t=e^{-\phi(t)}G^*_t$ for a differentiable $\phi$, then
\begin{equation*}
    \gen_tG_t=-\partial_t\big(e^{-\phi(t)}\big)G^*_t+e^{-\phi(t)}\gen_tG^*_t=\phi'(t)\,e^{-\phi(t)}G^*_t+0=\phi'(t)\,G_t,
\end{equation*}
so $\psi_t=\phi'(t)$ is constant in $x$.
\end{proof}

\begin{lemma}[Zero barrier]
\label{lem:zero-barrier}
Under (A1)--(A2), $\lambda(t)=0$ if and only if $\log G_t$ is affine in $x$ and $G_t\propto G^*_t$. Hence $\Lambda=0$ if and only if both hold for every $t$, i.e.\ if and only if the guided proposal is the Doob proposal.
\end{lemma}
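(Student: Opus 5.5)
The plan is to reduce $\lambda(t)=0$ to the statement that $\Psi_t(X_t,\xi)$ is almost surely constant, and then to read off the two conditions from the decomposition \cref{eq:app-Psi}. By definition \cref{eq:local-barriers}, $\lambda(t)=\tfrac12\E|\Psi_t(X_t,\xi)-\Psi_t(X'_t,\xi')|$ with the two arguments i.i.d. This vanishes if and only if the two independent copies agree almost surely. For i.i.d.\ real variables that forces a degenerate law: if $U=U'$ a.s.\ with $U,U'$ i.i.d., then $\Pr(U\le a)^2=\Pr(U\le a)$ for every $a$, so the law is a point mass. Hence $\lambda(t)=0$ if and only if $\Psi_t(X_t,\xi)=c(t)$ almost surely under $\pi_t\otimes N(0,I)$.

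Next, I will use positivity and continuity to pass from ``almost surely'' to ``everywhere''. Under (A1)--(A2), $\pi_t$ has an everywhere positive density (noted after the assumptions), and $N(0,I)$ does too. Moreover, $\Psi_t(x,\xi)=\psi_t(x)+\tfrac12(\xi^\top M_t(x)\xi-\operatorname{tr}M_t(x))$ is continuous in $(x,\xi)$. So a.s.\ constancy is equivalent to $\Psi_t(x,\xi)=c(t)$ for all $(x,\xi)$.

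To separate the two pieces, fix $x$. The map $\xi\mapsto\tfrac12(\xi^\top M_t(x)\xi-\operatorname{tr}M_t(x))$ is a quadratic polynomial in $\xi$, and it is constant only if $M_t(x)=0$. Since $\sigma_t$ is invertible by (A1), $M_t(x)=\sigma_t^\top\nabla^2\log G_t(x)\sigma_t=0$ gives $\nabla^2\log G_t(x)=0$. Holding for all $x$, this is exactly the statement that $\log G_t$ is affine in $x$. With $M_t\equiv0$ we get $\Psi_t=\psi_t$, which must then be constant in $x$, and \cref{lem:doob-residual} converts that into $G_t\propto G^*_t$. The converse runs the same chain backwards: an affine $\log G_t$ gives $M_t\equiv0$, $G_t\propto G^*_t$ makes $\psi_t$ constant by \cref{lem:doob-residual}, hence $\Psi_t$ is constant and $\lambda(t)=0$.

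For $\Lambda$, note that $\lambda\ge0$ is continuous by \cref{lem:continuity}. So $\Lambda=\int_0^1\lambda=0$ if and only if $\lambda(t)=0$ for every $t\in[0,1]$, which by the above is the pair of conditions at every $t$. To identify this with ``the guided proposal is the Doob proposal'', I will invoke the completing-the-square fact \cref{eq:app-complete-square}: for affine $\log G$, the guided Euler step is exactly the twisted kernel $G\,\Kb/\Kb G$. Together with $G_t\propto G^*_t$, the guided step is then the Doob proposal.

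The main obstacle I anticipate is a small technical point at $t=1$, and at the endpoint more generally, where positivity of $\pi_t$ is only asserted for $t<1$ (at $t=1$, $\pi_1=\eta$ is Gaussian and positive anyway). I will handle it by continuity of $\Psi_t$ in $t$, or simply by noting that $\eta$ has full support. A second subtlety is that the guided step uses $\nabla\log G_t$ evaluated at $t$ rather than $t'$. That mismatch is only at the discrete level, so I will state the final identification in the continuous-time sense of \cref{eq:app-htransform}.
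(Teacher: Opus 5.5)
Your proposal is correct and follows the paper's proof essentially step for step. You reduce $\lambda(t)=0$ to almost-sure constancy of $\Psi_t(X_t,\xi)$, kill the curvature term because a quadratic form in $\xi$ is constant only if $M_t(x)=0$ and $\sigma_t$ is invertible, convert constancy of $\psi_t$ into $G_t\propto G^*_t$ via \cref{lem:doob-residual}, and use continuity of $\lambda$ together with \cref{eq:app-complete-square} for the statements about $\Lambda$ and the Doob proposal. The differences are cosmetic: you pass from almost sure to everywhere jointly in $(x,\xi)$ rather than conditioning on $x$ first, and you make explicit two points the paper glosses over, namely why agreeing i.i.d.\ copies force a point mass and that the guided drift is evaluated at $t$ rather than $t'$, so the identification with the Doob proposal holds only in the continuous-time sense.
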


\begin{proof}
$\lambda(t)=\tfrac12\E|\Psi_t(X_t,\xi)-\Psi_t(X'_t,\xi')|$ vanishes if and only if two independent copies of $\Psi_t(X_t,\xi)$ agree almost surely, i.e.\ if and only if $\Psi_t(X_t,\xi)$ is almost surely constant.

\emph{The curvature term must vanish.} Condition on $X_t=x$. By \cref{eq:app-Psi},
\begin{equation*}
    \Psi_t(x,\xi)=\psi_t(x)+\tfrac12\big(\xi^{\top}M_t(x)\xi-\operatorname{tr}M_t(x)\big)
\end{equation*}
is almost surely constant in $\xi\sim N(0,I)$, so the quadratic form $\xi\mapsto\xi^{\top}M_t(x)\xi$ is almost surely constant. A polynomial that is constant on a set of positive Lebesgue measure is constant everywhere, and a quadratic form is constant only if its matrix is zero, so $M_t(x)=0$ for $\pi_t$-almost every $x$. By continuity of $M_t$ in $x$ and positivity of $\pi_t$, $M_t\equiv0$. By (A1) $\sigma_t$ is invertible, so $\nabla^2\log G_t\equiv0$ and $\log G_t$ is affine in $x$.

\emph{The residual must be constant.} With $M_t\equiv0$, $\Psi_t(x,\xi)=\psi_t(x)$, which is almost surely constant under $\pi_t$, hence everywhere by continuity and positivity. By \cref{lem:doob-residual}, $G_t\propto G^*_t$.

\emph{Conversely}, if $\log G_t$ is affine then $M_t\equiv0$, and if $G_t\propto G^*_t$ then $\psi_t$ is constant in $x$ by \cref{lem:doob-residual}. So $\Psi_t$ is constant and $\lambda(t)=0$. Since $\lambda\ge0$ is continuous, $\Lambda=0$ if and only if $\lambda\equiv0$. The final equivalence is the affine case of \cref{app:bg-doob}: for an affine log-potential the guided step is the Doob proposal of $G_t$ by \cref{eq:app-complete-square}, and $G_t\propto G^*_t$ makes that the Doob proposal of the target.
\end{proof}

\subsection{The Guided Proposal Against the Doob Proposal}
\label{app:affine}

For the chosen potentials $G_t$, the Doob proposal is the twisted kernel
\begin{equation}
    \Lb^{\mathrm{tw}}_{t'|t}(\dd y\mid x):=\frac{G_{t'}(y)\,\Kb_{t'|t}(\dd y\mid x)}{(\Kb_{t'|t}G_{t'})(x)},
    \label{eq:app-twisted}
\end{equation}
which satisfies the proposal relation of \cref{eq:app-doob-relations} exactly and, by \cref{thm:exact}(c), minimises the rejection over all proposals for those potentials. The guided step approximates it. The next proposition computes its barrier and compares it with the guided one.

\begin{proposition}[Twisted proposal]
\label{prop:twisted}
Under the assumptions of \cref{thm:barrier}, the twisted proposal has $r_{n-1,n}=\lambda^{\mathrm{tw}}(t_n)\Delta t_n+O(\Delta t_n^2)$ with
\begin{equation}
    \lambda^{\mathrm{tw}}(t):=\tfrac12\,\E\big|\psi_t(X_t)-\psi_t(X'_t)\big|\ \le\ \lambda(t),
    \label{eq:app-lambda-tw}
\end{equation}
with equality if and only if $\log G_t$ is affine in $x$, and $\Lambda^{\mathrm{tw}}:=\int_0^1\lambda^{\mathrm{tw}}(t)\,\dd t=0$ if and only if $G_t\propto G^*_t$ for all $t$.
\end{proposition}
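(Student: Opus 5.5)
The plan is to compute the log-weight of the twisted proposal directly and then reuse the machinery of \cref{thm:barrier}(b). For $\Lb=\Lb^{\mathrm{tw}}$ the backward weight is
\begin{equation*}
w_{n-1|n}(y,x)=\frac{G_{t'}(y)}{G_t(x)}\cdot\frac{(\Kb_{t'|t}G_{t'})(x)}{G_{t'}(y)}=\frac{(\Kb_{t'|t}G_{t'})(x)}{G_t(x)}.
\end{equation*}
This is a function of the upper state $x$ alone, with no dependence on the noise. Apply \cref{lem:onestep}(c) with $f=G$; the needed regularity follows from (A2), since $G=\exp\log G$ has bounded derivatives locally in $\log G$ and we may divide by $G_t$. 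This gives $(\Kb_{t'|t}G_{t'})(x)=G_t(x)+\Delta t_n\,(\gen_tG_t)(x)+O(\Delta t_n^2)$. Hence
\begin{equation*}
\ell_n=\log\big(1+\Delta t_n\psi_t(X_t)+O(\Delta t_n^2)\big)=\Delta t_n\,\psi_t(X_t)+O(\Delta t_n^2),
\end{equation*}
uniformly in $x$, because $\psi_t$ is bounded.

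The rejection rate then follows. Since $|\ell_n|\le C\Delta t_n$ deterministically, the hypotheses of \cref{lem:smallweight} hold trivially, with $m_{1,n}=O(\Delta t_n)$ and $m_{4,n}^{1/2}=O(\Delta t_n^2)$. That lemma gives $r_{n-1,n}=\tfrac12\E|\ell_n-\ell'_n|+O(\Delta t_n^2)$. By the triangle inequality the $O(\Delta t_n^2)$ remainder in $\ell_n$ costs only $O(\Delta t_n^2)$, so $r_{n-1,n}=\tfrac{\Delta t_n}{2}\E|\psi_t(X_t)-\psi_t(X'_t)|+O(\Delta t_n^2)=\lambda^{\mathrm{tw}}(t_n)\Delta t_n+O(\Delta t_n^2)$. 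There is no odd $\Delta t_n^{3/2}$ term here, because the weight is deterministic given $X_t$, so this step needs no parity argument.

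For the inequality, write $\Psi_t(X_t,\xi)=\psi_t(X_t)+\tfrac12 Q$ with $Q:=\xi^\top M_t(X_t)\xi-\operatorname{tr}M_t(X_t)$, which has zero conditional mean given $X_t$ by \cref{eq:app-Psi}. Condition on $(X_t,X'_t)$ and apply Jensen to the convex map $u\mapsto|u|$. Since the conditional mean of $\Psi_t(X_t,\xi)-\Psi_t(X'_t,\xi')$ is $\psi_t(X_t)-\psi_t(X'_t)$, this gives $\lambda(t)\ge\lambda^{\mathrm{tw}}(t)$.

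The equality case is the main obstacle, because Jensen's equality for $|\cdot|$ only forces the conditional difference to keep a constant sign, not to be degenerate. I would argue as follows. If $M_t(x)\ne0$ for $x$ in a set of positive $\pi_t$-mass, then conditionally on $(X_t,X'_t)$ the centred difference $\tfrac12(Q-Q')$ is a nondegenerate quadratic form in the Gaussian pair $(\xi,\xi')$. Such a form has full support on an unbounded set of both signs: it is an indefinite form in the independent copies, with eigenvalues $\pm$. So the difference $\psi_t(X_t)-\psi_t(X'_t)+\tfrac12(Q-Q')$ changes sign with positive probability, and Jensen is strict on a set of positive measure. Positivity and continuity of $\pi_t$ and $M_t$ then upgrade ``$M_t=0$ almost everywhere'' to $M_t\equiv0$. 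Invertibility of $\sigma_t$ from (A1) turns this into $\nabla^2\log G_t\equiv0$, that is, $\log G_t$ affine. Conversely, if $\log G_t$ is affine then $M_t\equiv0$ and $\Psi_t=\psi_t$, so equality holds.

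Finally, $\lambda^{\mathrm{tw}}$ is continuous by the argument of \cref{lem:continuity}, so $\Lambda^{\mathrm{tw}}=0$ iff $\lambda^{\mathrm{tw}}\equiv0$. This holds iff $\psi_t(X_t)$ is $\pi_t$-almost surely constant, hence constant everywhere by continuity and positivity. By \cref{lem:doob-residual} that is equivalent to $G_t\propto G^*_t$ for all $t$.
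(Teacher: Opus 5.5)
Your argument follows the paper's proof step for step: the twisted weight depends on $X_t$ alone, \cref{lem:onestep}(c) gives $\ell_n=\Delta t_n\psi_t(X_t)+O(\Delta t_n^2)$, \cref{lem:smallweight} gives the rate, conditional Jensen gives $\lambda^{\mathrm{tw}}\le\lambda$, and \cref{lem:doob-residual} handles $\Lambda^{\mathrm{tw}}=0$. You are right that the equality case needs more than Jensen; the paper passes from equality to ``the curvature term vanishes almost surely'' in one step.

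Your repair, however, rests on a false claim. Write $M=M_t(X_t)$ and $M'=M_t(X'_t)$. For an arbitrary pair $(X_t,X'_t)$, the form $\tfrac12(\xi^{\top}M\xi-\xi'^{\top}M'\xi')$, whose block matrix is $\operatorname{diag}(M,-M')$, need not be indefinite. Take one dimension with $M=m>0$ and $M'=-m'<0$. The conditional difference is $Z=\psi_t(X_t)-\psi_t(X'_t)+\tfrac12(m\xi^2+m'\xi'^2)-\tfrac12(m+m')$. It is almost surely non-negative whenever $\psi_t(X_t)-\psi_t(X'_t)\ge\tfrac12(m+m')$. The same happens when $M'=0$ and $M$ is definite. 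For such pairs Jensen is tight even though $M\neq0$. So ``$M_t\neq0$ on a set of positive mass'' does not by itself make Jensen strict pair by pair.

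The fix is to localise near the diagonal.
\begin{itemize}
\item Suppose $M_t(x_0)\neq0$ and pick a nonzero eigenvalue $\mu$ of $M_t(x_0)$.
\item By continuity of $M_t$ (from (A2)) and of eigenvalues, some neighbourhood $U$ of $x_0$ has $M_t(x)$ possessing an eigenvalue of the same sign as $\mu$ for every $x\in U$.
\item For $(X_t,X'_t)\in U\times U$ the block matrix then has eigenvalues of both signs. The Gaussian quadratic form therefore has full support on $\mathbb R$, and $Z$ takes both signs with positive conditional probability, so the conditional Jensen inequality is strict there.
\item Since $\pi_t$ has an everywhere positive density, $\pi_t\otimes\pi_t(U\times U)>0$, which gives $\lambda(t)>\lambda^{\mathrm{tw}}(t)$.
\end{itemize}
With this in place, the rest of your proof goes through as written: the continuity upgrade to $M_t\equiv0$, invertibility of $\sigma_t$, the converse, and the $\Lambda^{\mathrm{tw}}$ characterisation via continuity of $\lambda^{\mathrm{tw}}$.
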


\begin{proof}
The proposal correction of the twisted kernel cancels the potential ratio in $y$, and the weight depends on $X_t$ alone:
\begin{equation*}
    w_{t'|t}(X_t,\xi)=\frac{G_{t'}(Y_{t'})}{G_t(X_t)}\cdot\frac{\Kb_{t'|t}(Y_{t'}\mid X_t)}{\Lb^{\mathrm{tw}}_{t'|t}(Y_{t'}\mid X_t)}
    =\frac{G_{t'}(Y_{t'})}{G_t(X_t)}\cdot\frac{(\Kb_{t'|t}G_{t'})(X_t)}{G_{t'}(Y_{t'})}
    =\frac{(\Kb_{t'|t}G_{t'})(X_t)}{G_t(X_t)}.
\end{equation*}
The numerator is the one-step expectation of $G_{t'}$, so by \cref{eq:app-onestep-c} applied to $f=G$,
\begin{align*}
    \ell_n&=\log\frac{G_t(X_t)+\Delta t_n\,(\gen_tG_t)(X_t)+O(\Delta t_n^2)}{G_t(X_t)}\\
    &=\log\big(1+\Delta t_n\,\psi_t(X_t)+O(\Delta t_n^2)\big)
    =\Delta t_n\,\psi_t(X_t)+O(\Delta t_n^2),
\end{align*}
uniformly in $X_t$ and in $n$. Hence $m_{1,n}=O(\Delta t_n)$, $m_{4,n}^{1/2}=O(\Delta t_n^2)$, and \cref{lem:smallweight} gives $r_{n-1,n}=\lambda^{\mathrm{tw}}(t_n)\Delta t_n+O(\Delta t_n^2)$.

For the comparison, $\E[\Psi_t(X_t,\xi)\mid X_t]=\psi_t(X_t)$ by \cref{eq:app-Psi}, and conditional Jensen applied to the convex function $|\cdot|$ gives
\begin{align*}
    \lambda(t)&=\tfrac12\,\E\big|\Psi_t(X_t,\xi)-\Psi_t(X'_t,\xi')\big|\\
    &\ge\tfrac12\,\E\Big|\,\E\big[\Psi_t(X_t,\xi)-\Psi_t(X'_t,\xi')\ \big|\ X_t,X'_t\big]\Big|\\
    &=\tfrac12\,\E\big|\psi_t(X_t)-\psi_t(X'_t)\big|=\lambda^{\mathrm{tw}}(t),
\end{align*}
with equality if and only if the curvature term $\tfrac12(\xi^{\top}M_t\xi-\operatorname{tr}M_t)$ vanishes almost surely, i.e.\ $M_t\equiv0$, i.e.\ $\log G_t$ affine, as in \cref{lem:zero-barrier}. Finally $\lambda^{\mathrm{tw}}(t)=0$ if and only if $\psi_t$ is constant in $x$, which is $G_t\propto G^*_t$ by \cref{lem:doob-residual}.
\end{proof}

\paragraph{The curvature term.} When $\log G_{t'}$ is affine in $x$ the guided step is the twisted kernel exactly, by \cref{eq:app-complete-square}. When it is not, the two kernels differ at second order. Write the sampler step as $\Kb_{t'|t}(\cdot\mid x)=N(\mu_n(x),\Sigma_n)$ with $\mu_n(x)=x+b_t(x)\Delta t_n$ and $\Sigma_n=\sigma_t\sigma_t^{\top}\Delta t_n$, and expand $\log G_{t'}(y)\approx c_n^{\top}y+\tfrac12y^{\top}H_ny$ with $H_n=\nabla^2\log G_{t'}$. Completing the square gives
\begin{equation*}
    \Lb^{\mathrm{tw}}_{t'|t}(\cdot\mid x)\approx N\Big((\Sigma_n^{-1}-H_n)^{-1}\big(\Sigma_n^{-1}\mu_n(x)+c_n\big),\ (\Sigma_n^{-1}-H_n)^{-1}\Big),
\end{equation*}
so the twisted kernel has covariance $(\Sigma_n^{-1}-H_n)^{-1}$ rather than $\Sigma_n$: narrower where $\log G_{t'}$ is concave ($H_n\prec0$), wider where it is convex, with the mean shifted accordingly. The guided step keeps the covariance $\Sigma_n$ and shifts only the mean, and the difference between the two is what produces the curvature term $\tfrac12(\xi^{\top}M_t\xi-\operatorname{tr}M_t)$ in $\Psi_t$ of \cref{eq:Psi-main}. That term has mean zero given $X_t$, so no choice of the potentials removes it. It is the cost of a Gaussian proposal with the sampler's covariance, and it is what separates $\Lambda$ from $\Lambda^{\mathrm{tw}}$.

\subsection{Tuning the Schedule}
\label{app:schedule}

Under the guided proposal, \cref{eq:app-guided-rate-integral} gives $r_{n-1,n}=\int_{t_{n-1}}^{t_n}\lambda(t)\,\dd t+o(\Delta t_n)$ uniformly in $n$, so the summed rejection rates converge to $\Lambda=\int_0^1\lambda$ and a schedule controls only how $\Lambda$ is distributed over the levels. By \cref{eq:roundtrip} the restart rate is decreasing in $\sum_nr_{n-1,n}/(1-r_{n-1,n})$, and $s\mapsto s/(1-s)$ is convex on $[0,1)$, so by Jensen's inequality, among schedules with a fixed number of levels and a fixed sum $\sum_nr_{n-1,n}\approx\Lambda$, the sum of odds is minimised when the per-level rejection rates are equal:
\begin{equation}
    r_{n-1,n}=\int_{t_{n-1}}^{t_n}\lambda(t)\,\dd t=\frac{\Lambda}{N}\qquad\text{for every }n .
    \label{eq:app-equi}
\end{equation}
This is the equi-rejection schedule of \citet{syed_nonreversible_2022} applied to the local barrier of \cref{thm:barrier}, and their analysis applies: $N\approx2\Lambda$ levels give stable communication, after which a fresh reference draw reaches the target every $2+2\Lambda$ iterations whatever $N$, and a longer run rather than a finer ladder increases the sample size. In practice $\lambda$ is estimated from the rejection rates observed along a pilot run and the schedule is refined by inverting the resulting empirical barrier, as there. Under the unguided proposal no such limit exists and the spacing is instead set by \cref{prop:spacing}.

\subsection{Optimal Spacing for the Unguided Ladder}
\label{app:spacing}

\begin{lemma}[Log-normal weights]
\label{lem:erf}
If $\log W_{n-1|n}\sim N(\mu_n,s_n^2)$ under $\pi_n\otimes\Lb_{n-1|n}$, then
\begin{equation}
    r_{n-1,n}=\erf(s_n/2)=\frac{s_n}{\sqrt\pi}+O(s_n^3).
    \label{eq:app-erf}
\end{equation}
\end{lemma}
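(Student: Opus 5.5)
The plan is a direct computation from the weight form of the rejection rate, \cref{eq:app-rej-traj} (equivalently \cref{eq:app-rej-weight}). Write $L:=\log W_{n-1|n}$ and let $L'$ be an independent copy, so $L,L'\sim N(\mu_n,s_n^2)$ i.i.d. under $\pi_n\otimes\Lb_{n-1|n}$. Then
\begin{equation*}
    r_{n-1,n}=\frac{\E|e^{L}-e^{L'}|}{2\,\E e^{L}}.
\end{equation*}
This is invariant under $L\mapsto L+c$, so $\mu_n$ drops out and only $s_n$ matters.

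First, I would use the symmetry of $(L,L')$ to write $\E|e^L-e^{L'}|=2\,\E\big[(e^L-e^{L'})\mathbf 1\{L>L'\}\big]$. I would then evaluate each of the two resulting terms by an exponential change of measure.

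For $\E[e^L\mathbf 1\{L>L'\}]$, tilting the law of $L$ by $e^{L}/\E e^{L}$ turns $L$ into $N(\mu_n+s_n^2,s_n^2)$ and leaves $L'$ unchanged. This gives
\begin{equation*}
    \E e^{L}\cdot\Pr\big(N(s_n^2,2s_n^2)>0\big)=\E e^{L}\,\Phi\big(s_n/\sqrt2\big).
\end{equation*}
For $\E[e^{L'}\mathbf 1\{L>L'\}]$, I would tilt $L'$ instead. Now the difference $L-L'$ is $N(-s_n^2,2s_n^2)$, and the term equals $\E e^{L}\,\Phi(-s_n/\sqrt2)$.

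Combining the two terms,
\begin{equation*}
    r_{n-1,n}=\frac{2\,\E e^{L}\big(\Phi(s_n/\sqrt2)-\Phi(-s_n/\sqrt2)\big)}{2\,\E e^{L}}=2\Phi\big(s_n/\sqrt2\big)-1=\erf(s_n/2),
\end{equation*}
using the identity $2\Phi(x)-1=\erf(x/\sqrt2)$. The expansion then follows from the Maclaurin series $\erf(x)=\tfrac{2}{\sqrt\pi}\big(x-x^3/3+\cdots\big)$ evaluated at $x=s_n/2$.

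The only step needing care is the change of measure. I would check it via the Gaussian identity $\E[e^{L}h(L)]=\E e^{L}\,\E[h(L+s_n^2)]$, which is Cameron--Martin in one dimension. It needs no integrability beyond finiteness of $\E e^{L}$, and that holds automatically for a Gaussian.
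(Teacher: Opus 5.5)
Your proposal is correct and follows the paper's proof essentially step for step. It uses the symmetry reduction $\E|e^{L}-e^{L'}|=2\E[(e^{L}-e^{L'})\mathbf 1\{L>L'\}]$, an exponential tilt of each term giving $\Phi(\pm s_n/\sqrt2)$, and the identity $2\Phi(x)-1=\erf(x/\sqrt2)$; your only addition is spelling out the Maclaurin step $\erf(s_n/2)=s_n/\sqrt\pi+O(s_n^3)$, which the paper leaves implicit.
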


\begin{proof}
Write $W_{n-1|n}=e^{A}$, $W'_{n-1|n}=e^{B}$ with $A,B$ i.i.d.\ $N(\mu_n,s_n^2)$. By symmetry,
\begin{equation*}
    \E|W_{n-1|n}-W'_{n-1|n}|=2\,\E\big[(e^{A}-e^{B})\mathbf 1_{A>B}\big]=2\big(\E[e^{A}\mathbf 1_{A>B}]-\E[e^{B}\mathbf 1_{A>B}]\big).
\end{equation*}
Under the tilted measure $\dd\mathbb Q=e^{A-\mu_n-s_n^2/2}\dd\mathbb P$ one has $A\sim N(\mu_n+s_n^2,s_n^2)$ and $B$ unchanged, so $A-B\sim N(s_n^2,2s_n^2)$ under $\mathbb Q$ and
\begin{equation*}
    \E\big[e^{A}\mathbf 1_{A>B}\big]=e^{\mu_n+s_n^2/2}\,\mathbb Q(A>B)=e^{\mu_n+s_n^2/2}\,\Phi\big(s_n/\sqrt2\big),
\end{equation*}
with $\Phi$ the standard normal distribution function. The same tilt applied to $B$ gives $\E[e^{B}\mathbf 1_{A>B}]=e^{\mu_n+s_n^2/2}\,\Phi(-s_n/\sqrt2)$. Hence
\begin{equation*}
    \E|W_{n-1|n}-W'_{n-1|n}|=2e^{\mu_n+s_n^2/2}\big(2\Phi(s_n/\sqrt2)-1\big),
    \qquad
    \E[W_{n-1|n}]=e^{\mu_n+s_n^2/2},
\end{equation*}
and $2\Phi(x)-1=\erf(x/\sqrt2)$ gives $r_{n-1,n}=\E|W_{n-1|n}-W'_{n-1|n}|/(2\E W_{n-1|n})=\erf(s_n/2)$ by \cref{eq:app-rej-weight}.
\end{proof}

The proof of \cref{thm:barrier}(a) shows that with the unguided proposal, refining a time stretch always increases the summed rejection rate: if the log-weight increments over disjoint intervals are approximately Gaussian and independent with variances $v_i$, then
\begin{equation*}
    \sum_i\erf\big(\sqrt{v_i}/2\big)\ \ge\ \erf\Big(\sqrt{\textstyle\sum_iv_i}\,/2\Big),
\end{equation*}
because $v\mapsto\erf(\sqrt v/2)$ is concave with value $0$ at $0$, hence subadditive. Merging levels is limited by the cost $\sum_ir_i/(1-r_i)$ in \cref{eq:roundtrip}, which diverges as $r_i\to1$. The balance is the following.

\begin{proposition}[Optimal unguided spacing]
\label{prop:spacing}
Let $\rho(u):=\erf(u)/\erfc(u)$ and $f(v):=\rho(\sqrt v/2)$. For any partition of a time stretch into steps with Gaussian log-weight variances $v_i$ summing to $S$,
\begin{equation}
    \sum_i\frac{r_i}{1-r_i}=\sum_if(v_i)\ \ge\ S\inf_{v>0}\frac{f(v)}{v}\ \approx\ 1.06\,S,
    \label{eq:app-spacing}
\end{equation}
with equality up to integrality when all $v_i=v^*\approx1.5$, i.e.\ log-weight standard deviation $\approx1.2$ per step and per-exchange rejection $r^*=\erf(0.61)\approx0.61$.
\end{proposition}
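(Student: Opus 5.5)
The proof reduces the statement to a one-variable minimisation. By \cref{lem:erf}, a step whose Gaussian log-weight has variance $v_i$ has rejection $r_i=\erf(\sqrt{v_i}/2)$. Hence $1-r_i=\erfc(\sqrt{v_i}/2)$ and the odds are $r_i/(1-r_i)=\rho(\sqrt{v_i}/2)=f(v_i)$, which is the first equality in \cref{eq:app-spacing}. The lower bound is then immediate: write $f(v_i)=v_i\cdot f(v_i)/v_i\ge v_i\inf_{v>0}f(v)/v$ and sum, using $\sum_iv_i=S$. Equality holds exactly when every $v_i$ lies in the argmin of $f(v)/v$. Achieving it for a given $S$ requires $S/v^*$ to be an integer, which is the ``up to integrality'' caveat.

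The substance is to show that $h(v):=f(v)/v$ has a positive infimum attained at a unique interior point $v^*$. I will substitute $u=\sqrt v/2$, so $v=4u^2$ and the task becomes minimising $k(u):=\rho(u)/(4u^2)$ over $u>0$. Near zero, $\erf(u)\approx 2u/\sqrt\pi$ and $\erfc(u)\to1$, so $k(u)\approx 1/(2\sqrt\pi\,u)\to\infty$. For large $u$, $\erfc(u)\sim e^{-u^2}/(u\sqrt\pi)$, so $\rho(u)$ grows like $e^{u^2}$ and $k(u)\to\infty$. Since $k$ is continuous and positive, it attains a positive minimum.

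For the location, I will differentiate $\log k(u)=\log\erf u-\log\erfc u-2\log u$. Using $\frac{d}{du}\erf u=\frac{2}{\sqrt\pi}e^{-u^2}$, the stationarity condition is
\begin{equation*}
\frac{2}{\sqrt\pi}e^{-u^2}\Big(\frac{1}{\erf u}+\frac{1}{\erfc u}\Big)=\frac{2}{u},
\qquad\text{i.e.}\qquad
u\,e^{-u^2}=\tfrac{\sqrt\pi}{2}\,\erf(u)\,\erfc(u).
\end{equation*}
Uniqueness of the root is the main obstacle. My first approach is to show that the difference of the two sides changes sign exactly once on $(0,\infty)$, or equivalently that $\log k$ is convex in $\log u$. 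If a clean monotonicity argument fails, I will settle for existence of an interior minimiser, which is all the inequality needs, and pin down the root numerically by bisection.

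Numerically the root is $u^*\approx0.61$, so $v^*=4u^{*2}\approx1.5$, the standard deviation is $\sqrt{v^*}\approx1.2$, and $r^*=\erf(u^*)\approx0.61$. At this point $\erf(0.61)\approx0.61$ and $\erfc(0.61)\approx0.39$, giving $\rho\approx1.57$ and $\inf h\approx1.57/1.5\approx1.06$, which matches the stated constants.
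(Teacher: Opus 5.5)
Your proposal is correct and follows the paper's proof essentially step for step. You use \cref{lem:erf} to write the odds as $f(v_i)$, factor out $v_i$ to get the bound $S\inf_{v>0}f(v)/v$, substitute $v=4u^2$, show that $\rho(u)/(4u^2)$ blows up at both ends, and minimise numerically to get $u^*\approx0.61$ and the constant $\approx1.06$; your stationarity and uniqueness discussion goes slightly beyond the paper, which only asserts the numerical minimiser.

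There is one algebra slip in that optional step. Since $\erf u+\erfc u=1$, your first display simplifies to $u\,e^{-u^2}=\sqrt\pi\,\erf(u)\,\erfc(u)$, not $\tfrac{\sqrt\pi}{2}\erf(u)\erfc(u)$. The corrected equation holds at $u\approx0.61$ (both sides $\approx0.42$), whereas your version is off there by a factor of two, so bisect on the corrected equation.
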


\begin{proof}
By \cref{lem:erf}, $r_i=\erf(\sqrt{v_i}/2)$, so $r_i/(1-r_i)=f(v_i)$. Then
\begin{equation*}
    \sum_if(v_i)=\sum_iv_i\,\frac{f(v_i)}{v_i}\ \ge\ \Big(\sum_iv_i\Big)\inf_{v>0}\frac{f(v)}{v}=S\inf_{v>0}\frac{f(v)}{v},
\end{equation*}
with equality if and only if every $v_i$ attains the infimum. Substituting $v=4u^2$, $f(v)/v=\rho(u)/(4u^2)$ tends to $+\infty$ as $u\to0$ (since $\rho(u)\sim2u/\sqrt\pi$) and as $u\to\infty$ (since $\erfc(u)\to0$), and a numerical minimisation gives $u^*\approx0.61$ with value $\approx1.06$.
\end{proof}

Under \cref{eq:roundtrip} this is the spacing that maximises the restart rate across an unguided time stretch, and the effective cost of the stretch is its total log-weight variance $S$: consecutive levels should differ by about $1.2$ in log-weight standard deviation, and the ladder should be coarsened wherever $\nabla\log G_t$ is small. Where $G_t\equiv1$ the log-weight is identically zero and a single level suffices.

\subsection{The SMC Barrier: FKD and TDS}
\label{app:smc-barrier}

The backward weights of \cref{eq:weight-factor} also drive the sequential control samplers FKD and TDS, and the expansions above determine how those samplers behave under refinement. Write $\ell_n:=\log w_{t_{n-1}|t_n}$ for the log-weight of a particle at step $n$ of the schedule, with variance taken under the proposal law $\pi_n\otimes\Lb_{n-1|n}$, and recall from \cref{eq:app-smc-variance} that the relative variance of the normalising-constant estimator is governed by the total discrepancy $\mathcal D_{0,N}=\sum_n\mathcal D_{n-1,n}$ of \cref{eq:app-discrepancy}, which in the small-weight regime is the summed log-weight variance $\sum_n\Var(\ell_n)$ up to $O(\sum_n\E|\ell_n|^3)$, a term that vanishes with refinement in both cases below.

\paragraph{Unguided proposal (FKD).} By the proof of \cref{thm:barrier}(a), $\ell_n=\sqrt{\Delta t_n}\,g_{t_n}(X_{t_n})^{\top}\xi+O(\Delta t_n)$ with $\xi$ independent of $X_{t_n}$. Conditionally on $X_{t_n}$ the leading term has variance $\Delta t_n\|g_{t_n}(X_{t_n})\|^2$ and mean zero, so
\begin{equation*}
    \Var(\ell_n)=\E_{\pi_{t_n}}\big[\|g_{t_n}(X_{t_n})\|^2\big]\,\Delta t_n+O(\Delta t_n^{3/2}),
\end{equation*}
and summing the Riemann sum,
\begin{equation}
    \mathcal D_{0,N}\ \xrightarrow[N\to\infty]{}\ \int_0^1\E_{\pi_t}\big[\|\sigma_t^{\top}\nabla\log G_t(X_t)\|^2\big]\,\dd t .
    \label{eq:app-smc-unguided}
\end{equation}
The limit is positive whenever the potential path is non-trivial and does not depend on the schedule: refining the sampler does not reduce the weight variance of the unguided sampler, and a stronger tilt increases it quadratically. This is why FKD collapses at small batch sizes irrespective of the number of steps.

\paragraph{Guided proposal (TDS).} By the proof of \cref{thm:barrier}(b), $\ell_n=\Delta t_n\,\Psi_{t_n}(X_{t_n},\xi)+O(\Delta t_n^{3/2})$, so $\Var(\ell_n)=\Delta t_n^2\,v(t_n)+O(\Delta t_n^{5/2})$ with
\begin{equation}
    v(t):=\Var\big(\Psi_t(X_t,\xi)\big)=\Var_{\pi_t}\big(\psi_t(X_t)\big)+\tfrac12\,\E_{\pi_t}\|M_t(X_t)\|_F^2,
    \label{eq:app-v}
\end{equation}
by the law of total variance: the first term is the variance of the conditional mean $\psi_t$, the second is the mean of the conditional variance of the curvature term, which is $\tfrac12\|M_t\|_F^2$ since $\Var(\xi^{\top}M\xi)=2\|M\|_F^2$ for symmetric $M$ and $\xi\sim N(0,I)$. The twisted proposal drops the second term. Hence
\begin{equation}
    \mathcal D_{0,N}=\sum_{n=1}^N\Delta t_n^2\,v(t_n)+o\big(\textstyle\max_n\Delta t_n\big),
    \label{eq:app-smc-guided}
\end{equation}
which is of order $1/N$ for any schedule with $\max_n\Delta t_n=O(1/N)$. The schedule that minimises it for given $N$ equalises the per-step standard deviations $\Delta t_n\sqrt{v(t_n)}$: by Cauchy--Schwarz, with $\sum_n\Delta t_n=1$,
\begin{equation*}
    \Big(\sum_{n=1}^N\Delta t_n\sqrt{v(t_n)}\Big)^2\ \le\ N\sum_{n=1}^N\Delta t_n^2\,v(t_n),
\end{equation*}
with equality if and only if all $\Delta t_n\sqrt{v(t_n)}$ are equal, and the left side is a Riemann sum for $\Lambda_{\mathrm{SMC}}^2$. Therefore
\begin{equation}
    \min_{\text{schedules}}\mathcal D_{0,N}=\frac{\Lambda_{\mathrm{SMC}}^2}{N}+o(1/N),
    \qquad
    \Lambda_{\mathrm{SMC}}:=\int_0^1\sqrt{v(t)}\,\dd t,
    \label{eq:app-smc-barrier}
\end{equation}
attained when $\Delta t_n\sqrt{v(t_n)}=\Lambda_{\mathrm{SMC}}/N$ for every $n$. $\Lambda_{\mathrm{SMC}}$ is the global barrier of annealed SMC in the sense of \citet{syed_optimized_2026}, with $\Psi_t$ as the rate of change of the log incremental weight, and it is finite under the assumptions of \cref{thm:barrier}. By \cref{eq:app-smc-variance} the relative variance of $\widehat Z^{\mathrm{SMC}}$ under the optimal schedule is $\approx\Lambda_{\mathrm{SMC}}^2/(NM)$ for $N$ large, which vanishes with refinement. This is why TDS is accurate with few particles, and its optimal schedule places steps where $\Psi_t$ fluctuates most.

\paragraph{Relation between the two barriers.} The two barriers are equivalent up to constants. Let $v(t)=\Var(\Psi_t(X_t,\xi))$ as in \cref{eq:app-v} and, where $v(t)>0$, let
\begin{equation}
    \kappa_t:=\frac{\E\big[(\Psi_t(X_t,\xi)-\E\Psi_t(X_t,\xi))^4\big]}{v(t)^2}
    \label{eq:app-kurtosis}
\end{equation}
be the kurtosis of $\Psi_t(X_t,\xi)$. Under (A1)--(A2) all moments of $\Psi_t$ are bounded uniformly in $t$, so $\kappa_t$ is finite wherever $v(t)>0$.

\begin{proposition}[Equivalence of the barriers]
\label{prop:barrier-equivalence}
For every $t$ with $v(t)>0$,
\begin{equation}
    \frac{\sqrt{v(t)}}{\sqrt{\kappa_t+3}}\ \le\ \lambda(t)\ \le\ \frac{\sqrt{v(t)}}{\sqrt2},
    \label{eq:app-local-equivalence}
\end{equation}
and both sides vanish with $\lambda(t)$ when $v(t)=0$. Consequently, if $\bar\kappa:=\sup\{\kappa_t:v(t)>0\}<\infty$,
\begin{equation}
    \frac{\Lambda_{\mathrm{SMC}}}{\sqrt{\bar\kappa+3}}\ \le\ \Lambda\ \le\ \frac{\Lambda_{\mathrm{SMC}}}{\sqrt2}.
    \label{eq:app-barrier-comparison}
\end{equation}
\end{proposition}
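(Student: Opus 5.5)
Let $\Psi:=\Psi_t(X_t,\xi)$ and $\Psi':=\Psi_t(X'_t,\xi')$ be two independent copies and $D:=\Psi-\Psi'$, so that $\lambda(t)=\tfrac12\E|D|$ by \cref{eq:local-barriers}. The difference is centred, with $\E D=0$ and $\E D^2=2v(t)$. The whole proof rests on two moment inequalities for $|D|$: Cauchy--Schwarz from above and a H\"older interpolation between the second and fourth moments from below. The global statement \cref{eq:app-barrier-comparison} then follows by integrating the local bounds over $t\in[0,1]$.

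\emph{Upper bound.} Cauchy--Schwarz gives $\E|D|\le\sqrt{\E D^2}=\sqrt{2v(t)}$. Hence $\lambda(t)\le\tfrac12\sqrt{2v(t)}=\sqrt{v(t)}/\sqrt2$. This is the same step as the bound $r(p,q)\le\sqrt{\Var(\bar W)/2}$ in \cref{lem:swap-tv}.

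\emph{Lower bound.} Write $D^2=|D|^{2/3}|D|^{4/3}$ and apply H\"older with exponents $3/2$ and $3$:
\begin{equation*}
\E D^2\le(\E|D|)^{2/3}(\E D^4)^{1/3}.
\end{equation*}
This rearranges to $\E|D|\ge(\E D^2)^{3/2}/(\E D^4)^{1/2}$. For the fourth moment of the centred variable $D$ with i.i.d.\ halves, expand $D=(\Psi-\mu)-(\Psi'-\mu)$, where $\mu:=\E\Psi$. The odd cross terms vanish, which leaves
\begin{equation*}
\E D^4=2\E(\Psi-\mu)^4+6v(t)^2=2v(t)^2(\kappa_t+3).
\end{equation*}
Substituting gives $\E|D|\ge(2v)^{3/2}/\bigl(\sqrt2\,v\sqrt{\kappa_t+3}\bigr)=2\sqrt{v}/\sqrt{\kappa_t+3}$. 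Hence $\lambda(t)\ge\sqrt{v(t)}/\sqrt{\kappa_t+3}$, which is \cref{eq:app-local-equivalence}.

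\emph{Degenerate case.} If $v(t)=0$, then $\Psi$ is almost surely constant. So $D=0$ almost surely, and $\lambda(t)=0$ together with both sides of the inequality.

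\emph{Global bound.} Replace $\kappa_t$ by $\bar\kappa$, which only weakens the lower bound. Integrate the local inequalities over $[0,1]$ and use $\Lambda=\int_0^1\lambda$ and $\Lambda_{\mathrm{SMC}}=\int_0^1\sqrt{v}$. Times where $v(t)=0$ contribute zero to both integrals. Measurability and integrability of $v$ and $\lambda$ hold because $\lambda$ is continuous (\cref{lem:continuity}) and the moments of $\Psi_t$ are uniformly bounded under (A1)--(A2).

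The only step that needs care is the fourth-moment bookkeeping: writing $\E D^4$ exactly through the kurtosis of $\Psi$ rather than that of $D$ is what yields the constant $\kappa_t+3$. The rest is routine.
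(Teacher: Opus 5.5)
Your proof is correct and matches the paper's argument step for step. You use Cauchy--Schwarz on $D=\Psi-\Psi'$ for the upper bound. For the lower bound you apply H\"older with exponents $3/2$ and $3$, combined with the identity $\E D^4=2v(t)^2(\kappa_t+3)$. The global comparison then follows by integrating the local bounds over $t$ after replacing $\kappa_t$ by $\bar\kappa$.
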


\begin{proof}
Let $Y:=\Psi_t(X_t,\xi)-\Psi_t(X'_t,\xi')$, a difference of two i.i.d.\ copies, so that $\lambda(t)=\tfrac12\E|Y|$, $\E Y=0$, $\E Y^2=2v(t)$, and
\begin{equation*}
    \E Y^4=2\,\E\big[(\Psi_t-\E\Psi_t)^4\big]+6\,v(t)^2=(2\kappa_t+6)\,v(t)^2 .
\end{equation*}

\emph{Upper bound.} By Cauchy--Schwarz, $\E|Y|\le\sqrt{\E Y^2}=\sqrt{2v(t)}$, so $\lambda(t)\le\sqrt{v(t)/2}$.

\emph{Lower bound.} Write $Y^2=|Y|^{2/3}\,|Y|^{4/3}$ and apply H\"older's inequality with exponents $3/2$ and $3$:
\begin{equation*}
    \E Y^2\ \le\ \big(\E|Y|\big)^{2/3}\big(\E Y^4\big)^{1/3}.
\end{equation*}
Rearranging and substituting the moments,
\begin{equation*}
    \E|Y|\ \ge\ \frac{(\E Y^2)^{3/2}}{(\E Y^4)^{1/2}}
    =\frac{(2v(t))^{3/2}}{\sqrt{2\kappa_t+6}\;v(t)}
    =\frac{2\sqrt{v(t)}}{\sqrt{\kappa_t+3}},
\end{equation*}
so $\lambda(t)\ge\sqrt{v(t)}/\sqrt{\kappa_t+3}$. When $v(t)=0$, $\Psi_t(X_t,\xi)$ is almost surely constant and $\lambda(t)=0=\sqrt{v(t)}$.

Integrating \cref{eq:app-local-equivalence} over $t$ gives \cref{eq:app-barrier-comparison}, with $\Lambda_{\mathrm{SMC}}=\int_0^1\sqrt{v(t)}\,\dd t$.
\end{proof}

Both barriers vanish on the Doob path with the twisted proposal, where $\psi_t\equiv0$, and away from it they are comparable: the two samplers pay for the same mismatch in different currencies, parallel control through rejected exchanges that slow restarts, sequential control through weight variance that must be absorbed by particles. The kurtosis bound $\bar\kappa<\infty$ excludes only the degenerate case in which $v(t)$ vanishes at some times but not identically while $\Psi_t$ has heavy tails there. It holds, for instance, whenever $v$ is bounded away from zero on $[0,1]$, the analogue of the non-degeneracy assumption of \citet{syed_optimized_2026}.

\subsection{A Closed-Form Example}
\label{app:example}

Take a variance-preserving diffusion with forward rate $\gamma_t$ and signal coefficient $m_t^2=e^{-\int_0^t\gamma}$, with $p_0=N(0,1)$ so that $p_t=N(0,1)$ for all $t$, exact Gaussian reverse kernels with $\sigma_t^2=\gamma_t$, and the linear reward $r(x)=\kappa x$. Take the Doob path scaled by a fixed $\beta\in(0,1]$, $G_t=(G^*_t)^{\beta}$. Since $X_0\mid X_t=x\sim N(m_tx,\,1-m_t^2)$, with $s_t^2:=1-m_t^2$,
\begin{equation*}
    G^*_t(x)=\E\big[e^{\kappa X_0}\mid X_t=x\big]=\exp\{\kappa m_tx+\tfrac12\kappa^2s_t^2\},
    \qquad
    \log G_t(x)=\beta\kappa m_tx+\text{const},
    \qquad
    \pi_t=N(\beta\kappa m_t,\,1).
\end{equation*}

\paragraph{Unguided.} For $t'<t$ the reverse kernel is
\begin{equation*}
    X_{t'}\mid X_t\ \sim\ N\Big(\frac{m_t}{m_{t'}}X_t,\ 1-\frac{m_t^2}{m_{t'}^2}\Big),
\end{equation*}
so $m_{t'}X_{t'}-m_tX_t$ is $N(0,\,m_{t'}^2-m_t^2)$ given $X_t$, whatever its tilt. The log-weight of the exchange from $t$ to $t'$ is $\beta\kappa(m_{t'}X_{t'}-m_tX_t)$ up to a constant, hence exactly $N\big(0,\beta^2\kappa^2(m_{t'}^2-m_t^2)\big)$, and by \cref{lem:erf}
\begin{equation}
    r(t\to t')=\erf\Big(\frac{\beta\kappa\sqrt{m_{t'}^2-m_t^2}}{2}\Big).
\end{equation}
Variances add over sub-intervals, with total $\beta^2\kappa^2$ from $t=1$ ($m_1\approx0$) to $t=0$ ($m_0=1$), so \cref{prop:spacing} applies with $S=\beta^2\kappa^2(m_{t'}^2-m_t^2)$ on a stretch $[t',t]$. For small $\Delta t$, $m_{t-\Delta t}^2-m_t^2\approx\gamma_tm_t^2\Delta t$, and the rejection rate is $\approx\beta\kappa m_t\sqrt{\gamma_t\Delta t}/\sqrt\pi$, which matches $\lambda^{\mathrm u}(t)\sqrt{\Delta t}$: here $\|g_t\|=\beta\kappa m_t\sqrt{\gamma_t}$ is constant in $x$, so \cref{eq:local-barriers} gives $\lambda^{\mathrm u}(t)=\sqrt{2}\,\|g_t\|/\sqrt{2\pi}=\|g_t\|/\sqrt\pi$.

\paragraph{Guided.} $\log G_t$ is affine in $x$, so $M_t=0$, the guided step coincides with the twisted proposal, and $\psi_t$ is constant in $x$ because $G_t\propto G^*_t$. Hence $\lambda=\lambda^{\mathrm{tw}}\equiv0$ and every exchange is accepted, whatever the schedule. For $\beta=1$ the guided step is the Euler--Maruyama discretisation of the exact Doob transform \cref{eq:app-htransform}. The gap between zero and infinity here is a degenerate case of \cref{thm:barrier}, but it isolates the mechanism: with the gradient of the potential constant in $x$, there is nothing for a guided exchange to pay for, and everything for an unguided one.

%% file: appendices/experiments.tex
\section{Experimental Details}
\label{app:experiments}
Experiments use NVIDIA GeForce RTX 2080 Ti and NVIDIA A100 GPUs, and Intel Core i9-10900X CPUs at 3.70 GHz. Synthetic sampling uses CPUs; molecular and image models use GPUs.

\subsection{Synthetic Reward Tilting with Sampler Perturbations}
\label{app:gmm}

\paragraph{Target.}
We use a six-component 1D Gaussian mixture and the 40-component 2D FAB mixture of \citet{midgley_flow_2023}. The 1D means, standard deviations and weights are
\begin{align*}
\mu&=(-1.25,-0.58,-0.16,0.38,0.74,1.12),\\
s&=(0.070,0.110,0.080,0.065,0.055,0.075),\\\
\pi&=(0.04,0.09,0.13,0.12,0.28,0.34).
\end{align*}
The 2D benchmark has uniform weights and isotropic component standard deviation \(1.3132616\). For mixture responsibilities \(\gamma_k(x)\), the smooth reward is
\(r(x)=\log\sum_k\gamma_k(x)e^{u_k}\), where
\(u_k=a\,v^\top\mu_k/\max_j|v^\top\mu_j|\).
We use \((v,a,\lambda)=(1,-1,3)\) in 1D and \(((1,-1),1,4)\) in 2D, with \(g(x)=e^{\lambda r(x)}\).
The discontinuous 2D reward is the indicator that the most responsible component belongs to \(\{9,11,20,24,26,35,37,38\}\), with \(\lambda=4\).

\paragraph{Samplers.}
Exact mixture scores drive 200 stochastic reverse transitions on a Karras schedule with \((\sigma_{\min},\sigma_{\max},\rho)=(0.002,80,7)\) \citep{karras_elucidating_2022}:
\begin{equation}
x_{\mathrm{lo}}=x_{\mathrm{hi}}+\Delta\nabla\log p_{\sigma_{\mathrm{hi}}}(x_{\mathrm{hi}})
 +\sqrt{c(\sigma_{\mathrm{hi}})\Delta}\,\epsilon,
\quad \Delta=\sigma_{\mathrm{hi}}^2-\sigma_{\mathrm{lo}}^2,\quad\epsilon\sim\mathcal N(0,I).
\label{eq:gmm-executed-kernel}
\end{equation}
Initialization uses the exact noised mixture distribution, and the endpoint map is the mixture posterior mean.
Nominal sampling uses \(c=1\).
For the perturbed sampler, we set \(c=0.01\) above noise thresholds \(0.3\) in 1D and \(5.4341856\) in 2D, and \(c=1\) otherwise, while leaving the drift unchanged.
Each sampler kernel \(K\) induces its own endpoint law \(p_K\) and hence its own target,
\begin{equation}
    q_K(\dd x)\propto g(x)\,p_K(\dd x).
    \label{eq:gmm-kernel-target}
\end{equation}
The nominal and perturbed samplers have different endpoint targets despite sharing the same reward.

\paragraph{Methods.}
ENCORE and CREPE \citep{he_crepe_2026} use 51 uniformly spaced replica levels, four independent ladders per seed, alternating exchanges, reference refreshes, and no within-level moves.
The potentials are \(G_i=g(\widehat x_0(x_i))^{(1-i/200)^5}\).
Each ladder discards 4,000 half-sweeps and retains the subsequent 20,480 states, yielding 81,920 outputs per seed; rejected and inactive states are retained.
ENCORE uses the reverse transition kernels as proposals, \(L_n=K_n\).
CREPE resolves every transition between adjacent replicas and applies its acceptance test at each exchange.
TDS \citep{wu_practical_2023} and FKD-Diff \citep{singhal_general_2025} use 4, 32, or 256 particles.
TDS retains the terminal importance weights, while FKD-Diff performs adaptive resampling at steps 40, 80, 120, 160, and 200 whenever the ESS falls below half the particle population.
SVDD-MC \citep{li_derivativefree_2025} uses ten candidates, a width-64 value network, 4,096 training rollouts, 2,048 validation rollouts, and 1,000 Adam steps with learning rate \(10^{-3}\).
Uncontrolled sampling and SNIS use the same base draws, with SNIS assigning weights proportional to \(g\).
All methods use the same endpoint map.

\paragraph{Evaluation.}
We use eight seeds and compare methods either at 81,920 outputs per method or at forward-evaluation budgets of 5.12, 6.4, 12.8, and 25.6 million, including initialization, warm-up, endpoint decoding, and value-network training rollouts.
Accuracy is measured by weighted empirical \(\mathrm W_2\) in 1D and sliced \(\mathrm W_2\) over 32 fixed directions in 2D.
Metrics are computed directly from the empirical samples, with weighted methods evaluated as weighted empirical measures.
We report means and sample standard deviations across seeds, pooling ladders within each seed.

\subsubsection{Additional Results}
\label{app:gmm-additional}

\Cref{fig:gmm-robustness-all} shows the 2D densities pooled across eight seeds.
\Cref{tab:gmm-baselines} compares methods at a common output count, while \cref{tab:gmm-model-budget-1d_smooth,tab:gmm-model-budget-fab_smooth,tab:gmm-model-budget-fab_discontinuous} report accuracy under fixed forward-evaluation budgets.
\Cref{tab:gmm-controls} varies the global variance multiplier and the number of uniformly spaced replicas while retaining the 200-step generator and all intervening transitions.
Changing the variance modifies the induced target, whereas changing only the number of replicas leaves it unchanged.
The tables report \(\mathrm W_2\) in 1D and sliced \(\mathrm W_2\) in 2D; parentheses indicate particle counts.
SNIS and TDS are evaluated using their retained importance weights.
For the zero-reward control, unguided ENCORE has exactly zero rejection, as predicted by \cref{app:rejection}.

\begin{figure}[t]
\centering
\includegraphics[width=\linewidth]{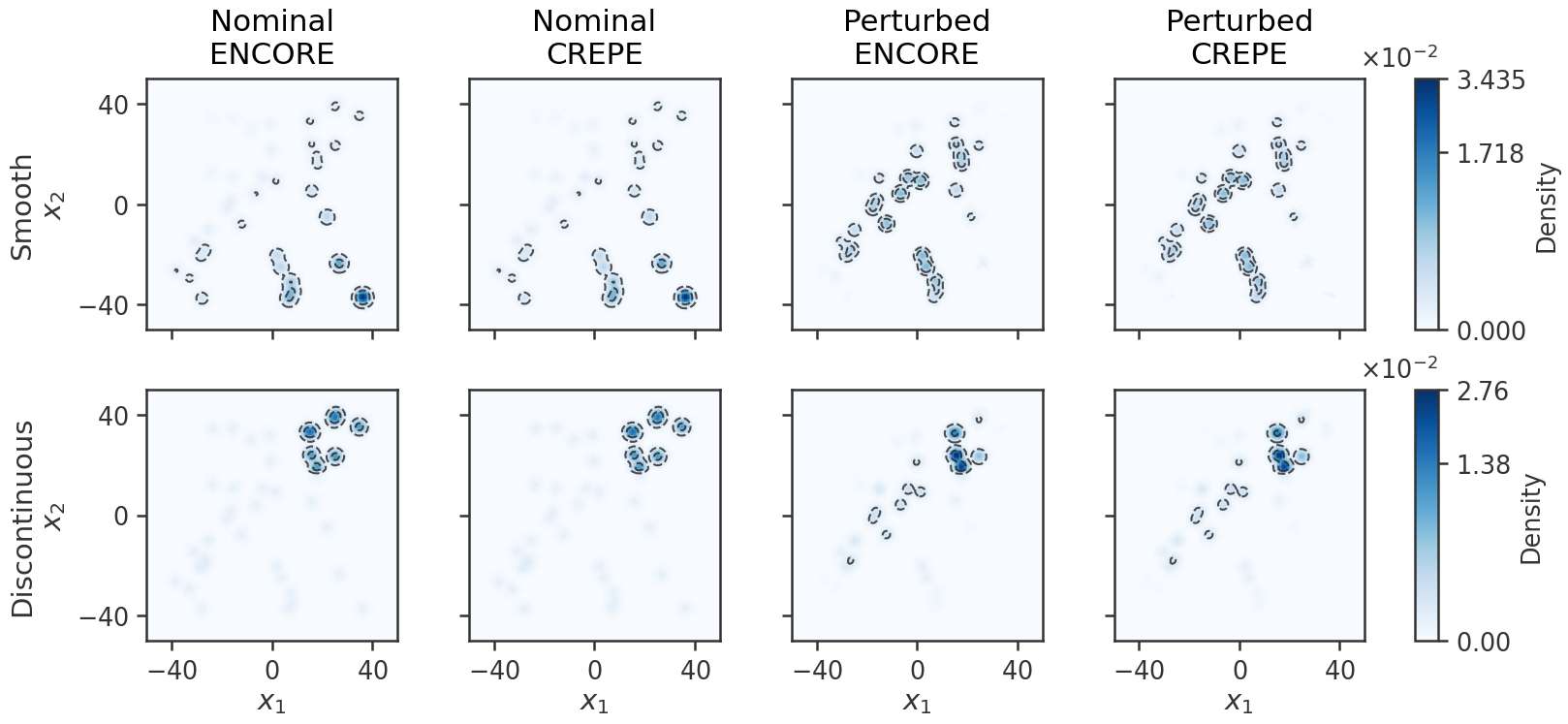}
\caption{2D densities for smooth (top) and discontinuous (bottom) rewards.}
\label{fig:gmm-robustness-all}
\end{figure}

\paragraph{Exchange rejection.}
\label{app:gmm-rejection-completed}
For post-warm-up attempted and rejected swaps \(A_n,D_n\) between adjacent replicas \(n-1,n\), we report the rejection percentage \(100\sum_nD_n/\sum_nA_n\) and summed rejection \(S_R=\sum_{n=1}^{R-1}D_n/A_n\).
The 51-replica ladder has 50 adjacent replica pairs, with 40,960 attempts per pair per seed pooled across four ladders; hence its rejection percentage is \(100S_R/50\). Warm-up and unconditional reference refreshes are excluded.
\Cref{tab:gmm-rejection-published} reports both quantities over eight seeds. The sum is dimensionless, can exceed one, and does not establish target accuracy.

\begin{table}[t]
\centering
\caption{Post-warm-up exchange rejection on the 51-replica ladder (mean \(\pm\) standard deviation over eight seeds).}
\label{tab:gmm-rejection-published}
\begingroup\small
\enctablesetup[4pt]
\enctablefit{\textwidth}{%
\begin{tabular}{@{}lcccc@{}}
\toprule
& \multicolumn{2}{c}{ENCORE} & \multicolumn{2}{c}{CREPE} \\
\cmidrule(lr){2-3}\cmidrule(lr){4-5}
Setting & Rejection (\%) & Summed rejection & Rejection (\%) & Summed rejection \\
\midrule
1D smooth / nominal & \enctablevalue{2.5481628418}{0.00658401271851}{2}{3} & \enctablevalue{1.2740814209}{0.00329200635925}{3}{4} & \enctablevalue{2.54375}{0.00788129759718}{2}{3} & \enctablevalue{1.271875}{0.00394064879859}{3}{4} \\
1D smooth / perturbed & \enctablevalue{0.999761962891}{0.00861589377931}{2}{3} & \enctablevalue{0.499880981445}{0.00430794688965}{3}{4} & \enctablevalue{1.81073608398}{0.0252124537966}{2}{3} & \enctablevalue{0.905368041992}{0.0126062268983}{3}{4} \\
\addlinespace[2pt]
2D smooth / nominal & \enctablevalue{1.87298583984}{0.00862199180381}{2}{3} & \enctablevalue{0.936492919922}{0.0043109959019}{3}{4} & \enctablevalue{1.87424316406}{0.00732282352782}{2}{3} & \enctablevalue{0.937121582031}{0.00366141176391}{3}{4} \\
2D smooth / perturbed & \enctablevalue{0.857525634766}{0.0156524077227}{2}{3} & \enctablevalue{0.428762817383}{0.00782620386133}{3}{4} & \enctablevalue{0.865454101562}{0.0139763509346}{2}{3} & \enctablevalue{0.432727050781}{0.00698817546728}{3}{4} \\
\addlinespace[2pt]
2D discontinuous / nominal & \enctablevalue{1.51207885742}{0.00500713231377}{2}{3} & \enctablevalue{0.756039428711}{0.00250356615689}{3}{4} & \enctablevalue{1.5127746582}{0.00493767570059}{2}{3} & \enctablevalue{0.756387329102}{0.00246883785029}{3}{4} \\
2D discontinuous / perturbed & \enctablevalue{1.58567504883}{0.00824106744737}{2}{3} & \enctablevalue{0.792837524414}{0.00412053372369}{3}{4} & \enctablevalue{1.58931884766}{0.00596490201548}{2}{3} & \enctablevalue{0.794659423828}{0.00298245100774}{3}{4} \\
\addlinespace[2pt]
\bottomrule
\end{tabular}}
\endgroup
\end{table}

\begin{table}[t]
\centering
\caption{Synthetic reward-tilting accuracy at 81,920 outputs per seed; parentheses indicate particle counts (mean \(\pm\) standard deviation over eight seeds).}
\label{tab:gmm-baselines}
\begingroup\small
\enctablesetup[4pt]
\enctablefit{\textwidth}{%
\begin{tabular}{@{}lcccccc@{}}
\toprule
& \multicolumn{2}{c}{1D smooth} & \multicolumn{2}{c}{2D smooth} & \multicolumn{2}{c}{2D discontinuous} \\
& \multicolumn{2}{c}{\(\mathrm W_2\downarrow\)} & \multicolumn{2}{c}{\(\mathrm{SW}_2\downarrow\)} & \multicolumn{2}{c}{\(\mathrm{SW}_2\downarrow\)} \\
\cmidrule(lr){2-3}\cmidrule(lr){4-5}\cmidrule(lr){6-7}
Method & Nominal & Perturbed & Nominal & Perturbed & Nominal & Perturbed \\
\midrule
ENCORE & \enctablevalue{0.0242212306106}{0.0113168302258}{3}{4} & \enctablevalue{0.0146396278041}{0.00685338485698}{3}{4} & \enctablevalue{0.647732623485}{0.119491754045}{3}{4} & \enctablevalue{0.458322208011}{0.185435340936}{3}{4} & \enctablevalue{0.341233515021}{0.101808168802}{3}{4} & \enctablevalue{0.402047540402}{0.128316559665}{3}{4} \\
CREPE & \enctablevalue{0.0293356631752}{0.00897762996816}{3}{4} & \enctablevalue{0.649713933221}{0.033133133403}{3}{4} & \enctablevalue{0.700024258223}{0.305105455833}{3}{4} & \enctablevalue{0.609243652933}{0.247348177649}{3}{4} & \enctablevalue{0.344645137453}{0.0924094038486}{3}{4} & \enctablevalue{0.513371518272}{0.115480243227}{3}{4} \\
TDS (4) & \enctablevalue{0.263207313934}{0.00766461032876}{3}{4} & \enctablevalue{0.0456583920283}{0.0014705677437}{3}{4} & \enctablevalue{11.5229153535}{0.16629277485}{3}{4} & \enctablevalue{5.20276382828}{0.0690064545341}{3}{4} & \enctablevalue{11.5425004112}{0.0898084641851}{3}{4} & \enctablevalue{14.4872633764}{0.0784188856766}{3}{4} \\
TDS (32) & \enctablevalue{0.0974864984693}{0.00624874000234}{3}{4} & \enctablevalue{0.0259626890598}{0.00404045917245}{3}{4} & \enctablevalue{6.9905145182}{0.127657046606}{3}{4} & \enctablevalue{1.74438143347}{0.106435165213}{3}{4} & \enctablevalue{7.980705126}{0.178795927807}{3}{4} & \enctablevalue{5.69461471906}{0.198446193359}{3}{4} \\
TDS (256) & \enctablevalue{0.0637247211138}{0.00851664701052}{3}{4} & \enctablevalue{0.015647703803}{0.00730369772694}{3}{4} & \enctablevalue{6.05783646983}{0.20628103266}{3}{4} & \enctablevalue{0.524998642516}{0.112438647186}{3}{4} & \enctablevalue{1.80416986042}{0.272984606633}{3}{4} & \enctablevalue{1.65351447057}{0.265066298357}{3}{4} \\
FKD (4) & \enctablevalue{0.973773936541}{0.00447782954023}{3}{4} & \enctablevalue{0.178290664659}{0.00124320402859}{3}{4} & \enctablevalue{21.4612569898}{0.104581134863}{3}{4} & \enctablevalue{7.92491563756}{0.024673100264}{3}{4} & \enctablevalue{18.2725946559}{0.086220127723}{3}{4} & \enctablevalue{15.17162585}{0.0816272170406}{3}{4} \\
FKD (32) & \enctablevalue{0.192207462356}{0.00679160714539}{3}{4} & \enctablevalue{0.15447435937}{0.00242969674069}{3}{4} & \enctablevalue{5.61107295627}{0.162412424591}{3}{4} & \enctablevalue{8.03554064406}{0.035197956063}{3}{4} & \enctablevalue{4.82624131837}{0.0901962629918}{3}{4} & \enctablevalue{2.09382976312}{0.0689838832104}{3}{4} \\
FKD (256) & \enctablevalue{0.0568028979028}{0.0059672127745}{3}{4} & \enctablevalue{0.178871027456}{0.00263091347871}{3}{4} & \enctablevalue{1.749809971}{0.150181044518}{3}{4} & \enctablevalue{8.29075470478}{0.028835583363}{3}{4} & \enctablevalue{3.2361716363}{0.110182607107}{3}{4} & \enctablevalue{3.02822114148}{0.101010042568}{3}{4} \\
SVDD-MC & \enctablevalue{0.3368531738}{0.0605936742026}{3}{4} & \enctablevalue{0.154085948281}{0.0148950058472}{3}{4} & \enctablevalue{10.961043415}{1.72392438502}{3}{4} & \enctablevalue{7.55424721233}{0.0439477564728}{3}{4} & \enctablevalue{15.0623827286}{0.732298301029}{3}{4} & \enctablevalue{19.9249291255}{0.0915884261109}{3}{4} \\
SNIS & \enctablevalue{0.0116852805596}{0.0054987982129}{3}{4} & \enctablevalue{0.00654807084818}{0.00330529677582}{3}{4} & \enctablevalue{0.415510970806}{0.202932654616}{3}{4} & \enctablevalue{0.324693871062}{0.162542378746}{3}{4} & \enctablevalue{0.154881752841}{0.0577338100942}{3}{4} & \enctablevalue{0.178858162139}{0.0692937598898}{3}{4} \\
Uncontrolled & \enctablevalue{1.38630514817}{0.00179777299422}{3}{4} & \enctablevalue{0.187167203242}{0.000904588063211}{3}{4} & \enctablevalue{28.8194822518}{0.0416643755546}{3}{4} & \enctablevalue{8.37766207295}{0.0336858301527}{3}{4} & \enctablevalue{28.782297058}{0.0349814974131}{3}{4} & \enctablevalue{20.5062342562}{0.0694362291351}{3}{4} \\
\bottomrule
\end{tabular}}
\endgroup
\end{table}

\begin{table}[t]
\centering
\caption{1D smooth reward-tilting accuracy at fixed forward-evaluation budgets (mean \(\pm\) standard deviation over eight seeds).}
\label{tab:gmm-model-budget-1d_smooth}
\begingroup\small
\enctablesetup[4pt]
\enctablefit{\textwidth}{%
\begin{tabular}{@{}lcccc@{}}
\toprule
& \multicolumn{4}{c}{\(\mathrm W_2\downarrow\)} \\
Forward evaluations & 5,120,000 & 6,400,000 & 12,800,000 & 25,600,000 \\
\midrule
\multicolumn{5}{l}{\textit{Nominal sampler}} \\
ENCORE & \enctablevalue{0.0470352759216}{0.0263122416943}{3}{4} & \enctablevalue{0.040736197569}{0.0224396939198}{3}{4} & \enctablevalue{0.0260534877148}{0.0180826988234}{3}{4} & \enctablevalue{0.0237828518129}{0.00853686919154}{3}{4} \\
CREPE & \enctablevalue{0.10280481496}{0.0550702895933}{3}{4} & \enctablevalue{0.0626353740801}{0.0160715677808}{3}{4} & \enctablevalue{0.0349699175992}{0.00965832508101}{3}{4} & \enctablevalue{0.0291353278681}{0.0104563570473}{3}{4} \\
TDS (4) & \enctablevalue{0.271553556205}{0.0116405129518}{3}{4} & \enctablevalue{0.271929921049}{0.0110152354266}{3}{4} & \enctablevalue{0.264412440927}{0.0134800795825}{3}{4} & \enctablevalue{0.264046546743}{0.010123644956}{3}{4} \\
TDS (32) & \enctablevalue{0.106655845178}{0.0258000367979}{3}{4} & \enctablevalue{0.105834064415}{0.0261048186003}{3}{4} & \enctablevalue{0.100253386524}{0.0159991714559}{3}{4} & \enctablevalue{0.0982193397558}{0.0087540563987}{3}{4} \\
TDS (256) & \enctablevalue{0.0621409558946}{0.0111809819613}{3}{4} & \enctablevalue{0.0653804959818}{0.0100520219586}{3}{4} & \enctablevalue{0.0597248232748}{0.00939875652851}{3}{4} & \enctablevalue{0.062627193335}{0.0110933759166}{3}{4} \\
FKD (4) & \enctablevalue{0.971120966481}{0.00835570374496}{3}{4} & \enctablevalue{0.971004301244}{0.00710782443496}{3}{4} & \enctablevalue{0.973243482328}{0.00561354013219}{3}{4} & \enctablevalue{0.974571513798}{0.00366334106589}{3}{4} \\
FKD (32) & \enctablevalue{0.196111054017}{0.0084054022242}{3}{4} & \enctablevalue{0.193974817875}{0.00788942037967}{3}{4} & \enctablevalue{0.193059388375}{0.00720120360973}{3}{4} & \enctablevalue{0.192788234451}{0.00625652067353}{3}{4} \\
FKD (256) & \enctablevalue{0.0596241922145}{0.0121417900963}{3}{4} & \enctablevalue{0.0572714096618}{0.0118522995285}{3}{4} & \enctablevalue{0.0564354231348}{0.00821982032964}{3}{4} & \enctablevalue{0.0575783212315}{0.00385426902056}{3}{4} \\
SVDD-MC & \enctablevalue{0.337002910479}{0.062169097318}{3}{4} & \enctablevalue{0.331563027461}{0.0602971407903}{3}{4} & \enctablevalue{0.334513850345}{0.0616977544082}{3}{4} & \enctablevalue{0.336918468634}{0.0616597509152}{3}{4} \\
SNIS & \enctablevalue{0.0146632932448}{0.0084688130092}{3}{4} & \enctablevalue{0.0206438771359}{0.00659818307784}{3}{4} & \enctablevalue{0.0143066221043}{0.0101653477724}{3}{4} & \enctablevalue{0.00943859144605}{0.00297944363487}{3}{4} \\
Uncontrolled & \enctablevalue{1.38640509446}{0.00455231937188}{3}{4} & \enctablevalue{1.38765516922}{0.00275311904329}{3}{4} & \enctablevalue{1.38692606824}{0.00221734772131}{3}{4} & \enctablevalue{1.38689779507}{0.00167231577569}{3}{4} \\
\midrule
\multicolumn{5}{l}{\textit{Perturbed sampler}} \\
ENCORE & \enctablevalue{0.0306746337461}{0.0192067918287}{3}{4} & \enctablevalue{0.0265599365252}{0.0130294816294}{3}{4} & \enctablevalue{0.0182917478744}{0.00804039574943}{3}{4} & \enctablevalue{0.0129887571469}{0.00608665361598}{3}{4} \\
CREPE & \enctablevalue{0.592613754228}{0.100727506869}{3}{4} & \enctablevalue{0.655788564403}{0.0893244321972}{3}{4} & \enctablevalue{0.661625277722}{0.0487770710815}{3}{4} & \enctablevalue{0.647471179367}{0.0369895894683}{3}{4} \\
TDS (4) & \enctablevalue{0.0476862378749}{0.00488237331387}{3}{4} & \enctablevalue{0.0465987826686}{0.00433697617984}{3}{4} & \enctablevalue{0.0463441954699}{0.00270279799593}{3}{4} & \enctablevalue{0.0459403793245}{0.00226140033465}{3}{4} \\
TDS (32) & \enctablevalue{0.0273110236833}{0.0055328944844}{3}{4} & \enctablevalue{0.0284668196965}{0.00577193959177}{3}{4} & \enctablevalue{0.0260307936197}{0.00600162236801}{3}{4} & \enctablevalue{0.0263707072321}{0.00378542491394}{3}{4} \\
TDS (256) & \enctablevalue{0.0240093748034}{0.0103965555063}{3}{4} & \enctablevalue{0.0214035447511}{0.0096138337878}{3}{4} & \enctablevalue{0.0174225688005}{0.00586840817534}{3}{4} & \enctablevalue{0.014740846516}{0.00757432974323}{3}{4} \\
FKD (4) & \enctablevalue{0.178119910189}{0.00300326735385}{3}{4} & \enctablevalue{0.177874893962}{0.00251230051383}{3}{4} & \enctablevalue{0.178215432026}{0.00157022914666}{3}{4} & \enctablevalue{0.178331185424}{0.00110792526947}{3}{4} \\
FKD (32) & \enctablevalue{0.155301515032}{0.00471052894243}{3}{4} & \enctablevalue{0.155198637655}{0.0038009878233}{3}{4} & \enctablevalue{0.15471169323}{0.00213742836372}{3}{4} & \enctablevalue{0.154339799026}{0.00200040738008}{3}{4} \\
FKD (256) & \enctablevalue{0.17780971654}{0.00500869699901}{3}{4} & \enctablevalue{0.178239434387}{0.00435927681588}{3}{4} & \enctablevalue{0.178337398562}{0.00288813356097}{3}{4} & \enctablevalue{0.179035108142}{0.0021015209379}{3}{4} \\
SVDD-MC & \enctablevalue{0.154804488203}{0.0133217178107}{3}{4} & \enctablevalue{0.152940424495}{0.0138066308665}{3}{4} & \enctablevalue{0.15486899178}{0.0149631413817}{3}{4} & \enctablevalue{0.154832470721}{0.0142757397797}{3}{4} \\
SNIS & \enctablevalue{0.0118218344045}{0.00516375927472}{3}{4} & \enctablevalue{0.0112802885378}{0.00587836206197}{3}{4} & \enctablevalue{0.0077916145005}{0.00414797831048}{3}{4} & \enctablevalue{0.00678997265121}{0.00201656303545}{3}{4} \\
Uncontrolled & \enctablevalue{0.187360640587}{0.00180718846644}{3}{4} & \enctablevalue{0.187406569069}{0.00148657337935}{3}{4} & \enctablevalue{0.187338126036}{0.00106158205193}{3}{4} & \enctablevalue{0.187351621909}{0.000751995224404}{3}{4} \\
\bottomrule
\end{tabular}}
\endgroup
\end{table}

\begin{table}[t]
\centering
\caption{2D smooth reward-tilting accuracy at fixed forward-evaluation budgets (mean \(\pm\) standard deviation over eight seeds).}
\label{tab:gmm-model-budget-fab_smooth}
\begingroup\small
\enctablesetup[4pt]
\enctablefit{\textwidth}{%
\begin{tabular}{@{}lcccc@{}}
\toprule
& \multicolumn{4}{c}{\(\mathrm{SW}_2\downarrow\)} \\
Forward evaluations & 5,120,000 & 6,400,000 & 12,800,000 & 25,600,000 \\
\midrule
\multicolumn{5}{l}{\textit{Nominal sampler}} \\
ENCORE & \enctablevalue{1.10473640983}{0.361132065462}{3}{4} & \enctablevalue{1.2355124518}{0.372078800656}{3}{4} & \enctablevalue{0.855437381414}{0.167376064818}{3}{4} & \enctablevalue{0.484774889023}{0.204200851742}{3}{4} \\
CREPE & \enctablevalue{3.31904917494}{0.935731404417}{3}{4} & \enctablevalue{1.98225129698}{0.674778645492}{3}{4} & \enctablevalue{1.21667459431}{0.413246949187}{3}{4} & \enctablevalue{0.81355575947}{0.344816689595}{3}{4} \\
TDS (4) & \enctablevalue{11.3668936728}{0.346370538811}{3}{4} & \enctablevalue{11.4240951672}{0.293009831218}{3}{4} & \enctablevalue{11.4358482827}{0.212532284782}{3}{4} & \enctablevalue{11.4927758996}{0.197089700585}{3}{4} \\
TDS (32) & \enctablevalue{7.00142391502}{0.195666390819}{3}{4} & \enctablevalue{7.07651036966}{0.312280826586}{3}{4} & \enctablevalue{7.03385255412}{0.197998180443}{3}{4} & \enctablevalue{6.9667478832}{0.190002625537}{3}{4} \\
TDS (256) & \enctablevalue{6.17877334731}{0.687525467334}{3}{4} & \enctablevalue{6.14707400331}{0.707739681314}{3}{4} & \enctablevalue{5.93238936614}{0.325060678482}{3}{4} & \enctablevalue{6.07696578599}{0.252079540532}{3}{4} \\
FKD (4) & \enctablevalue{21.5559298292}{0.137371187105}{3}{4} & \enctablevalue{21.5200666709}{0.130791924785}{3}{4} & \enctablevalue{21.4632501167}{0.0985643893725}{3}{4} & \enctablevalue{21.4248715292}{0.0873271583858}{3}{4} \\
FKD (32) & \enctablevalue{5.57434441795}{0.309076193848}{3}{4} & \enctablevalue{5.59858091239}{0.269794171184}{3}{4} & \enctablevalue{5.58133289279}{0.190648401275}{3}{4} & \enctablevalue{5.60974771318}{0.115888089451}{3}{4} \\
FKD (256) & \enctablevalue{1.9671141637}{0.303400415919}{3}{4} & \enctablevalue{1.96562531137}{0.227156317371}{3}{4} & \enctablevalue{1.78750883494}{0.212840121019}{3}{4} & \enctablevalue{1.70671128146}{0.115019381557}{3}{4} \\
SVDD-MC & \enctablevalue{10.9730820919}{1.88653952864}{3}{4} & \enctablevalue{10.9883729442}{1.83960817569}{3}{4} & \enctablevalue{10.8674622805}{1.793934413}{3}{4} & \enctablevalue{10.936938778}{1.78366022654}{3}{4} \\
SNIS & \enctablevalue{0.50338124547}{0.187397265865}{3}{4} & \enctablevalue{0.546062056498}{0.209704937347}{3}{4} & \enctablevalue{0.425778261293}{0.162440671022}{3}{4} & \enctablevalue{0.292030563833}{0.125241159084}{3}{4} \\
Uncontrolled & \enctablevalue{28.7996386819}{0.0522924732485}{3}{4} & \enctablevalue{28.8030508299}{0.0390497588129}{3}{4} & \enctablevalue{28.822425775}{0.0467113834275}{3}{4} & \enctablevalue{28.8206491637}{0.0176694981082}{3}{4} \\
\midrule
\multicolumn{5}{l}{\textit{Perturbed sampler}} \\
ENCORE & \enctablevalue{1.10433430638}{0.409055062286}{3}{4} & \enctablevalue{1.09009315702}{0.396501233108}{3}{4} & \enctablevalue{0.554839660881}{0.272556876809}{3}{4} & \enctablevalue{0.436359064673}{0.20498524834}{3}{4} \\
CREPE & \enctablevalue{2.54637704221}{0.504375767077}{3}{4} & \enctablevalue{1.28403083231}{0.548352272614}{3}{4} & \enctablevalue{0.789737036646}{0.373185715416}{3}{4} & \enctablevalue{0.629975563541}{0.301212446439}{3}{4} \\
TDS (4) & \enctablevalue{5.19500853375}{0.153973160147}{3}{4} & \enctablevalue{5.2254601827}{0.146932929629}{3}{4} & \enctablevalue{5.22302431927}{0.0823102224816}{3}{4} & \enctablevalue{5.19647942101}{0.0632958432643}{3}{4} \\
TDS (32) & \enctablevalue{1.86014103355}{0.321869610091}{3}{4} & \enctablevalue{1.92087473848}{0.242517546032}{3}{4} & \enctablevalue{1.82494296783}{0.1825961249}{3}{4} & \enctablevalue{1.70930065234}{0.121362175252}{3}{4} \\
TDS (256) & \enctablevalue{0.836835091011}{0.260703785621}{3}{4} & \enctablevalue{0.75227460363}{0.165882130282}{3}{4} & \enctablevalue{0.59063705302}{0.223025083595}{3}{4} & \enctablevalue{0.513957809347}{0.142145514053}{3}{4} \\
FKD (4) & \enctablevalue{7.94660793435}{0.0672219197485}{3}{4} & \enctablevalue{7.92954409132}{0.0568238942907}{3}{4} & \enctablevalue{7.91406092757}{0.0329267691564}{3}{4} & \enctablevalue{7.93219414175}{0.0287367135169}{3}{4} \\
FKD (32) & \enctablevalue{8.05208889328}{0.0614378050083}{3}{4} & \enctablevalue{8.04127700111}{0.0677685522276}{3}{4} & \enctablevalue{8.03452525888}{0.0321709578319}{3}{4} & \enctablevalue{8.0324227832}{0.0302434249435}{3}{4} \\
FKD (256) & \enctablevalue{8.29417874883}{0.0757365760395}{3}{4} & \enctablevalue{8.282620278}{0.0718769867601}{3}{4} & \enctablevalue{8.29267937317}{0.031904968961}{3}{4} & \enctablevalue{8.29420766234}{0.0250254003792}{3}{4} \\
SVDD-MC & \enctablevalue{7.63410349669}{0.19629690211}{3}{4} & \enctablevalue{7.62274570469}{0.199550786078}{3}{4} & \enctablevalue{7.55961621}{0.137392916519}{3}{4} & \enctablevalue{7.5544309728}{0.0959104354444}{3}{4} \\
SNIS & \enctablevalue{0.435468034119}{0.161136995316}{3}{4} & \enctablevalue{0.43467825469}{0.192536917338}{3}{4} & \enctablevalue{0.362775251992}{0.153391686765}{3}{4} & \enctablevalue{0.275463255798}{0.0919592995476}{3}{4} \\
Uncontrolled & \enctablevalue{8.37320907047}{0.0618391315872}{3}{4} & \enctablevalue{8.3798060175}{0.0708039782701}{3}{4} & \enctablevalue{8.37740948168}{0.0449793704032}{3}{4} & \enctablevalue{8.37842147155}{0.0255637805975}{3}{4} \\
\bottomrule
\end{tabular}}
\endgroup
\end{table}

\begin{table}[t]
\centering
\caption{2D discontinuous reward-tilting accuracy at fixed forward-evaluation budgets (mean \(\pm\) standard deviation over eight seeds).}
\label{tab:gmm-model-budget-fab_discontinuous}
\begingroup\small
\enctablesetup[4pt]
\enctablefit{\textwidth}{%
\begin{tabular}{@{}lcccc@{}}
\toprule
& \multicolumn{4}{c}{\(\mathrm{SW}_2\downarrow\)} \\
Forward evaluations & 5,120,000 & 6,400,000 & 12,800,000 & 25,600,000 \\
\midrule
\multicolumn{5}{l}{\textit{Nominal sampler}} \\
ENCORE & \enctablevalue{0.857713682025}{0.400003291759}{3}{4} & \enctablevalue{0.825131974376}{0.263109743549}{3}{4} & \enctablevalue{0.419385297743}{0.105215405341}{3}{4} & \enctablevalue{0.298088079071}{0.0887972616672}{3}{4} \\
CREPE & \enctablevalue{2.24715796507}{0.968577822392}{3}{4} & \enctablevalue{1.46117677967}{0.460981939714}{3}{4} & \enctablevalue{0.639990076905}{0.267890238712}{3}{4} & \enctablevalue{0.37004949117}{0.129218998951}{3}{4} \\
TDS (4) & \enctablevalue{11.5690990086}{0.33864356184}{3}{4} & \enctablevalue{11.5406974273}{0.240097188843}{3}{4} & \enctablevalue{11.5404003852}{0.132776642845}{3}{4} & \enctablevalue{11.5138680634}{0.0797393521771}{3}{4} \\
TDS (32) & \enctablevalue{7.88760890821}{0.563735472028}{3}{4} & \enctablevalue{7.90754070787}{0.549013006654}{3}{4} & \enctablevalue{7.90424675256}{0.426500712651}{3}{4} & \enctablevalue{7.96653315494}{0.219527925024}{3}{4} \\
TDS (256) & \enctablevalue{1.77360620513}{0.622161378451}{3}{4} & \enctablevalue{1.72548526907}{0.393598596018}{3}{4} & \enctablevalue{1.77495902433}{0.447309639364}{3}{4} & \enctablevalue{1.84273622804}{0.241738715119}{3}{4} \\
FKD (4) & \enctablevalue{18.2372961972}{0.111472062429}{3}{4} & \enctablevalue{18.2569354477}{0.088493343263}{3}{4} & \enctablevalue{18.2833982155}{0.113497391447}{3}{4} & \enctablevalue{18.2721408438}{0.025256918048}{3}{4} \\
FKD (32) & \enctablevalue{4.82902661199}{0.154583582891}{3}{4} & \enctablevalue{4.83335771838}{0.130956040031}{3}{4} & \enctablevalue{4.81625231818}{0.106400221021}{3}{4} & \enctablevalue{4.84052340354}{0.0577075771041}{3}{4} \\
FKD (256) & \enctablevalue{3.28321698327}{0.133958785548}{3}{4} & \enctablevalue{3.26790012087}{0.0950456942343}{3}{4} & \enctablevalue{3.24249666553}{0.119313873273}{3}{4} & \enctablevalue{3.1687106357}{0.0967054986296}{3}{4} \\
SVDD-MC & \enctablevalue{14.9149273493}{1.09400194323}{3}{4} & \enctablevalue{14.9699353269}{0.834550413537}{3}{4} & \enctablevalue{15.2160996155}{0.864210211143}{3}{4} & \enctablevalue{15.1027123243}{0.75666666479}{3}{4} \\
SNIS & \enctablevalue{0.242783779367}{0.0809737027497}{3}{4} & \enctablevalue{0.230637285395}{0.061899609439}{3}{4} & \enctablevalue{0.159972315647}{0.0705175073197}{3}{4} & \enctablevalue{0.120277345053}{0.0208933570996}{3}{4} \\
Uncontrolled & \enctablevalue{28.7875977499}{0.0976653462652}{3}{4} & \enctablevalue{28.7773290574}{0.079421011606}{3}{4} & \enctablevalue{28.7776077774}{0.0535508728584}{3}{4} & \enctablevalue{28.7693732829}{0.0340518831277}{3}{4} \\
\midrule
\multicolumn{5}{l}{\textit{Perturbed sampler}} \\
ENCORE & \enctablevalue{0.855541412679}{0.214726669867}{3}{4} & \enctablevalue{0.700938022303}{0.222819319123}{3}{4} & \enctablevalue{0.578927871465}{0.172386501696}{3}{4} & \enctablevalue{0.351443375845}{0.134336112044}{3}{4} \\
CREPE & \enctablevalue{1.75349037316}{0.62210898937}{3}{4} & \enctablevalue{1.22983419068}{0.533405563106}{3}{4} & \enctablevalue{0.70232332586}{0.146999589163}{3}{4} & \enctablevalue{0.529561896416}{0.124178043704}{3}{4} \\
TDS (4) & \enctablevalue{14.5806061284}{0.254218652655}{3}{4} & \enctablevalue{14.5622856645}{0.212494340246}{3}{4} & \enctablevalue{14.4941543853}{0.17620146913}{3}{4} & \enctablevalue{14.4927282545}{0.106377962828}{3}{4} \\
TDS (32) & \enctablevalue{5.82644839792}{0.272659061805}{3}{4} & \enctablevalue{5.88167015231}{0.25382841067}{3}{4} & \enctablevalue{5.7860459889}{0.19903491739}{3}{4} & \enctablevalue{5.73685758922}{0.234462221363}{3}{4} \\
TDS (256) & \enctablevalue{1.72691905935}{0.393114195909}{3}{4} & \enctablevalue{1.64079310651}{0.439064324607}{3}{4} & \enctablevalue{1.70977125956}{0.318716265881}{3}{4} & \enctablevalue{1.6787149736}{0.259424589483}{3}{4} \\
FKD (4) & \enctablevalue{15.1825780072}{0.165220708505}{3}{4} & \enctablevalue{15.1923991455}{0.113424222696}{3}{4} & \enctablevalue{15.1753227842}{0.0904173948111}{3}{4} & \enctablevalue{15.1521048977}{0.0557770740803}{3}{4} \\
FKD (32) & \enctablevalue{2.09551299904}{0.0962531779969}{3}{4} & \enctablevalue{2.10620735059}{0.0891486721342}{3}{4} & \enctablevalue{2.11044308801}{0.086974339802}{3}{4} & \enctablevalue{2.0868414245}{0.0438778500616}{3}{4} \\
FKD (256) & \enctablevalue{3.02776774163}{0.211797496656}{3}{4} & \enctablevalue{3.00362864444}{0.220817695781}{3}{4} & \enctablevalue{3.04972766773}{0.1115073375}{3}{4} & \enctablevalue{3.05080496163}{0.09032283149}{3}{4} \\
SVDD-MC & \enctablevalue{19.7845816281}{0.52259590191}{3}{4} & \enctablevalue{19.7869163536}{0.544341723204}{3}{4} & \enctablevalue{19.9035528193}{0.268977007743}{3}{4} & \enctablevalue{19.9020151383}{0.177966424716}{3}{4} \\
SNIS & \enctablevalue{0.303750582121}{0.125447205553}{3}{4} & \enctablevalue{0.27617198457}{0.132366039044}{3}{4} & \enctablevalue{0.219213428918}{0.0785466624596}{3}{4} & \enctablevalue{0.156951672466}{0.0377690673846}{3}{4} \\
Uncontrolled & \enctablevalue{20.4873916321}{0.1078293781}{3}{4} & \enctablevalue{20.4861436387}{0.0955904511352}{3}{4} & \enctablevalue{20.4951407519}{0.0798005001965}{3}{4} & \enctablevalue{20.4950893526}{0.0465989767972}{3}{4} \\
\bottomrule
\end{tabular}}
\endgroup
\end{table}

\begin{table}[t]
\centering
\caption{Synthetic reward tilting under variance and replica-schedule changes (mean \(\pm\) standard deviation over eight seeds).}
\label{tab:gmm-controls}
\begingroup\small
\enctablesetup[4pt]
\enctablefit{\textwidth}{%
\begin{tabular}{@{}llccc@{}}
\toprule
& & \multicolumn{3}{c}{\(\mathrm W_2\) (1D), \(\mathrm{SW}_2\) (2D) \(\downarrow\)} \\
\cmidrule(lr){3-5}
\multicolumn{2}{l}{Transition-variance multiplier (51 replicas)} & 0.25 & 1 & 4 \\
\midrule
1D smooth & ENCORE & \enctablevalue{0.0501067947435}{0.0175387644683}{3}{4} & \enctablevalue{0.0242212306106}{0.0113168302258}{3}{4} & \enctablevalue{0.00748962481157}{0.00196576321899}{3}{4} \\
 & CREPE & \enctablevalue{0.343995863323}{0.0171995456164}{3}{4} & \enctablevalue{0.0293356631752}{0.00897762996816}{3}{4} & \enctablevalue{0.0560749911826}{0.00587773371164}{3}{4} \\
2D smooth & ENCORE & \enctablevalue{0.984515350993}{0.281184561845}{3}{4} & \enctablevalue{0.647732623485}{0.119491754045}{3}{4} & \enctablevalue{0.473172281062}{0.0921582523462}{3}{4} \\
 & CREPE & \enctablevalue{0.865049288901}{0.237509662475}{3}{4} & \enctablevalue{0.700024258223}{0.305105455833}{3}{4} & \enctablevalue{0.527834341357}{0.181171912086}{3}{4} \\
2D discontinuous & ENCORE & \enctablevalue{0.451969141317}{0.0993036116201}{3}{4} & \enctablevalue{0.341233515021}{0.101808168802}{3}{4} & \enctablevalue{0.276895968835}{0.0671969831321}{3}{4} \\
 & CREPE & \enctablevalue{0.629741646167}{0.146490156194}{3}{4} & \enctablevalue{0.344645137453}{0.0924094038486}{3}{4} & \enctablevalue{0.405657382524}{0.152649931081}{3}{4} \\
\addlinespace[3pt]
\multicolumn{2}{l}{Uniform replica count (nominal sampler)} & 6 & 11 & 51 \\
\midrule
1D smooth & ENCORE & \enctablevalue{0.0308627883579}{0.0127710738795}{3}{4} & \enctablevalue{0.0290406244059}{0.0150464232551}{3}{4} & \enctablevalue{0.0242212306106}{0.0113168302258}{3}{4} \\
 & CREPE & \enctablevalue{0.0247640660041}{0.00905885319469}{3}{4} & \enctablevalue{0.0284155014476}{0.0146730765028}{3}{4} & \enctablevalue{0.0293356631752}{0.00897762996816}{3}{4} \\
2D smooth & ENCORE & \enctablevalue{1.06893545293}{0.357686077017}{3}{4} & \enctablevalue{0.812586513431}{0.275610224542}{3}{4} & \enctablevalue{0.647732623485}{0.119491754045}{3}{4} \\
 & CREPE & \enctablevalue{0.814896591773}{0.448632899739}{3}{4} & \enctablevalue{0.826067713548}{0.39585097127}{3}{4} & \enctablevalue{0.700024258223}{0.305105455833}{3}{4} \\
2D discontinuous & ENCORE & \enctablevalue{0.401110737328}{0.164336773166}{3}{4} & \enctablevalue{0.433746013481}{0.0933742039976}{3}{4} & \enctablevalue{0.341233515021}{0.101808168802}{3}{4} \\
 & CREPE & \enctablevalue{0.514142477743}{0.354298626626}{3}{4} & \enctablevalue{0.444550741698}{0.127920279509}{3}{4} & \enctablevalue{0.344645137453}{0.0924094038486}{3}{4} \\
\addlinespace[3pt]
\multicolumn{2}{l}{Uniform replica count (perturbed sampler)} & 6 & 11 & 51 \\
\midrule
1D smooth & ENCORE & \enctablevalue{0.0121263033065}{0.00466271812911}{3}{4} & \enctablevalue{0.0128132946046}{0.00454860166651}{3}{4} & \enctablevalue{0.0146396278041}{0.00685338485698}{3}{4} \\
 & CREPE & \enctablevalue{0.322698546986}{0.0391825567356}{3}{4} & \enctablevalue{0.473857560458}{0.0473255777236}{3}{4} & \enctablevalue{0.649713933221}{0.033133133403}{3}{4} \\
2D smooth & ENCORE & \enctablevalue{0.422668512569}{0.116938126028}{3}{4} & \enctablevalue{0.392290638843}{0.106232927463}{3}{4} & \enctablevalue{0.458322208011}{0.185435340936}{3}{4} \\
 & CREPE & \enctablevalue{0.467074504696}{0.227010443205}{3}{4} & \enctablevalue{0.516051042941}{0.114499981293}{3}{4} & \enctablevalue{0.609243652933}{0.247348177649}{3}{4} \\
2D discontinuous & ENCORE & \enctablevalue{0.588538672698}{0.253587795484}{3}{4} & \enctablevalue{0.419248108095}{0.165036572305}{3}{4} & \enctablevalue{0.402047540402}{0.128316559665}{3}{4} \\
 & CREPE & \enctablevalue{0.620550041458}{0.275394525964}{3}{4} & \enctablevalue{0.558312590697}{0.149353451407}{3}{4} & \enctablevalue{0.513371518272}{0.115480243227}{3}{4} \\
\addlinespace[3pt]
\bottomrule
\end{tabular}}
\endgroup
\end{table}

\subsection{Sampling Alanine Transition Regions}
\label{app:alanine-transition}

\paragraph{Target.}
We apply a single harmonic umbrella to the 22-atom ALDP system at \(800\,\mathrm K\):
\begin{equation}
G_0(x)=\exp\!\left[-\frac{\widetilde\phi(x)^2}{2(0.2)^2}\right],
\qquad q_{\mathrm{model}}(x)\propto p_0(x)G_0(x),
\label{eq:transition-target}
\end{equation}
where \(\widetilde\phi\) is the torsion wrapped to \([-\pi,\pi)\); \(\psi\) is unrestrained.
We report the band \(|\widetilde\phi|\leq0.4\) rad, split into upper (\(\psi\geq0\)) and lower (\(\psi<0\)) sectors, retaining outside-band samples. This is conformation sampling within a prescribed region, rather than a measurement of transition rates.

\paragraph{Samplers.}
All methods use the ALDP checkpoint from \cref{app:alanine}, 200 Karras reverse transitions, terminal denoising and the same bond-constraint projection.
The shared intermediate log potential is
\(\ell_i(z)=\gamma_i\log G_0(\mathcal R_i(z))\), with projected denoiser readout \(\mathcal R_i\) and
\(\gamma_i=[((1-i/200)-5/7)/(2/7)]_{[0,1]}\).
The model reference weights 262,144 independent model draws by \(G_0\); the physical reference weights 150,000 equilibrium MD frames at \(800\,\mathrm K\).
These estimate the tilted model and physical laws, respectively, and are kept distinct.

\paragraph{Methods.}
ENCORE and CREPE use eight ladders with levels \((200,196,\ldots,0)\) and run for 20,000 half-sweeps, discarding the first 5,000 and yielding 120,000 outputs per seed.
ENCORE uses unguided native reverse proposals, while CREPE uses its fixed-CV noising kernel without temperature scaling.
FKD and TDS use 256 particles per population.
FKD checks for resampling at the replica levels using an ESS threshold of half the population, whereas TDS uses a guidance coefficient of \(1\), systematic resampling at every native step, and the Gaussian proposal correction of \citet{wu_practical_2023}.
Both methods resample at the endpoint.
Their 315 and 337 populations yield 80,640 and 86,272 outputs, respectively.
ENCORE, FKD, and TDS each use approximately 17.34--17.36 million forward function evaluations (NFEs) per seed, including warm-up, while CREPE uses 20.76 million.
TDS additionally requires 4.92 million backward evaluations.

\paragraph{Evaluation.}
We use three seeds.
Regional TVD is the total variation distance between the probabilities assigned to the upper, lower, and out-of-band regions.
Within the band, energy error is measured by the supremum distance between the empirical energy CDF and the model-reference CDF.
Molecular potential energies are evaluated with OpenMM, excluding the umbrella potential and kinetic energy \citep{eastman_openmm_2023}.
All quantitative metrics use the complete set of retained samples.
Density plots show 5,000 samples from a single seed, selected before any conditioning, while reference samples retain their full importance weights.

\subsubsection{Additional Results}

\Cref{fig:transition-conditionals} shows the conditional conformational densities.
\Cref{tab:transition-controls} reports accuracy against the model + IS reference under two forward-NFE budgets, spanning 8.08--10.36 and 17.34--20.76 million evaluations per seed, with exact budgets reported for each method.
The lower- and higher-budget evaluations use nested portions of the same trajectories.

\begin{figure}[t]
\centering
\includegraphics[width=\linewidth]{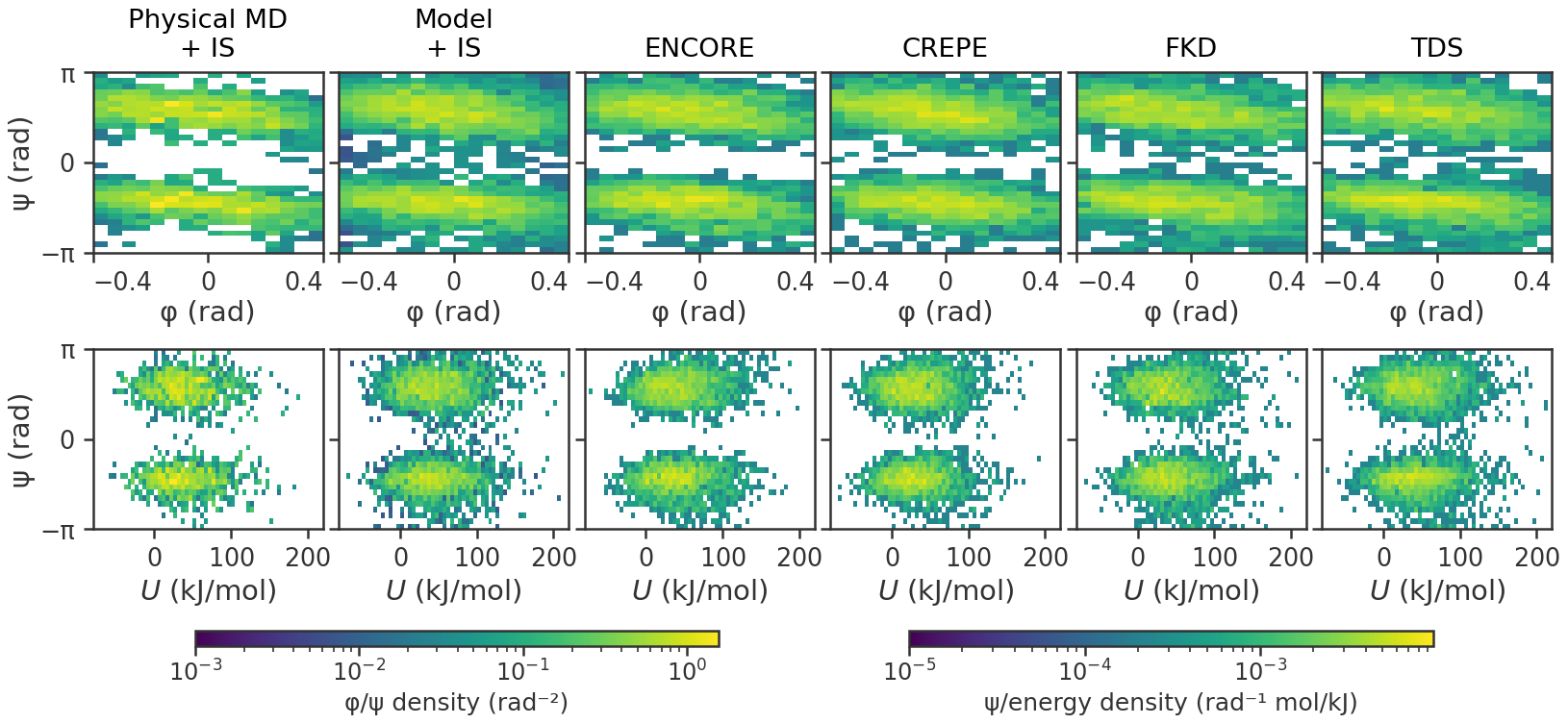}
\caption{Conditional \(\phi/\psi\) densities (top) and joint \(\psi\)/energy densities (bottom) within \(|\phi|\leq0.4\).}
\label{fig:transition-conditionals}
\end{figure}

\begin{table}[t]
\centering
\caption{Transition-region accuracy across forward-evaluation budgets (mean \(\pm\) standard deviation over three seeds).}
\label{tab:transition-controls}
\begingroup\small
\enctablesetup[4pt]
\enctablefit{\textwidth}{%
\begin{tabular}{@{}lrcc@{}}
\toprule
Method & Forward NFEs & Regional TVD\(\downarrow\) & Energy CDF\(\downarrow\) \\
\midrule
ENCORE (8 chains) & 8,681,760 & \enctablevalue{0.0241620711196}{0.0235376091082}{3}{4} & \enctablevalue{0.0311246026246}{0.0187960166378}{3}{4} \\
 & 17,361,760 & \enctablevalue{0.0117710714319}{0.00409859041417}{3}{4} & \enctablevalue{0.0278296523497}{0.0138161772122}{3}{4} \\
\addlinespace[2pt]
CREPE (8 chains) & 10,361,600 & \enctablevalue{0.0333364224671}{0.020932075178}{3}{4} & \enctablevalue{0.14838135215}{0.0123702189401}{3}{4} \\
 & 20,761,600 & \enctablevalue{0.0267119246048}{0.0141609791852}{3}{4} & \enctablevalue{0.150950071131}{0.00789535488985}{3}{4} \\
\addlinespace[2pt]
FKD (256 particles) & 8,641,280 & \enctablevalue{0.0506070360282}{0.0330000993255}{3}{4} & \enctablevalue{0.0206814873594}{0.00420672749506}{3}{4} \\
 & 17,337,600 & \enctablevalue{0.0367915868005}{0.012018602141}{3}{4} & \enctablevalue{0.0164682541881}{0.00712385197103}{3}{4} \\
\addlinespace[2pt]
TDS (256 particles) & 8,078,592 & \enctablevalue{0.0299148672238}{0.00164163049895}{3}{4} & \enctablevalue{0.0249660397095}{0.0107301429335}{3}{4} \\
 & 17,340,672 & \enctablevalue{0.0370256477017}{0.017033505027}{3}{4} & \enctablevalue{0.0203719705137}{0.0149714364816}{3}{4} \\
\addlinespace[2pt]
\bottomrule
\end{tabular}}
\endgroup
\end{table}

\subsection{Alanine Tempering}
\label{app:alanine}

\paragraph{Target.}
We temper alanine dipeptide (ALDP; 22 atoms), tetrapeptide (A4; 43 atoms), and hexapeptide (A6; 63 atoms) from \(800\,\mathrm K\) to \(300\), \(500\), and \(600\,\mathrm K\), respectively.
For molar potential energy \(U\), the endpoint factor is
\(g(x)=\exp[-(\beta_{T_t}-\beta_{T_s})U(x)]\), with \(\beta_T=1/(RT)\).
OpenMM evaluates \(U\) \citep{eastman_openmm_2023}; held-out target-temperature MD supplies the physical reference.

\paragraph{Samplers.}
We use the CREPE molecular diffusion implementation \citep{he_crepe_2026}: an EDM-preconditioned equivariant network with five message-passing layers and width 256, with 200 Karras steps and \((\sigma_{\min},\sigma_{\max},\rho)=(0.001,10,7)\).
ALDP uses the \(800\,\mathrm K\) RNE checkpoint \citep{he_rne_2026}; A4 uses an EMA checkpoint at step 190,000, selected on source-temperature validation data after FEAT-style training \citep{du_feat_2025}; A6 uses the width-256 checkpoint supplied by the CREPE authors.

\paragraph{Methods.}
ENCORE compresses the high-noise zero-potential region while retaining native transitions, then increases the reward to the full endpoint tilt.
CREPE uses 51 levels, one local ULA update per level, reference refreshes and alternating exchanges. CREPE retains 49,000 outputs after 1,000 warm-up half-sweeps.
RNE and FKC-Score/Noise use 50 populations of 1,000 particles with resampling every four transitions \citep{he_rne_2026,skreta_feynmankac_2025}.
FKC-Score multiplies the score by \(T_s/T_t\); FKC-Noise uses score coefficient \((T_s/T_t+1)/2\) and noise multiplier \((T_s/T_t)^{-1/2}\).
FKD-Diff uses 32 particles, reward checks at steps \(124,132,\ldots,164,168\), and resampling below half-population ESS \citep{singhal_general_2025}. FKD and uncontrolled sampling each return 50,000 outputs.
For the separate acceptance/diversity comparison, ENCORE and CREPE use dense \((200,196,\ldots,0)\) and coarse-to-fine \((200,80,76,\ldots,0)\) ladders, one chain per seed and three seeds.
ENCORE uses \(L_n=K_n\) and an additional reward-only endpoint replica; both use 50,000 half-sweeps, including 1,000 warm-up.

\paragraph{Evaluation.}
Boltzmann-tempering errors use 25,000 target-temperature MD frames and three evaluations drawing 5,000 configurations without replacement from both outputs and reference. Thus their standard deviations describe subsampling of one sampler run.
Energy TVD uses 100-bin histograms of the negative regularised reduced potential; distance TVD pools unordered interatomic distances. Aligned \(\mathrm W_2\) uses centred, Kabsch-aligned coordinates in nm \citep{kabsch_solution_1976}; torsion MMD\(^2\) uses a Gaussian kernel on sine--cosine backbone torsions \citep{gretton_kernel_2012}.
A4/A6 ENCORE and FKD outputs are projected onto OpenMM bond constraints.
For acceptance/diversity, RMSD \(\mathrm W_1\) compares distributions from 100,000 backbone-heavy-atom pairs per sampler and MD reference, without energy filtering. TICA JS uses \(72\times72\) target-fitted histograms and natural logarithms; Energy TVD averages three 100-bin evaluations per run. ENCORE is evaluated after constraint projection and before terminal denoising; CREPE uses its level-0 state directly. Histogram metrics retain out-of-range mass; acceptance excludes warm-up.

\subsubsection{Additional Results}
\label{app:acceptance-diversity}
\Cref{tab:encore-boltzmann-results} reports ALDP/A4 tempering errors; A6 appears in \cref{tab:encore-a6-results}.
\Cref{tab:acceptance-alanine} gives the full dense and coarse-to-fine comparison, with means and standard deviations across three seeds.

\begin{table}[t]
  \centering
  \caption{ALDP and A4 Boltzmann-tempering accuracy with 5,000 samples per evaluation (mean \(\pm\) standard deviation over three subsampling evaluations from one sampler run).}
  \label{tab:encore-boltzmann-results}
  \begingroup
  \enctablesetup
  \enctablefit{\textwidth}{%
  \begin{tabular}{@{}clcccc@{}}
    \toprule
    System & Method
      & Energy TVD\(\downarrow\)
      & Distance TVD\(\downarrow\)
      & Aligned \(\mathrm W_2\downarrow\)
      & Torsion MMD\(^2\)\(\downarrow\) \\
    \midrule

    \multirow[c]{7}{50mm}{\encbtsystem{Alanine Dipeptide}{ALDP}{800}{300}{aldp}}
         & ENCORE
         & \encbtsecond{0.214133}{0.003974}
         & \encbtbest{0.015267}{0.000800}
         & \encbtbest{0.242800}{0.002474}
         & \encbtbest{0.003544}{0.000449} \\
         & CREPE
         & \encbtbest{0.200800}{0.001987}
         & \encbtvalue{0.019072}{0.000229}
         & \encbtsecond{0.256750}{0.000966}
         & \encbtvalue{0.016761}{0.001243} \\
         & RNE
         & \encbtvalue{0.363200}{0.006951}
         & \encbtsecond{0.017278}{0.000618}
         & \encbtvalue{0.277039}{0.001448}
         & \encbtsecond{0.004614}{0.000409} \\
         & FKC-Score
         & \encbtvalue{0.369400}{0.009162}
         & \encbtvalue{0.019014}{0.000660}
         & \encbtvalue{0.283154}{0.000472}
         & \encbtvalue{0.005906}{0.000489} \\
         & FKC-Noise
         & \encbtvalue{0.883867}{0.000618}
         & \encbtvalue{0.032452}{0.000130}
         & \encbtvalue{0.266461}{0.000797}
         & \encbtvalue{0.009596}{0.000391} \\
         & FKD
         & \encbtvalue{0.564467}{0.005548}
         & \encbtvalue{0.037170}{0.000859}
         & \encbtvalue{0.367174}{0.001173}
         & \encbtvalue{0.083950}{0.001917} \\
    \cmidrule(lr){2-6}
         & Uncontrolled
         & \encbtvalue{0.993200}{0.000589}
         & \encbtvalue{0.054273}{0.000369}
         & \encbtvalue{0.300370}{0.002062}
         & \encbtvalue{0.068217}{0.003234} \\

    \midrule

    \multirow[c]{7}{50mm}{\encbtsystem{Alanine Tetrapeptide}{A4}{800}{500}{a4}}
         & ENCORE
         & \encbtsecond{0.136267}{0.005705}
         & \encbtbest{0.008373}{0.000287}
         & \encbtbest{0.734337}{0.001632}
         & \encbtbest{0.002495}{0.000329} \\
         & CREPE
         & \encbtvalue{0.502267}{0.007611}
         & \encbtvalue{0.013637}{0.000373}
         & \encbtsecond{0.748194}{0.002856}
         & \encbtsecond{0.007295}{0.000278} \\
         & RNE
         & \encbtvalue{0.245867}{0.014537}
         & \encbtsecond{0.012596}{0.000361}
         & \encbtvalue{0.905042}{0.003959}
         & \encbtvalue{0.011949}{0.000194} \\
         & FKC-Score
         & \encbtbest{0.117467}{0.014313}
         & \encbtvalue{0.024978}{0.000127}
         & \encbtvalue{0.994609}{0.000994}
         & \encbtvalue{0.024405}{0.000216} \\
         & FKC-Noise
         & \encbtvalue{0.395667}{0.007876}
         & \encbtvalue{0.020019}{0.000058}
         & \encbtvalue{0.894074}{0.004875}
         & \encbtsecond{0.006671}{0.000294} \\
         & FKD
         & \encbtvalue{0.517467}{0.008069}
         & \encbtvalue{0.024310}{0.000309}
         & \encbtvalue{0.878605}{0.001473}
         & \encbtvalue{0.031488}{0.000573} \\
    \cmidrule(lr){2-6}
         & Uncontrolled
         & \encbtvalue{0.928400}{0.004121}
         & \encbtvalue{0.023367}{0.000523}
         & \encbtvalue{0.787948}{0.006282}
         & \encbtvalue{0.019895}{0.001646} \\
    \bottomrule
  \end{tabular}}
  \endgroup
\end{table}

\begin{table}[t]
  \centering
  \caption{Alanine conformational diversity and energy agreement under dense and coarse-to-fine replica ladders (mean \(\pm\) standard deviation over three seeds).}%
  \label{tab:acceptance-alanine}
    \begingroup\footnotesize
    \enctablesetup[2pt]
    \enctablefit{\textwidth}{%
    \begin{tabular}{@{}llcccccc@{}}
    \toprule
     & & \multicolumn{3}{c}{Dense} & \multicolumn{3}{c}{Coarse-to-fine} \\
    \cmidrule(lr){3-5}\cmidrule(lr){6-8}
    System & Method & RMSD $\mathrm W_1$ (nm)$\downarrow$ & TICA JS$\downarrow$ & Energy TVD$\downarrow$ & RMSD $\mathrm W_1$ (nm)$\downarrow$ & TICA JS$\downarrow$ & Energy TVD$\downarrow$ \\
    \midrule
    ALDP & ENCORE & \enctablebest{0.00366289737554}{0.000823291617654}{4}{4} & \encgmmsecond{0.204927395665}{0.00883067916251} & \encgmmbest{0.0965555555556}{0.0162907381094} & \enctablebest{0.00359804222631}{0.00149062258672}{4}{4} & \encgmmsecond{0.221773719197}{0.00685196825787} & \encgmmbest{0.0785111111111}{0.0120303936083} \\
     & CREPE & \enctablesecond{0.00911792383566}{0.0067569378126}{4}{4} & \encgmmbest{0.068268001474}{0.00942162855314} & \encgmmsecond{0.225444444444}{0.0174408630435} & \enctablesecond{0.0253662672225}{0.00860462394581}{4}{4} & \encgmmbest{0.134978495427}{0.0136287857657} & \encgmmsecond{0.402888888889}{0.366120868526} \\
    \addlinespace[2pt]
    A4 & ENCORE & \enctablesecond{0.00697275567678}{0.0019655700027}{4}{4} & \encgmmsecond{0.143536825638}{0.00342206488744} & \encgmmbest{0.0801555555556}{0.0113412390727} & \enctablebest{0.00651341162749}{0.00227946356516}{4}{4} & \encgmmbest{0.1395274552}{0.00500594285884} & \encgmmbest{0.0877333333333}{0.0121711863751} \\
     & CREPE & \enctablebest{0.00515305326518}{0.00269121629906}{4}{4} & \encgmmbest{0.0989598232525}{0.00975509674192} & \encgmmsecond{0.515644444444}{0.0150720246127} & \enctablesecond{0.0190034166793}{0.00507635022419}{4}{4} & \encgmmsecond{0.17408743025}{0.0727559278242} & \encgmmsecond{0.485955555556}{0.0339784463273} \\
    \addlinespace[2pt]
    A6 & ENCORE & \enctablesecond{0.00568029124641}{0.00366230182975}{4}{4} & \encgmmsecond{0.0657962695244}{0.00191712116972} & \encgmmbest{0.143511111111}{0.0122073444802} & \enctablebest{0.00631895627605}{0.00351321185839}{4}{4} & \encgmmbest{0.0633949249389}{0.00524464162593} & \encgmmbest{0.139422222222}{0.0161278686797} \\
     & CREPE & \enctablebest{0.00235704035524}{0.00209766598436}{4}{4} & \encgmmbest{0.0428929557791}{0.000583292323158} & \encgmmsecond{0.329133333333}{0.00656996533054} & \enctablesecond{0.0295947620469}{0.0157452795041}{4}{4} & \encgmmsecond{0.187931652361}{0.100888668111} & \encgmmsecond{0.291355555556}{0.0503828160006} \\
    \bottomrule
    \end{tabular}}
    \endgroup
\end{table}

\subsection{Text-to-Image Reward Tilting}
\label{app:t2i}

\paragraph{Target.}
For prompt \(c\), we target
\(q_0(\mathrm dx\mid c)\propto\exp\{100\,r_{\mathrm{IR}}(x,c)\}\,p_0(\mathrm dx\mid c)\),
where \(r_{\mathrm{IR}}\) is ImageReward \citep{xu_imagereward_2023}, following \citet{he_crepe_2026}.

\paragraph{Samplers.}
Four-step SDXL-Turbo uses Euler--Ancestral transitions at \(512\times512\) \citep{sauer_adversarial_2024}; four-step DMD2-UNet uses native LCM transitions at \(1024\times1024\) \citep{yin_improved_2024}.
Both use timesteps \((999,749,499,249)\), without classifier-free guidance or watermarking.

\paragraph{Methods.}
ENCORE uses five path levels with reward strengths \((0,25,50,75,100)\) and independent reference refreshes. With 16 or 8 chains it discards 64 or 128 iterations, then collects every 32 iterations to obtain 32 outputs.
CREPE uses the same chain counts and warm-up, collecting every 50 iterations; its SDXL-Turbo adaptation excludes the deterministic final transition from exchange. We do not evaluate it on DMD2, whose native LCM sampler lacks the paired forward kernel required by its correction.
FKD-Diff uses 2,048 particles with multinomial resampling after every transition and returns 32 particles \citep{singhal_general_2025}.
LiDAR uses a 2,048-sample bank for its score correction and generates 32 trajectories \citep{kim_lookahead_2026}. Both use reward strength 100.
Best-of-64 returns the highest-reward output from each of 32 independent groups of 64 draws; uncontrolled sampling returns 32 independent base outputs.

\paragraph{Evaluation.}
We use 32 prompts from the ImageReward test split, with 32 outputs per prompt, decoded in a common evaluation environment.
Metrics are ImageReward v1.0, HPSv2.1 (ViT-H/14), CLIP-T and CLIP diversity (ViT-L/14), DINO-Vendi (normalized DINOv2-Large embeddings), LPIPS (AlexNet), and the fraction of distinct decoded RGB images \citep{xu_imagereward_2023,wu_human_2023,radford_learning_2021,friedman_vendi_2023,oquab_dinov2_2024,zhang_unreasonable_2018}.
Reward, preference and alignment are averaged within each prompt; DINO-Vendi and uniqueness use the complete output set, and pairwise diversity averages all 496 unordered pairs. Tables report means and standard deviations across prompts.

\subsubsection{Additional Results}
\Cref{tab:encore-image-turbo,tab:encore-image-dmd2} report reward, preference, alignment and diversity for both samplers.

\begin{table}[t]
  \centering
  \caption{Reward, preference, prompt alignment, and diversity on
  SDXL-Turbo-4 (mean \(\pm\) standard deviation over 32 prompts).}
  \label{tab:encore-image-turbo}
  \begingroup
  \footnotesize
  \enctablesetup
  \enctablefit{\textwidth}{%
    \begin{tabular}{@{}l*{7}{c}@{}}
      \toprule
      & \multicolumn{1}{c}{\textit{Reward}}
      & \multicolumn{2}{c}{\textit{Preference / Alignment}}
      & \multicolumn{4}{c}{\textit{Diversity}} \\
      \cmidrule(lr){2-2}\cmidrule(lr){3-4}\cmidrule(lr){5-8}
      Method
        & IR \(\uparrow\)
        & HPS \(\uparrow\)
        & CLIP-T \(\uparrow\)
        & DINO-Vendi \(\uparrow\)
        & CLIP diversity \(\uparrow\)
        & LPIPS \(\uparrow\)
        & Unique (\%) \(\uparrow\) \\
      \midrule
      ENCORE (16 chains)
        & \encimgresult{1.7114234553}{0.2784384968}
        & \encimgsecond{0.3235161304}{0.0248921533}
        & \encimgsecond{0.2901362841}{0.0296879913}
        & \encimgresult{3.0201178041}{0.9521507647}
        & \encimgresult{0.1703995376}{0.0574651019}
        & \encimgresult{0.5770453503}{0.0534803142}
        & \encimgpercentsecond{84.08203125}{11.1226940672} \\
      ENCORE (8 chains)
        & \encimgbest{1.7452704262}{0.2577992609}
        & \encimgsecond{0.3242468834}{0.0245946101}
        & \encimgsecond{0.2898515749}{0.0289986420}
        & \encimgresult{2.7775099105}{0.7928292876}
        & \encimgresult{0.1701421910}{0.0665273727}
        & \encimgresult{0.5593948583}{0.0530738986}
        & \encimgpercent{64.94140625}{22.5868392511} \\
      CREPE (16 chains)
        & \encimgresult{1.7035764496}{0.2831972717}
        & \encimgresult{0.3200669289}{0.0254819250}
        & \encimgsecond{0.2903242631}{0.0291304360}
        & \encimgsecond{3.0946569343}{0.9597243263}
        & \encimgsecond{0.1825966169}{0.0647603038}
        & \encimgsecond{0.5834499075}{0.0525486416}
        & \encimgpercent{78.80859375}{10.4923638090} \\
      CREPE (8 chains)
        & \encimgsecond{1.7376574219}{0.2409553467}
        & \encimgresult{0.3197839260}{0.0257036330}
        & \encimgsecond{0.2897389025}{0.0292248505}
        & \encimgresult{2.8194768861}{0.8245221592}
        & \encimgresult{0.1758086520}{0.0635232680}
        & \encimgresult{0.5634750504}{0.0594485364}
        & \encimgpercent{62.01171875}{19.1178525785} \\
      FKD
        & \encimgresult{1.7034516587}{0.3254523194}
        & \encimgbest{0.3263484240}{0.0252718125}
        & \encimgbest{0.2911328385}{0.0315804489}
        & \encimgresult{1.7203517591}{0.6493074741}
        & \encimgresult{0.0813054877}{0.0633711139}
        & \encimgresult{0.2990188346}{0.2131767360}
        & \encimgpercent{44.921875}{30.0867152827} \\
      LiDAR
        & \encimgresult{1.1251515571}{0.5411863976}
        & \encimgresult{0.3076999187}{0.0244937128}
        & \encimgresult{0.2833418079}{0.0316117461}
        & \encimgbest{4.3902119329}{1.7832814607}
        & \encimgbest{0.2671480806}{0.0952369121}
        & \encimgbest{0.6137040161}{0.0472963350}
        & \encimgpercentbest{100.0}{0.0} \\
      \cmidrule(lr){1-8}
      Best-of-64
        & \encimgresult{1.6512273313}{0.3464447454}
        & \encimgresult{0.3226350546}{0.0242616262}
        & \encimgresult{0.2888637738}{0.0301455669}
        & \encimgresult{3.1844174730}{1.0612121693}
        & \encimgresult{0.1723026500}{0.0540068932}
        & \encimgresult{0.5866820938}{0.0504167384}
        & \encimgpercent{100.0}{0.0} \\
      Uncontrolled
        & \encimgresult{1.0483377455}{0.5491698221}
        & \encimgresult{0.3110949993}{0.0254519878}
        & \encimgresult{0.2845110282}{0.0299891905}
        & \encimgresult{3.9052523436}{1.6762374503}
        & \encimgresult{0.2178120973}{0.0853625546}
        & \encimgresult{0.6086764456}{0.0472007647}
        & \encimgpercent{100.0}{0.0} \\
      \bottomrule
    \end{tabular}}
  \endgroup
\end{table}

\begin{table}[t]
  \centering
  \caption{Reward, preference, prompt alignment, and diversity on
  DMD2-UNet-4 (mean \(\pm\) standard deviation over 32 prompts).}
  \label{tab:encore-image-dmd2}
  \begingroup
  \footnotesize
  \enctablesetup
  \enctablefit{\textwidth}{%
    \begin{tabular}{@{}l*{7}{c}@{}}
      \toprule
      & \multicolumn{1}{c}{\textit{Reward}}
      & \multicolumn{2}{c}{\textit{Preference / Alignment}}
      & \multicolumn{4}{c}{\textit{Diversity}} \\
      \cmidrule(lr){2-2}\cmidrule(lr){3-4}\cmidrule(lr){5-8}
      Method
        & IR \(\uparrow\)
        & HPS \(\uparrow\)
        & CLIP-T \(\uparrow\)
        & DINO-Vendi \(\uparrow\)
        & CLIP diversity \(\uparrow\)
        & LPIPS \(\uparrow\)
        & Unique (\%) \(\uparrow\) \\
      \midrule
      ENCORE (16 chains)
        & \encimgbest{1.6613961245}{0.3085505078}
        & \encimgbest{0.3241206408}{0.0253754925}
        & \encimgbest{0.2988004773}{0.0334627926}
        & \encimgbest{3.2759559096}{1.1355599756}
        & \encimgbest{0.1800746481}{0.0810995996}
        & \encimgbest{0.5952657047}{0.0671469488}
        & \encimgpercentbest{78.3203125}{10.7358280276} \\
      FKD
        & \encimgsecond{1.6585585496}{0.3476475703}
        & \encimgbest{0.3242847919}{0.0237642213}
        & \encimgsecond{0.2976931207}{0.0374670541}
        & \encimgsecond{1.4729391365}{0.6032042262}
        & \encimgsecond{0.0536976453}{0.0556878919}
        & \encimgsecond{0.1936422564}{0.1744311231}
        & \encimgpercentsecond{38.4765625}{31.5852731076} \\
      \cmidrule(lr){1-8}
      Best-of-64
        & \encimgresult{1.6089698287}{0.3661400537}
        & \encimgresult{0.3233835697}{0.0249755840}
        & \encimgresult{0.2981934625}{0.0335600334}
        & \encimgresult{3.6334785339}{1.4605234794}
        & \encimgresult{0.1799046218}{0.0878779970}
        & \encimgresult{0.6057399734}{0.0622595958}
        & \encimgpercent{100.0}{0.0} \\
      Uncontrolled
        & \encimgresult{0.9617134248}{0.6205643895}
        & \encimgresult{0.3137189150}{0.0284390264}
        & \encimgresult{0.2947623738}{0.0361685995}
        & \encimgresult{3.9762307894}{1.4719025801}
        & \encimgresult{0.1968198875}{0.0703228448}
        & \encimgresult{0.6187615420}{0.0682831274}
        & \encimgpercent{100.0}{0.0} \\
      \bottomrule
    \end{tabular}}
  \endgroup
\end{table}